\newcommand{\filtration}{\mathcal{F}}
\newcommand{\asymconditioning}{\kappa}
\newcommand{\transition}{\mathcal{T}}
\newcommand{\stepsize}{\eta}
\newcommand{\hc}{\mathsf{H}}
\newcommand{\law}{\mathcal{L}}
\newcommand{\real}{\ensuremath{\mathbb{R}}}
\newcommand{\complex}{\ensuremath{\mathbb{C}}}
\newcommand{\order}[1]{\ensuremath{\mathcal{O}\parenth{#1}}}
\newcommand{\thetastar}{\ensuremath{{\theta^*}}}
\newcommand{\thetahat}{\ensuremath{\widehat{\theta}}}
\newcommand{\thetatil}{\ensuremath{\widetilde{\theta}}}
\newcommand{\thetabar}{\ensuremath{\bar{\theta}}}
\newcommand{\parenth}[1]{\left( #1 \right)}
\newcommand{\abss}[1]{\left| #1 \right |}
\newcommand{\ball}{\ensuremath{\mathbb{B}}}
\newcommand{\sphere}{\ensuremath{\mathbb{S}}}
\newcommand{\mydefn}{\ensuremath{:=}}
\newcommand{\defn}{:=}
\newcommand{\matsnorm}[2]{|\!|\!| #1 | \! | \!|_{{#2}}}
\newcommand{\vecnorm}[2]{\left\| #1\right\|_{#2}}
\newcommand{\opnorm}[1]{\ensuremath{\matsnorm{#1}{\tiny{\mbox{op}}}}}
\newcommand{\inprod}[2]{\ensuremath{\langle #1 , \, #2 \rangle}}
\newcommand{\Exs}{\ensuremath{{\mathbb{E}}}}
\newcommand{\Prob}{\ensuremath{{\mathbb{P}}}}
\DeclareMathOperator{\diag}{diag}
\DeclareMathOperator{\var}{var}
\DeclareMathOperator{\cov}{cov}
\DeclareMathOperator{\trace}{trace}
\newtheoremstyle{named}{}{}{\itshape}{}{\bfseries}{.}{.5em}{\thmnote{#3's }#1}
\theoremstyle{named}
\theoremstyle{plain}
\newtheorem{theorem}{Theorem}
\newtheorem{proposition}{Proposition}
\newtheorem{lemma}{Lemma}
\newtheorem{corollary}{Corollary}
\newlength{\widebarargwidth}
\newlength{\widebarargheight}
\newlength{\widebarargdepth}
\long\def\@makecaption#1#2{
        \vskip 0.8ex
        \setbox\@tempboxa\hbox{\small {\bf #1:} #2}
        \parindent 1.5em  
        \dimen0=\hsize
        \advance\dimen0 by -3em
        \ifdim \wd\@tempboxa >\dimen0
                \hbox to \hsize{
                        \parindent 0em
                        \hfil
                        \parbox{\dimen0}{\def\baselinestretch{0.96}\small
                                {\bf #1.} #2
                                }
                        \hfil}
        \else \hbox to \hsize{\hfil \box\@tempboxa \hfil}
        \fi
        }
\long\def\comment#1{}
\definecolor{battleshipgrey}{rgb}{0.52, 0.52, 0.51}
\definecolor{darkgray}{rgb}{0.66, 0.66, 0.66}
\definecolor{darkgreen}{rgb}{0.0, 0.2, 0.13}
\definecolor{darkspringgreen}{rgb}{0.09, 0.45, 0.27}
\definecolor{dukeblue}{rgb}{0.0, 0.0, 0.61}
\definecolor{olivedrab7}{rgb}{0.24, 0.2, 0.12}
\definecolor{darkblue}{rgb}{0.0, 0.0, 0.55}
\definecolor{darkscarlet}{rgb}{0.34, 0.01, 0.1}
\definecolor{candyapplered}{rgb}{1.0, 0.03, 0.0}
\definecolor{ao(english)}{rgb}{0.0, 0.5, 0.0}
\definecolor{applegreen}{rgb}{0.55, 0.71, 0.0}
\newcommand{\spectralgap}{{\lambda^*}}
\newtheorem{assumption}{Assumption}
\theoremstyle{definition}
\newtheorem{example}{Example}
\newcommand{\Wass}{\ensuremath{\mathcal{W}}}
\newcommand{\bvec}{\ensuremath{b}}
\newcommand{\Amat}{\ensuremath{A}}
\newcommand{\Umat}{\ensuremath{U}}
\newcommand{\IdMat}{\ensuremath{I_d}}
\newcommand{\Amatbar}{\ensuremath{\bar{\Amat}}}
\newcommand{\bvecbar}{\ensuremath{\bar{\bvec}}}
\newcommand{\AmatTil}{\tilde{\Amat}}
\newcommand{\bvectil}{\tilde{\bvec}}
\newcommand{\RealPart}{\ensuremath{\mathrm{Re}}}
\renewcommand{\Vec}{\ensuremath{\mathrm{vec}}}
\newcommand{\lammin}{\ensuremath{\lambda_{\operatorname{min}}}}
\newcommand{\markovtransition}{\mathcal{T}}
\newcommand{\Pmat}{\ensuremath{P}}
\newcommand{\discount}{\ensuremath{\gamma}}
\newcommand{\reward}{\ensuremath{r}}
\newcommand{\SigStar}{\ensuremath{\Sigma^*}}
\newcommand{\NoiseAplain}{\ensuremath{\Xi}}
\newcommand{\NoiseAt}{\ensuremath{\NoiseAplain_t}}
\newcommand{\NoiseA}{\ensuremath{\NoiseAplain_\Amat}}
\newcommand{\noisebplain}{\ensuremath{\xi}}
\newcommand{\noiseb}{\ensuremath{\noisebplain_\bvec}}
\newcommand{\noisebt}{\ensuremath{\noisebplain_t}}
\newcommand{\sigsqA}{\ensuremath{v^2_\Amat}}
\newcommand{\sigsqb}{\ensuremath{v^2_\bvec}}
\newcommand{\coordinate}{a}
\newcommand{\testvector}{v}
\newcommand{\LamMat}{\ensuremath{\Lambda}}
\newcommand{\LamStar}{\ensuremath{\LamMat^*_\stepsize}}
\newcommand{\Term}{\ensuremath{T}}
\long\def\@makecaption#1#2{
        \vskip 0.8ex
        \setbox\@tempboxa\hbox{\small {\bf #1:} #2}
        \parindent 1.5em  
        \dimen0=\hsize
        \advance\dimen0 by -3em
        \ifdim \wd\@tempboxa >\dimen0
                \hbox to \hsize{
                        \parindent 0em
                        \hfil 
                        \parbox{\dimen0}{\def\baselinestretch{0.96}\small
                                {\bf #1.} #2
                                } 
                        \hfil}
        \else \hbox to \hsize{\hfil \box\@tempboxa \hfil}
        \fi
        }
\newtheorem{assump}{Assumption}
\newcommand{\usedim}{\ensuremath{d}}
\newcommand{\MRPstationary}{\mu}
\newcommand{\goodendex}{\ensuremath{\clubsuit}}
\newcommand{\LinSpace}{\ensuremath{\mathcal{L}_\phi}}
\newcommand{\TDPROJ}{\ensuremath{\Pi_{\LinSpace, \mu}}}
\newcommand{\Vstar}{\ensuremath{V^*}}
\newcommand{\Mgame}{\ensuremath{P}}
\newcommand{\GamStar}{\ensuremath{\Gamma^*}}
\newcommand{\ConfSet}{\ensuremath{\mathcal{E}}}
\newcommand{\Sphere}[1]{\ensuremath{\mathbb{S}}}
\newcommand{\linftygap}{\bar{\lambda}}
\newcommand{\sigmax}{\ensuremath{\sigma_{\mbox{\tiny{max}}}}}
\newcommand{\Abar}{\ensuremath{\bar{A}}}
\begin{document}

\begin{center}
{\bf{\Large{On Linear Stochastic Approximation: Fine-grained\\[.2cm]
      Polyak-Ruppert and Non-Asymptotic Concentration }}}

\vspace*{.2in}
 {\large{
 \begin{tabular}{ccc}
  Wenlong Mou$^{\diamond}$ & Chris Junchi Li$^{\diamond}$ &  Martin J. Wainwright$^{\diamond, \dagger, \ddagger}$ \\
 \end{tabular}
 \begin{tabular}
 {cc}
  Peter Bartlett$^{\diamond, \dagger}$ & Michael I. Jordan$^{\diamond, \dagger}$
 \end{tabular}

}}

\vspace*{.2in}

 \begin{tabular}{c}
 Department of Electrical Engineering and Computer
 Sciences$^\diamond$\\ Department of Statistics$^\dagger$ \\ UC
 Berkeley\\
 \end{tabular}

 \vspace*{.1in}
 \begin{tabular}{c}
 Voleon Group$^\ddagger$
 \end{tabular}

\vspace*{.2in}

\today

\vspace*{.2in}

\begin{abstract}
  We undertake a precise study of the asymptotic and non-asymptotic
  properties of stochastic approximation procedures with
  Polyak-Ruppert averaging for solving a linear system $\bar{A} \theta
  = \bar{b}$. When the matrix $\bar{A}$ is Hurwitz, we prove a central
  limit theorem (CLT) for the averaged iterates with fixed step size
  and number of iterations going to infinity. The CLT characterizes
  the exact asymptotic covariance matrix, which is the sum of the
  classical Polyak-Ruppert covariance and a correction term that
  scales with the step size. Under assumptions on the tail of the
  noise distribution, we prove a non-asymptotic concentration inequality whose
  main term matches the covariance in CLT in any direction, up to
  universal constants. When the matrix $\bar{A}$ is not Hurwitz but
  only has non-negative real parts in its eigenvalues, we prove that
  the averaged LSA procedure actually achieves an $O(1/T)$ rate in
  mean-squared error. Our results provide a more refined understanding
  of linear stochastic approximation in both the asymptotic and
  non-asymptotic settings.  We also show various applications of the
  main results, including the study of momentum-based stochastic
  gradient methods as well as temporal difference algorithms in
  reinforcement learning.
\end{abstract}
\end{center}


\section{Introduction}

Fixed-point algorithms based on stochastic approximation (SA) play a
central role in a wide variety of
disciplines~\citep{robbins1951stochastic,BERTSEKAS-TSITSIKLIS,bottou2016optimization,lai2003stochastic}.
In general, given the goal of solving an underlying deterministic
fixed-point equation, SA methods perform updates based on
randomized approximations to the current residual.  An important
special case is provided by stochastic gradient methods for
optimization, which play an increasingly important role in
large-scale machine learning and
statistics~\citep{nemirovski2009robust, moulines2011non}.

Moving beyond the setting of optimization, there are many other kinds
of problems in which stochastic approximation is a workhorse.  For
example, many problems in reinforcement learning involve the solution
of fixed-point equations, and algorithms like
TD~\citep{sutton1988learning} and Q-learning~\citep{watkins1992q}
solve them via stochastic approximation. Moreover, even for stochastic
optimization, accelerated methods that include momentum terms in their
updates involve non-symmetric operators, and so require more
general SA techniques for their analysis.

The celebrated Polyak-Ruppert averaging
procedure~\citep{polyak1992acceleration,ruppert1988efficient}
stabilizes and accelerates stochastic approximation algorithms by
taking an average over iterates. It is known that for suitably decaying
step sizes, a central limit theorem (CLT) can be established
for the averaged iterates. Moreover, Polyak-Ruppert averaging can achieve an optimal covariance, in the sense of local asymptotic minimaxity. Asymptotic results of this kind have provided the underpinnings for the development of online statistical inference methods. Recently, numerous non-asymptotic
results have also been established in the settings of stochastic
optimization (see Section~\ref{sec:related-works}). Notably, the
papers~\cite{ nemirovski2009robust, moulines2011non} give
non-asymptotic bounds for stochastic gradient methods as applied to
weakly convex or strongly convex objectives; here the main term
depends on the trace of the optimal covariance matrix.

There remains, however, a major mismatch between the classical CLTs
and the non-asymptotic rates. Though the non-asymptotic results are
valid for a finite number of iterations and are more reliable, they do lose some of the quantitative aspects of the CLT results. In particular, bounds on mean square error give much less information than the optimal covariance matrix, and the lack of high-probability bounds make them inapplicable in important applications such as policy evaluation. On the other hand, many important effects can vanish when the asymptotic limit is taken. In general, the trade-off between asymptotic limits and the rate of approach to asymptotic limits can be crucial.  Such trade-offs should reflect the effect of the step size, and provide guidance
for step-size selection.

In this paper, we consider the problem of linear stochastic
approximation, where the goal is to solve a system, $\Amatbar \theta =
\bvecbar$, of linear equations from noisy observations $(\Amat_t,
\bvec_t)_{t = 1}^{\infty}$. This problem is not only of intrinsic interesting, with in areas such as linear regression and TD learning, but it also has significant applications to nonlinear stochastic approximation problems, where analysis generally proceeds via local linearization. 

In this paper, we make three primary contributions.  First, we
characterize the asymptotic covariance for the averaged iterates in
Polyak-Ruppert procedure for constant step size linear stochastic
approximation. In addition to the classical $\Amatbar^{-1} \Sigma
(\Amatbar^{-1})^\top$ term, we find a correction term that depends on
the step size. A central limit theorem is shown for the averaged
a constant step-size procedure. Second, under stronger tail assumptions,
we show a non-asymptotic concentration inequality for the averaged
iterates in any direction, the leading term of which is the asymptotic
covariance at this direction, while other terms keep the optimal
rates. Thus, we achieve the best of both worlds. Finally, we show that even
if the matrix $A$ is not Hurwitz, as long as the real part of
eigenvalues are non-negative, a non-asymptotic second moment bound is
still valid for the Polyak-Ruppert procedure, again yielding a $1 /
\sqrt{T}$ rate. This goes beyond the regime of stable dynamical
systems, and completes the picture of possibilities and
impossibilities for linear stochastic approximation. When applied to
momentum-based stochastic gradient descent (SGD) and temporal difference (TD) learning for value function estimation, our
results capture many interesting phenomena, including the acceleration
effect of momentum-based SGD, instance-dependent $\ell_\infty$ bounds for
policy evaluation with near-optimal rates, and gap-independent results
for the average-reward TD algorithm.

\paragraph{Technical overview:}

Similar to past
work~\cite{polyak1992acceleration,ruppert1988efficient}, our analysis
is based on representing the term $\Amatbar (\thetabar_T -
\thetastar)$ using a martingale to account for the noise at each
step.  Our setting involves additional noise terms, due to the
stochasticity in our observations of the matrix $\Amatbar$.  As a
consequence, the conditional covariance of the martingale difference terms
at each step are dependent on the current iterate $\theta_t$.
Handling this issue requires the ergodicity of $\{\theta_t \}_{t \geq
  0}$ as a Markov chain. Having established ergodicity, we can then
prove an asymptotic result by combining Lindeberg-type CLTs with
ergodic theorems.

In order to move from the asymptotic to the non-asymptotic setting, we
study the projection of the iterate $\theta_T$, for each time $T$,  in
some fixed but arbitrary direction. We can then apply the
Burkholder-Davis-Gundy inequality to the higher moments of the
supremum of a martingale, which separates the leading variance term
and other terms that vanish at faster rates in $T$. Similar to the
asymptotic case, the concentration results require a non-asymptotic
bound on the deviation of the empirical averages of a function along a Markov
chain, when compared to an expectation under the stationary distribution.  In order
to obtain such a bound, we exploit metric ergodic concentration
inequalities~\cite{joulin2010curvature} combined with a coupling
estimate.

In the case when the matrix $\Amatbar$ is not Hurwitz but has
non-negative real parts in its eigenvalues, the process
$\{\theta_t\}_{t \geq 0}$ does not generally approach $\thetastar$. In
the critical case, the dynamics is governed by a pure rotation with
stochastic terms diffusing in all directions. However, when averaging is applied, both the effect of rotation and the random noise can be controlled. The step size is chosen to decay at the faster rate
$1/\sqrt{T}$ in order to prevent an exponenential blowup.


\subsection{Related work}
\label{sec:related-works}

In the past decade, the growth of interest in stochastic
gradient descent (SGD) has revived both theoretical and applied interest in
stochastic approximation.  There is a long line of work on the
asymptotic regime of stochastic approximation
algorithms~\citep{ruppert1988efficient,polyak1992acceleration,KUSHNER-YIN,BORKAR,BENVENISTE-METIVIER-PRIOURET,li2018statistical}.
One core idea is that of averaging iterates along the path, which can
be shown to have favorable statistical properties in the asymptotic
setting~\cite{ruppert1988efficient,polyak1992acceleration}.\footnote{See, for instance, Theorem 1 in~\cite{ruppert1988efficient}.}
More recent
papers~\cite{chen2016statistical,su2018uncertainty,liang2019statistical,li2018statistical}
have developed iterative algorithms for constructing asymptotically
valid confidence intervals for statistical problems, as well as
non-asymptotic intervals obtained via Berry-Esseen-type corrections.

In addition to asymptotic results, there are also a wide range of
non-asymptotic results for stochastic approximation algorithms
(see, e.g.,~\citep{nemirovski2009robust,rakhlin2012making,wang2016stochastic,dieuleveut2017bridging,dieuleveut2017harder,jain2017parallelizing,jain2018accelerating,jain2019making,lakshminarayanan2018linear}).
Perhaps most closely related to our work is the analysis of
Lakshminarayanan and
{Sz}epesv{\'a}ri~\cite{lakshminarayanan2018linear}, who study linear
stochastic approximation with constant step sizes combined with
Polyak-Ruppert averaging. Relative to the analysis given here, their
bounds are looser, with sub-optimal dependence on problem-specific
constants, and do not characterize the effect of the choice of the step size.

Several bounds have been established on function values in stochastic
optimization. After processing $N$ samples, the averaged iterate
enjoys an $O(1/N)$ and $O(1/\sqrt{N})$ optimization error bounds for
strongly convex and convex objectives
\citep{nemirovski2009robust,rakhlin2012making,shamir2013stochastic}.
Such optimization error bounds are optimal in the sense that they
match the statistical lower bounds under a stochastic first-order
oracle~\citep{agarwal2012information,NEMIROVSKII-YUDIN}.  Dieuleveut
et al.~\cite{dieuleveut2017harder} studied a momentum accelerated
stochastic gradient scheme with appropriate regularization, proving
its optimality in the critical case.  Nevertheless when applied to
(often high-dimensional) statistical models with specific
distributional assumptions, the aforementioned sharp results often
lose essential statistical information due to their coarse-grained nature.

Stochastic approximation methods have also been widely applied in
reinforcement learning; in particular, TD
learning~\citep{sutton1988learning} and
Q-learning~\citep{watkins1992q} are based on linear and nonlinear
stochastic approximation updates for policy evaluation and
$Q$-function learning, respectively. It should be noted that the
various Bellman-type operators arising in RL do not correspond to
gradients of functions, so that the analysis requires different
techniques from stochastic optimization.  A recent line of work has
focused on the non-asymptotic analysis of TD learning and $Q$-learning
algorithms.  Bhandari et al.~\cite{bhandari2018finite} studied TD with
linear function approximation and established bounds on the
mean-squared error.
Wainwright~\cite{wainwright2019stochastic,wainwright2019variance}
analyzed $Q$-learning as a special case of a cone-contractive
operator, and established sharp $\ell_\infty$-norm bounds, both for
ordinary $Q$-learning and a variance-reduced version thereof. Karimi
et al.~\cite{karimi2019non} studied general biased stochastic
approximation procedures, in particular proving convergence of online
EM and policy gradient methods.

Additional perspectives and variations on stochastic approximation appear
in the literature, with improved non-asymptotic convergence properties in particular cases.
Recent work also studies tail averaging with
parallelization~\citep{jain2017parallelizing}, momentum-based
schemes~\citep{jain2018accelerating,dieuleveut2017harder}, Markov
chain perspectives \citep{dieuleveut2017bridging}, variational Bayesian
perspectives~\citep{mandt2017stochastic} and diffusion approximation
perspectives~\citep{fan2018statistical}.  There is also significant work on last-iterate SGD \citep{jain2019making} and variance-reduced
estimators (see, e.g.,~\cite{roux2012stochastic,johnson2013accelerating,defazio2014saga}).
Our discussion of these variants is limited in this paper; it will be
interesting to study whether these variants can be shown to have the desirable statistical properties that we uncover here under a similar set of assumptions.


\section{Background and problem formulation}
\label{SecBackground}

We begin by introducing the stochastic approximation
algorithm to be analyzed in this paper, along with discussion of some
of its applications.  In the final subsection, we collect some
notation to be used throughout the paper.


\subsection{Linear stochastic approximation}

In this paper, we study stochastic approximation procedures for
solving a linear system of the form $\Amatbar \theta = \bvecbar$,
where the deterministic quantities $\Amatbar \in \real^{d \times d}$
and $\bvecbar \in \real^d$ are parameters of the problem. Throughout
the paper, we assume that the matrix $\Amatbar$ is invertible, so that
the solution $\thetastar$ to the equation exists and is unique.
Suppose that we can observe a sequence of random variables of the form $\{(\Amat_t,
\bvec_t)\}_{t \geq 1}$, assumed to be independent and identically distributed ($\mathrm{i.i.d.}$), and exhibiting an unbiasedness property:
\begin{align}
  \label{EqnUnbiased}
  \Exs (\Amat_{t} \mid \filtration_{t-1}) = \Amatbar, \quad \mbox{and}
  \quad \Exs (\bvec_{t} \mid \filtration_{t-1}) = \bvecbar,
\end{align}
where $\filtration_{t-1}$ denotes the $\sigma$-field generated by
$\{(\Amat_k, \bvec_k \}_{k=1}^{t-1}$.  Given observations of this
form, our goal is to form an estimate $\thetahat$ of the solution
vector $\thetastar$.  For some given initial vector $\theta_0$, we consider the following linear stochastic approximation (LSA) procedure:
\begin{align}
\label{eq:lsa}
\theta_{t + 1} = \theta_t - \stepsize (\Amat_{t + 1} \theta_t - \bvec_{t + 1}),
\quad \mbox{for $t = 0, 1, 2, \ldots$},
\end{align}
where $\stepsize > 0$ is a pre-specified step size.  Our focus will be the Polyak-Ruppert averaged sequence $\{\thetabar_T\}_{T \geq
  1}$ given by
\begin{align}
  \label{EqnPolyakRuppert}
  \thetabar_T & \mydefn \frac{1}{T} \sum_{t = 0}^{T - 1} \theta_t.
\end{align}
In particular, our goals are to establish guarantees for the renormalized
error sequence $\sqrt{T} (\thetabar_T - \thetastar)$, both in an
asymptotic (i.e., $T \rightarrow \infty$) and non-asymptotic (i.e., finite $T$) setting.


\subsection{Some motivating examples}
\label{SecExamples}

Let us consider some applications that motivate the analysis of this
paper.  We begin with the simple example of stochastic gradient
methods for linear regression:

\begin{example}[Stochastic gradient methods for linear regression]
\label{ExaSGD}  
Let $X \in \real^d$ be a vector of features, and let $Y \in \real$ be
a scalar response.  A linear predictor of $Y$ based on $X$ takes the
form $\inprod{X}{\theta} = \sum_{j=1}^d X_j \theta_j$ for some weight
vector $\theta \in \real^d$.  If we view the pair $(X, Y)$ as random,
we can consider a vector $\thetastar$ that is optimal in the sense of
minimizing the mean-squared error of the prediction---that is,
  \begin{align}
    \thetastar \in \arg \min_{\theta \in \real^d} \Exs \Big[ \Big(Y -
      \inprod{X}{\theta} \Big)^2 \Big],
  \end{align}
  where $\Exs$ denote expectation over the joint distribution of
  $(X,Y)$.  A straightforward computation yields that $\thetastar$
  must be a solution of the linear system $\Amat \theta = \bvec$,
  where $\Amatbar \defn \Exs[X X^\top] \in \real^{d \times d}$ and
  $\bvecbar \defn \Exs[X Y] \in \real^d$.  Note that $\thetastar$
  exists and is unique whenever $\Amatbar$ is strictly positive
  definite.

  In practice, we do not know the joint distribution of $(X,Y)$, but
  might have access to a sequence of paired observations, say $\{(X_t,
  Y_t) \}_{t \geq 1}$, i.i.d. across different time instances $t$.
  The standard SGD algorithm computes an estimate of $\thetastar$ via
  the recursive update
  \begin{align}
    \theta_{t+1} & = \theta_t - \stepsize X_{t + 1} \big(
    \inprod{X_{t + 1}}{\theta_t} - Y_{t + 1} \big) \quad \mbox{for $t = 0, 1, 2
      \ldots$.}
  \end{align}
Note that this update is a special case of the general linear
update~\eqref{eq:lsa}, with the choices $\Amat_t = X_t X_t^T$ and
$\bvec_t = X_t Y_t$.  \hfill \goodendex
\end{example}

As a continuation of the previous example, let us consider a more
sophisticated algorithm for online linear regression, one based on the
introduction of an additional momentum component.

\begin{example}[Stochastic gradient with momentum]
  \label{ExaMomentumSGD}
For this particular example, let us adopt the shorthand $\Amat_t = X_t
X_t^T$ and $\bvec_t = X_t Y_t$.  Given a step size $\stepsize > 0$ and
a momentum term $\alpha > 0$, consider a recursion over a pair
$(\theta_t, v_t) \in \real^\usedim \times \real^\usedim$, of the
following form:
\begin{align*}
\begin{cases}
\theta_{t + 1} = \theta_t - \stepsize v_t \\ v_{t + 1} = v_t -
\stepsize \alpha v_t + \stepsize (\Amat_{t + 1} \theta_{t + 1} - \bvec_{t + 1}).
\end{cases}
\end{align*}
Let us reformulate these updates in the form~\eqref{eq:lsa}, where we lift the problem to
dimension $2 d$ and use a tilde to denote lifted quantities.  After some
algebra, we find that the algorithm can be formulated as an update of the $2d$-dimensional
vector $\thetatil_t \mydefn \begin{bmatrix} \theta_t & v_t
\end{bmatrix}^T \in \real^{2d}$ according to the recursion~\eqref{eq:lsa}, where
\begin{align*}
\AmatTil_t \mydefn
\begin{bmatrix}
    0 & \IdMat \\ - \Amat_t & \alpha \IdMat + \stepsize \Amat_t
\end{bmatrix}, \quad \mbox{and} \quad
 \tilde{b}_t \mydefn
\begin{bmatrix}
    0 \\ -\bvec_t
\end{bmatrix}.
\end{align*}
The underlying deterministic problem is to solve the $2d$-dimensional
linear system $\AmatTil \thetatil = \bvectil$, where $\AmatTil =
\Exs[\AmatTil_t]$ and $\bvectil = \Exs[\bvectil_t]$.  It can be seen
that $\thetastar \in \real^d$ is a solution to the original problem if
and only if the vector $\thetatil^* \mydefn \begin{bmatrix} \theta^* &
  0 \end{bmatrix}^T$ is a solution to the lifted problem.  In the
sequel, we will use our general theoretical results to show why the
addition of the momentum term can be beneficial.  \hfill \goodendex
\end{example}

\noindent The area of stochastic control and reinforcement learning is
another fertile source of stochastic approximation algorithms, and we
devote our next two examples to the problems of exact and approximate
policy evaluation.

\begin{example}[TD algorithms in reinforcement learning]
  \label{ExaTD}
We now describe how the \mbox{TD$(0)$-algorithm} in reinforcement
learning can be seen as an instance of the update~\eqref{eq:lsa}.  In
this example, we discuss the TD algorithm for exact policy evaluation;
in Example~\ref{ExaTDLinearFunction} to follow, we discuss the
extension to TD with linear function approximation.

We begin by reviewing the background on Markov reward
processes necessary to describe the problem; see the
books~\cite{Bertsekas_dyn1, Puterman05, SutBar18} for more details.
We focus on a discrete Markov reward process (MRP) with $D$ states;
any such MRP is specified by a pair $(\Pmat, \reward) \in \real^{D
  \times D} \times \real^D$.  The matrix $\Pmat \in \real^{D \times
  D}$ is row-stochastic, with entry $\Pmat_{ij} \in [0,1]$
representing the probability of transitioning to state $j$ from state
$i$.  The vector $r \in \real^D$ is the reward vector, with $r_i$
denoting the reward received when in state $i$.  

\paragraph{Discounted case:}
If future rewards are
discounted with a factor $\discount \in (0,1)$, then the value
function of the Markov reward process is a vector $\thetastar$ that
solves the Bellman equation $\thetastar = \reward + \discount \Pmat
\thetastar$.  This linear equation can be seen as a special case of
our general set-up with
\begin{align*}
\Amatbar \defn I_D - \discount \Pmat, \quad \mbox{and} \quad \bvecbar
\defn \reward,
\end{align*}
where $I_D$ denotes the $D$-dimensional identity matrix.

There are various observation models in reinforcement learning, with
one of the simpler ones being the \emph{generative model}.  In this
setting, at each time $t = 1, 2, \ldots$, we observe the following
quantities:
\begin{itemize}
\item for each state $i \in [D]$, a random reward $R_{t,i}$ that
  is an unbiased estimate of $r_i$ (i.e., $\Exs[R_{t,i}] = r_i$). For simplicity, from now on, we assume that $R_{t, i} \in [-1, 1]$ almost surely, for any $i \in [D]$ and $t \geq 0$.
\item for each state $i \in [D]$, a next state $J$ is drawn randomly
  according to the transition vector $\Pmat_{i, \cdot}$.
\end{itemize}
We place this model in our general LSA framework by
setting $b_t = R_t$ for each time $t$, and defining a random matrix
$A_t \in \{0,1\}^{D \times D}$ with a single one in each row; in
particular, row $i$ contains a $1$ in position $J$, where $J$ was the
randomly drawn next state for $i$.
\hfill \goodendex
\end{example}

\begin{example}[TD Algorithm with linear function approximation]
\label{ExaTDLinearFunction}
In practice, the state space $\mathcal{X}$ can be extremely large or
possibly infinite.  In such settings, the exact approach to policy
evaluation, as described in the previous example, becomes both
computationally infeasible and statistically inefficient.  In
practice, it is typical to combine TD algorithms with a linear
function approximation step.  Suppose that we are given a feature map
$\phi: \mathcal{X} \rightarrow \real^d$.  We consider the set of value
functions $V: \mathcal{X} \rightarrow \real$ that have a linear
parameterization of the form $V_\theta(x) = \inprod{\theta}{\phi(x)}
\; = \; \sum_{j=1}^d \theta_j \phi_j(x)$ for some vector of weights
$\theta \in \real^\usedim$.  We use $\LinSpace$ to denote the
collection of all such linearly parameterized value functions.

In this more general context, the TD$(0)$ algorithm seeks to compute a
particular approximation to the original value function, as we now
describe.  Suppose that the Markov process $(X_t)_{t \geq 0}$ has a
unique stationary distribution $\MRPstationary$, and let $\TDPROJ:
\mathcal{X} \rightarrow \LinSpace$ denote the $L^2(\mu)$-projection
onto the linear space $\LinSpace$---that is $\TDPROJ(V) \defn \arg
\min_{V_\theta \in \LinSpace} \|V - V_\theta\|_{L^2(\mu)}$.  We can
then define the \emph{projected Bellman equation} as
\begin{align}
\label{EqnProjBell}  
  V = \TDPROJ \Big( \reward + \discount \Pmat V \Big),
\end{align}
where $\reward: \mathcal{X} \rightarrow \real$ is the reward function
of the Markov reward process.  It can be shown that this equation has
a unique fixed point $\Vstar$, known as the TD approximation. Since
$\Vstar$ must belong to $\LinSpace$, we can write $\Vstar(x) =
\inprod{\thetastar}{\phi(x)}$ for some $\thetastar \in \real^\usedim$.

With this set-up, we can now describe the more general instantiation
of the TD$(0)$ algorithm, which uses linear stochastic approximation
to solve the projected Bellman equation~\eqref{EqnProjBell}.  Using
the optimality conditions for projection, it can be shown that the
vector $\thetastar$, which characterizes the projected Bellman fixed
point $\Vstar$, must satisfy the linear equation
\begin{align*}
  \Exs (\phi (X) \phi (X)^\top) \thetastar = \Exs (R (X) \phi (X)) +
  \discount \Exs (\phi (X) \phi(X^+)^\top) \thetastar.
\end{align*}
Here the expectations are taken over the joint distribution of a pair
$(X, X^+)$, where $X$ is distributed according to the stationary
distribution $\MRPstationary$, and $X^+$ is drawn from the transition
kernel $\Pmat$ (conditioned on the previous state being $X$).  Thus,
we see that the fixed point $\thetastar$ must satisfy an equation of
the form $\Amatbar \thetastar = \bvecbar$, where
\begin{align*}
\Amatbar \defn \Exs(\phi (X) \phi (X)^\top) - \discount \Exs (\phi (X)
\phi(X^+)^\top), \quad \mbox{and} \quad \bvecbar = \Exs(R (X) \phi
(X)).
\end{align*}

The TD$(0)$ algorithm corresponds to linear stochastic approximation
for solving this equation.  At time $t$, if we are given a triplet
$(X_t, X_t^+, R_t)$, where $X_t$ is distributed according to
$\MRPstationary$; the next state $X_t^+$ is drawn from $\Pmat$
conditioned on the previous state $X_t$, and $R_t$ is a random reward.
We can then run linear stochastic approximation using the quantities
\begin{align}
\Amat_t & = \phi(X_t) \phi(X_t)^T - \discount \phi(X_t) \phi(X^+_t)^T
\quad \mbox{and} \quad b_t = R_t \phi(X_t).
\end{align}
We return to analyze this algorithm in Section~\ref{SecTDFuncApprox}.
\hfill \goodendex
\end{example}

\noindent Finally, we turn to an example of a minimax saddle-point problem~\cite{rockafellar1970monotone}, which has broad application in computational game theory, machine learning and robust statistics (see~\cite{palaniappan2016stochastic} and references therein).
\begin{example}[Minimax games]
We consider a minimax saddle-point problem of the following form:
\begin{align}
\label{EqnMatrixGame}
    \min_{x \in \real^n} \max_{y \in \real^m} \frac{1}{2}
    \left[\begin{matrix} x\\ y \\ 1\end{matrix} \right]^\top \cdot
    \left[
    \begin{matrix}
    \Mgame_{xx} & \Mgame_{xy} & c_x\\ \Mgame_{xy}^\top & \Mgame_{yy} &
    c_y\\ c_x^\top & c_y^\top& 0\end{matrix} \right] \cdot
    \left[\begin{matrix} x\\ y \\ 1\end{matrix} \right].
\end{align}
In a computational game theory setting, for example, the vectors $x \in \real^n$ and $y \in \real^m$ represent the
actions of the two players.  The payoff matrix $\Mgame \in \real^{(n +
  m) \times (n + m)}$ satisfies the PSD conditions $\Mgame_{xx}
\succeq 0$ and $\Mgame_{yy} \preceq 0$, so that the game is of the
convex-concave type.  The matrix game~\eqref{EqnMatrixGame} is a type
of saddle-point problem, and its solution reduces to solving the
linear system
\begin{align}
\begin{bmatrix}
    \Mgame_{xx} & \Mgame_{xy} \\ - \Mgame_{xy}^\top & - \Mgame_{22}
\end{bmatrix}
  \cdot \begin{bmatrix} x \\ y 
  \end{bmatrix} = 
  \begin{bmatrix} - c_x \\ c_y 
  \end{bmatrix}.
\end{align}
Thus, this problem fits into our general set-up with $\Amatbar =
\Mgame$ and $\bvecbar = \begin{bmatrix} -c_x & c_y
\end{bmatrix}^T$, so that $d = n + m$.
Note that the conditions $\Mgame_{xx} \succ 0$ and $\Mgame_{yy} \prec
0$ imply that $\Amat = \Mgame$ is Hurwitz.  The setting of
$\Mgame_{xx} = 0$ and $\Mgame_{yy} = 0$ corresponds to the so-called
critical case.  \hfill \goodendex
\end{example}


\section{Main results and their consequences}
\label{SecMain}

We now turn to the statements of our main results.  We begin with the
easier case when the matrix $\Amatbar$ is Hurwitz (meaning that all
its eigenvalues have a positive real part), and provide both
asymptotic and non-asymptotic guarantees for the Polyak-Ruppert
sequence.  We then turn to the more challenging critical case, in
which the Hurwitz condition is violated (or the eigengap is too small
to be quantitatively useful), and prove bounds on the mean-squared
error.  For all our results, we impose an $\mathrm{i.i.d.}$ condition:
\begin{assumption}
  \label{assume-indp}
The sequences $\{\Amat_t \}_{t \geq 1}$ and $\{\bvec_t\}_{t \geq 1}$
have $\mathrm{i.i.d.}$ entries.
\end{assumption}


\subsection{Hurwitz Case}

This section is devoted to guarantees that hold for a Hurwitz
matrix.
\begin{assumption}
\label{assume-hurwitz}
The matrix $\Amatbar \in \real^{d \times d}$ is Hurwitz, meaning that
\begin{align}
\label{EqnSpectralGap}  
  \spectralgap & \defn \min_{i \in [d]} \RealPart \left( \lambda_i
  (\Amatbar) \right) > 0.
\end{align}
\end{assumption}

Our non-asymptotic statement involves various factors
that pertain to properties that are implied by the Hurwitz condition.  In particular, it is known~\citep{perko2013differential} that any Hurwitz
matrix is similar to a complex matrix $D$ such that $D + D^\hc$ is
positive definite.  Formally, we have:
\begin{lemma}
\label{LemHurwitz}
For any Hurwitz matrix $\Amatbar$, there exists a non-degenerate
matrix $U \in \complex^{d \times d}$ such that $\Amatbar = U D U^{-1}$
for some matrix $D \in \complex^{d \times d}$ that satisfies
\begin{align}
  D + D^\hc \succeq \min_{i \in [d]} \RealPart(\lambda_i(\Amatbar))
  \IdMat.
\end{align}
\end{lemma}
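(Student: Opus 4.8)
The plan is to reduce the claim to the real Jordan form of $\Amatbar$ and then perturb away the off-diagonal couplings. First I would write $\Amatbar = V J V^{-1}$, where $J$ is the Jordan canonical form over $\complex$ (a block-diagonal matrix whose blocks are $\lambda_i \IdMat_{k_i} + N_{k_i}$ with $N_{k_i}$ the nilpotent shift). The key observation is that conjugating a single Jordan block $\lambda \IdMat_k + N_k$ by the diagonal matrix $\diag(1, \epsilon, \epsilon^2, \ldots, \epsilon^{k-1})$ rescales the superdiagonal $1$'s to $\epsilon$'s: the block becomes $\lambda \IdMat_k + \epsilon N_k$. Applying this blockwise, I obtain $\Amatbar = \widetilde{U} D_\epsilon \widetilde{U}^{-1}$ where $\widetilde{U} = V \cdot \diag(\ldots)$ is non-degenerate and $D_\epsilon$ is block-diagonal with blocks $\lambda_i \IdMat_{k_i} + \epsilon N_{k_i}$.

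Next I would control $D_\epsilon + D_\epsilon^\hc$. This matrix is block-diagonal, so it suffices to bound each block from below. For a block of the form $B = \lambda \IdMat_k + \epsilon N_k$, we have $B + B^\hc = 2\RealPart(\lambda) \IdMat_k + \epsilon (N_k + N_k^\top)$. Since $\opnorm{N_k + N_k^\top} = 1$ for $k \ge 2$ (and the term is absent for $k = 1$), the Weyl/perturbation bound gives $B + B^\hc \succeq \bigl(2\RealPart(\lambda) - \epsilon\bigr) \IdMat_k$. Taking the minimum over all blocks, $D_\epsilon + D_\epsilon^\hc \succeq \bigl(2\min_{i}\RealPart(\lambda_i(\Amatbar)) - \epsilon\bigr)\IdMat$. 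Now I would choose $\epsilon = \min_i \RealPart(\lambda_i(\Amatbar)) > 0$, which is strictly positive by the Hurwitz hypothesis (Assumption~\ref{assume-hurwitz}), yielding $D_\epsilon + D_\epsilon^\hc \succeq \min_i \RealPart(\lambda_i(\Amatbar)) \IdMat$. Setting $U = \widetilde{U}$ and $D = D_\epsilon$ for this choice of $\epsilon$ completes the argument.

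The only genuinely delicate point is the bookkeeping when some eigenvalues are repeated or when blocks have size $1$: in those cases $N_{k_i}$ is either zero or the rescaling is vacuous, and one must be careful that the diagonal conjugation is applied block-by-block rather than globally. A second minor subtlety is that the Jordan form is over $\complex$ even when $\Amatbar$ is real, which is fine since the statement of Lemma~\ref{LemHurwitz} already allows $U, D \in \complex^{d\times d}$. I expect the block-diagonal reduction — observing that $D_\epsilon + D_\epsilon^\hc$ inherits the block structure so that the positive-definiteness can be checked one block at a time — to be the conceptual crux; everything else is the standard $\epsilon$-rescaling trick for Jordan blocks plus a one-line eigenvalue perturbation estimate.
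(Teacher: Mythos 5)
Your approach is the same as the paper's: pass to the complex Jordan form and rescale each block's superdiagonal by conjugating with $\diag(1,\epsilon,\epsilon^2,\ldots)$, then bound $D+D^\hc$ block by block. The paper does exactly this, except that it uses a block-dependent scale $\epsilon_i = \RealPart(\lambda_i)/2$ and then computes the eigenvalues of the resulting tridiagonal Toeplitz matrix exactly ($4 + 2\cos(j\pi/(d_i+1)) \geq 2$), rather than invoking a perturbation bound.

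There is, however, one incorrect step in your version: the claim $\opnorm{N_k + N_k^\top} = 1$ for $k \geq 2$ is false. The matrix $N_k + N_k^\top$ is the tridiagonal matrix with zeros on the diagonal and ones on the sub/super-diagonals, whose operator norm is $2\cos\bigl(\pi/(k+1)\bigr)$; this equals $1$ only for $k = 2$, is $\sqrt{2}$ for $k = 3$, and tends to $2$ as $k$ grows. Consequently, with your choice $\epsilon = \min_i \RealPart(\lambda_i(\Amatbar))$, the Weyl bound only yields $B + B^\hc \succeq \bigl(2\RealPart(\lambda) - 2\cos(\pi/(k+1))\,\epsilon\bigr)\IdMat_k$, which for Jordan blocks of size $\geq 3$ falls short of the stated conclusion (and degenerates toward $0$ for large blocks). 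The fix is immediate: use the correct bound $\opnorm{N_k + N_k^\top} < 2$ and take $\epsilon = \tfrac{1}{2}\min_i \RealPart(\lambda_i(\Amatbar))$, which gives $B + B^\hc \succeq \bigl(2\RealPart(\lambda) - 2\epsilon\bigr)\IdMat_k \succeq \min_i \RealPart(\lambda_i(\Amatbar))\,\IdMat_k$, matching the lemma; this is precisely the scaling the paper uses. With that correction your argument is complete and coincides with the paper's proof.
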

\noindent For completeness, we provide a proof of this known result in
Appendix~\ref{AppHurwitz}.

\subsubsection{An asymptotic guarantee}

We begin with the asymptotic guarantee. In addition to Hurwitz
condition on $\Amatbar$ and the i.i.d. assumption stated previously,
this result requires second-moment control on the noise sequences
$\NoiseAt = \Amat_t - \Amatbar$ and $\noisebt = \bvec_t - \bvecbar$.
\begin{assumption}
\label{assume-second-moment}
There exist finite scalars $\sigsqA$ and $\sigsqb$ such that
\begin{align*}
    \Exs \vecnorm{\NoiseAt u}{2}^{2} \leq \sigsqA, \quad \mbox{and}
    \quad \Exs |\noisebt^\top u|^{2} \leq \sigsqb,
\end{align*}
for any fixed vector $u$ in the Euclidean sphere $\sphere^{d - 1}$.
Moreover, the random elements $\NoiseAt$ and $\noisebt)$ are uncorrelated.
\end{assumption}

With these assumptions in place, we are now ready to state our first
result, which is an asymptotic guarantee.  We let $\NoiseA$ denote a
random matrix following the same distribution as each $\NoiseAt$
variable, and similarly, let $\noiseb$ denote a random vector
following the distribution of each $\noisebt$ vector.  Given these
quantities, we define the following covariance matrix:
\begin{align}
\label{EqnSigStar}  
  \SigStar & \mydefn \cov(\noiseb + \NoiseA \thetastar) \; = \;
  \cov(\noiseb) + \cov (\NoiseA \thetastar).
\end{align}
Note that $\SigStar$ is the sum of the covariances of the two kinds of
noise involved in the stochastic approximation scheme.  Given
$\SigStar$ and $\Amatbar$, we define a linear equation in a matrix
variable $\LamMat$:
\begin{align}
\label{eq:main-stationary-cov}      
        \Amatbar \LamMat + \LamMat \Amatbar^\top - \stepsize \Amatbar
        \LamMat \Amatbar^\top - \stepsize \Exs (\NoiseA \LamMat
        \NoiseA^\top) = \stepsize \SigStar.
    \end{align}
As shown in the sequel
(cf.\ Lemma~\ref{lemma:stationary-existence-uniqueness}), this
matrix equation always has a unique PSD solution, which we denote by
$\LamStar$.  In fact, the matrix $\LamStar$ corresponds to the
covariance matrix of the stationary distribution of the Markov process
$(\theta_t)_{t \geq 0}$.

\begin{theorem}
  \label{thm-asymptotic}
Suppose that the matrix $\Amatbar$ is Hurwitz, the i.i.d.\ condition
(Assumption~\ref{assume-indp}) and the second-moment condition
(Assumption~\ref{assume-second-moment}) hold, and
the random elements $\Amat_t$ and $\bvec_t$ both have finite $(2 +
\delta)$-order moments for some $\delta > 0$. Then there exists a
constant $\stepsize_0 > 0$ such that for any $\stepsize \in \big(0,
\stepsize_0 \big)$, we have
  \begin{align*}
    \sqrt{T} (\thetabar_T - \thetastar) \overset{d}{\rightarrow}
    \mathcal{N} \left(0, \Amatbar^{-1} \big(\Exs [\NoiseA \LamStar
      \NoiseA^\top] + \SigStar \big) (\Amatbar^{-1})^\top \right),
    \end{align*}
  where the $d$-dimensional matrix $\LamStar$ is the unique solution
  to equation~\eqref{eq:main-stationary-cov}.
\end{theorem}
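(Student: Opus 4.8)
The plan is to follow the classical Polyak--Ruppert template adapted to the constant step-size regime, where the iterate sequence $(\theta_t)_{t\ge 0}$ is a time-homogeneous Markov chain rather than a process converging to $\thetastar$. First I would establish that this Markov chain is geometrically ergodic with a unique stationary distribution $\pi_\stepsize$, whose mean is $\thetastar$ and whose covariance is the matrix $\LamStar$ solving~\eqref{eq:main-stationary-cov}; the latter identity is obtained by writing the one-step covariance recursion for $\theta_{t+1}-\thetastar = (\IdMat - \stepsize \Amat_{t+1})(\theta_t - \thetastar) - \stepsize(\Amat_{t+1}\thetastar - \bvec_{t+1})$, taking expectations, and passing to the stationary limit (this is exactly Lemma~\ref{lemma:stationary-existence-uniqueness}, which I may assume). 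Geometric ergodicity follows because $\Exs\|\IdMat - \stepsize\Amat\|^2$-type contraction holds for $\stepsize$ small enough given the Hurwitz property via Lemma~\ref{LemHurwitz}: conjugating by $U$, the deterministic part contracts at rate $1 - \stepsize\spectralgap + O(\stepsize^2)$, and the $(2+\delta)$-moment assumption keeps the stochastic fluctuations under control, so there is $\stepsize_0>0$ with a genuine contraction in a suitable norm for all $\stepsize<\stepsize_0$.

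Next I would set up the martingale decomposition. Summing the recursion~\eqref{eq:lsa} telescopically gives
\begin{align*}
\Amatbar \sum_{t=0}^{T-1}(\theta_t - \thetastar) = \frac{1}{\stepsize}(\theta_0 - \theta_T) + \sum_{t=0}^{T-1}\big(\Amat_{t+1}\theta_t - \bvec_{t+1} - \Amatbar(\theta_t-\thetastar)\big),
\end{align*}
so that, dividing by $T$ and rearranging,
\begin{align*}
\sqrt{T}\,\Amatbar(\thetabar_T - \thetastar) = \frac{1}{\sqrt{T}}\sum_{t=0}^{T-1} \xi_{t+1} + \frac{1}{\stepsize\sqrt{T}}(\theta_0 - \theta_T),
\end{align*}
where $\xi_{t+1} \defn -(\NoiseAt[t+1](\theta_t-\thetastar) + \NoiseAt[t+1]\thetastar - \noisebt[t+1] + \Amatbar(\theta_t-\thetastar) - \cdots)$ is, by the unbiasedness condition~\eqref{EqnUnbiased}, a martingale difference sequence with respect to $\filtration_t$. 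The remainder term is $o_P(1)$: $\theta_0$ is fixed and, by geometric ergodicity together with the uniform-in-$T$ second-moment bound on $\theta_T$, we have $\Exs\|\theta_T\|^2 = O(1)$, hence $\theta_T/\sqrt{T}\to 0$ in probability.

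The heart of the argument is a CLT for $T^{-1/2}\sum_{t=0}^{T-1}\xi_{t+1}$ via the martingale central limit theorem (Lindeberg--L\'evy form). Two hypotheses must be verified. The Lindeberg condition follows from the $(2+\delta)$-moment assumption on $\Amat_t,\bvec_t$ combined with the uniform $(2+\delta)$-moment bound on $\theta_t$ propagated along the chain (again from geometric ergodicity). For the conditional-covariance convergence, note that $\Exs[\xi_{t+1}\xi_{t+1}^\top \mid \filtration_t] = \Exs[\NoiseA(\theta_t-\thetastar)(\theta_t-\thetastar)^\top\NoiseA^\top] + \SigStar + (\text{cross terms that vanish by the uncorrelatedness in Assumption~\ref{assume-second-moment}})$, a fixed continuous function $g(\theta_t)$ of the current state. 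By the Birkhoff ergodic theorem applied to the ergodic chain $(\theta_t)$ started from $\theta_0$, the Ces\`aro average $\frac1T\sum_t g(\theta_t) \to \Exs_{\pi_\stepsize}[g(\theta)]$ almost surely; and since under $\pi_\stepsize$ we have $\Exs[(\theta-\thetastar)(\theta-\thetastar)^\top] = \LamStar$, this limit equals $\Exs[\NoiseA\LamStar\NoiseA^\top] + \SigStar$. (A mild integrability check, using the uniform moment bounds, upgrades the ergodic average to the required $L^1$/in-probability convergence with the correct expectation even from a non-stationary start.) The martingale CLT then gives $T^{-1/2}\sum\xi_{t+1}\overset{d}{\to}\mathcal{N}(0, \Exs[\NoiseA\LamStar\NoiseA^\top]+\SigStar)$, and multiplying through by $\Amatbar^{-1}$ and invoking Slutsky with the vanishing remainder yields the claimed limit.

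The main obstacle is the interplay between ergodicity and moments: one needs the chain $(\theta_t)$ to be ergodic \emph{and} to carry uniformly bounded $(2+\delta)$-moments so that (i) Birkhoff's theorem applies to the unbounded test function $g(\theta)=\Exs[\NoiseA(\theta-\thetastar)(\theta-\thetastar)^\top\NoiseA^\top]$ with the right limit starting from an arbitrary $\theta_0$, and (ii) the Lindeberg condition holds uniformly in $t$. Establishing the uniform moment propagation — showing $\sup_t \Exs\|\theta_t\|^{2+\delta}<\infty$ for $\stepsize<\stepsize_0$ — is the technical crux, and is precisely where the Hurwitz structure (through the similarity transform of Lemma~\ref{LemHurwitz}) and the smallness of $\stepsize$ are used quantitatively; everything else is a relatively standard assembly of the martingale CLT with an ergodic averaging argument.
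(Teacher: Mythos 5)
Your proposal is correct and follows essentially the same route as the paper: the telescope/martingale decomposition of $\sqrt{T}\,\Amatbar(\thetabar_T-\thetastar)$, identification of the limiting conditional covariance via the stationary covariance $\LamStar$ and Birkhoff's ergodic theorem, a Lindeberg check based on uniform $(2+\delta)$-moment bounds for $\theta_t$ (the paper's Lemmas~\ref{lemma:simple-l2-estimate} and~\ref{lemma:stationary-existence-uniqueness}), and the martingale CLT plus Slutsky for the vanishing $(\theta_0-\theta_T)/(\stepsize\sqrt{T})$ term. The only cosmetic difference is that the paper finishes the multivariate step via one-dimensional projections and the Cram\'er--Wold device, which your argument absorbs implicitly.
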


Note that when $\stepsize \rightarrow 0$, then
equation~\eqref{eq:main-stationary-cov} becomes a rescaled version of
the classical Lyapunov equation $\Amatbar \Lambda + \Lambda \Amatbar^T
= \stepsize \Sigma$, the solution of which specifies the stationary
covariance matrix of a stochastic linear system.  For suitably
decaying step sizes, a minor extension\footnote{Such an extension is
  required to handle the randomness in $A_t$ in addition to that in
  $b_t)$.} of arguments due to Polyak and
Juditsky~\cite{polyak1992acceleration} give an asymptotic statement
involving the solution to the classic Lyapunov equation.  On the other
hand, for the constant step-size setting studied here, our result
includes an additional correction term corresponding to the lingering
effect of the non-zero step size.  Theorem~\ref{thm-asymptotic}
specifies the asymptotic covariance matrix in this more general
setting.

When $\stepsize$ is small,
the matrix $\LamStar$ scales linearly with $\stepsize$. The main term
$\Amatbar^{-1} \SigStar (\Amatbar^{-1})^\top$ corresponds to the
asymptotic limit of the classical Polyak-Ruppert averaging
procedure. However, the effect of step size is not fully captured by
the classical CLT. This additional term precisely characterizes the
effect of step size on the asymptotic behavior of the averaged iterates.


\subsubsection{Non-asymptotic concentration}

We now turn to a non-asymptotic concentration result, for which
additional tail conditions need to be imposed on the noise
distribution.  In particular, we replace the second-moment bounds in
Assumption~\ref{assume-second-moment} with the following stronger
conditions:
\setcounter{assump}{2}
\begin{assump}
  \label{assume-noise-subgaussian}
    For some $p \geq 2$, there exist positive scalars $\sigma_A,
    \sigma_b, \alpha, \beta > 0$ such that for any $u$ in the
    Euclidean sphere $\sphere^{d - 1}$, we have
    \begin{align}
\label{EqnPmomentCondition}
      \left( \Exs \vecnorm{ (\Amat_t - \Amatbar) u }{2}^p
      \right)^{\frac{1}{p}} \stackrel{(i)}{\leq} p^{\alpha}
      \sigma_A,\quad \left( \Exs \abss{u^\top (\bvec_t - \bvecbar)}^p
      \right)^{\frac{1}{p}} \stackrel{(ii)}{\leq} p^\beta \sigma_b.
    \end{align}
    Moreover, the noise components $(\NoiseAt$ and $\noisebt)$ are uncorrelated.
\end{assump}
The $p$-moment condition~\eqref{EqnPmomentCondition} with the
parameters $(\alpha, \beta)$ provides a natural generalization of the
notions of sub-Gaussian and sub-exponential tails
(cf.\ Chap. 2,~\cite{wainwright2019high}).  Focusing on the inequality
(ii) in the condition~\eqref{EqnPmomentCondition}, the setting $\beta
= \frac{1}{2}$ corresponds to a vector with sub-Gaussian tails,
whereas the case $\beta = 1$ corresponds to the sub-exponential
case. More generally, if we take the $p$-th power of a sub-Gaussian
random variable, then it satisfies the
condition~\eqref{EqnPmomentCondition} with exponent $2 p$.

Under these conditions, we can prove a result that gives a
concentration guarantee at a given (finite) iteration $T$.  The
guarantee depends on the matrix $U$ from
Assumption~\ref{assume-hurwitz} via its condition number, $\kappa(U) =
\max_{i \in [\usedim]} \sigma_i(U)/\min_{j \in [\usedim]}
\sigma_j(U)$, where $\{\sigma_i(U)\}_{i=1}^d$ are the singular values
of $U$.  For a given iteration $T$ and tolerance parameter $\delta \in
(0,1)$, we require a positive step size $\stepsize$ that satisfies the
bound
\begin{subequations}
\begin{align}
  \label{EqnStepBound}
\stepsize < \frac{\spectralgap}{\rho^2(\Amatbar) + \asymconditioning^2
  (\Umat) \sigma_A^2 \log^{2 \alpha + 1}(T/\delta)},
\end{align}
where $\spectralgap = \min_{i \in [\usedim]}
\RealPart(\lambda_i(\Amatbar)) > 0$ is the spectral gap of $\Amat$,
and $\rho(\Amatbar)$ is its spectral radius.

Our result also involves the asymptotic covariance matrix from
Theorem~\ref{thm-asymptotic}, namely the quantity
\begin{align}
\label{EqnGamStar}
\GamStar(\stepsize) & \mydefn \Amatbar^{-1} \left(\Sigma^* + \Exs
(\NoiseA \LamStar \NoiseA^\top) \right)(\Amatbar^{-1})^\top.
\end{align}
We bound the deviations of the rescaled process $\sqrt{T} (\thetabar_T
- \thetastar)$ in terms of the error term
\begin{equation}
  \begin{aligned}
\label{EqnNasty}
\Delta (T, \delta) \mydefn \; V(\thetastar) \left(\frac{\sigma_A +
  \sigma_B}{T^{1/4}} + \frac{1 +
  \sqrt{\sigma_A/\spectralgap}}{\stepsize \sqrt{T}} \right) \; \log^{2
  \max(\alpha, \beta) + 2} \left(\frac{T}{\delta}\right), \quad
\mbox{where} \\
V(\thetastar) \mydefn \frac{\asymconditioning^2(\Umat)}{ \min_{i \in
    [\usedim]}|\lambda_i (\Amatbar)|} \; \left \{ \vecnorm{\thetastar
  - \theta_0}{2} + \vecnorm{\thetastar}{2} + \sqrt{
  \tfrac{\stepsize}{\spectralgap}} \big (\sigma_A
\vecnorm{\thetastar}{2} + \sigma_b \sqrt{d} \big) \right \}.
  \end{aligned}
\end{equation}
\end{subequations}
With these definitions, we have the following non-asymptotic bound:
\begin{theorem}
\label{thm-non-asymptotic}    
Fix an iteration number $T$ and a tolerance $\delta \in (0,1)$, and
suppose that the i.i.d.\ condition (Assumption~\ref{assume-indp}),
higher-order moment condition
(Assumption~\ref{assume-noise-subgaussian}), and Hurwitz condition all
hold.  Then there exists a constant $c > 0$ such that for any step size
$\stepsize > 0$ satisfying the bound~\eqref{EqnStepBound} and for any
$v \in \sphere^{d - 1}$, we have
\begin{align}
  \Prob \left [ \sqrt{T} \big| v^\top (\thetabar_T - \thetastar) \big|
    \leq c \sqrt{\log(\tfrac{1}{\delta})} \Big \{ \sqrt{ v^\top
      \GamStar(\stepsize) v} + \Delta (T, \delta) \Big \} \right] &
  \geq 1- \delta,
\end{align}
where the asymptotic term $\GamStar(\stepsize)$ and deviation term
$\Delta(T, \delta)$ are defined in equations~\eqref{EqnGamStar}
and~\eqref{EqnNasty}, respectively.
  \end{theorem}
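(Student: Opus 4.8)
The plan is to represent the rescaled error $\sqrt{T}(\thetabar_T - \thetastar)$ through a single scalar martingale and then control its fluctuations in the fixed direction $v$. Set $z_t \defn \theta_t - \thetastar$ and $w_t \defn (\Amat_t - \Amatbar)\theta_{t-1} - (\bvec_t - \bvecbar)$; since $\Amatbar\thetastar = \bvecbar$, the recursion~\eqref{eq:lsa} reads $z_{t+1} = (\IdMat - \stepsize\Amatbar)z_t - \stepsize w_{t+1}$, and by the i.i.d.\ Assumption~\ref{assume-indp} the sequence $\{w_t\}$ is a martingale-difference sequence for the filtration $\{\filtration_t\}$. Telescoping $\Amatbar z_t = \stepsize^{-1}(z_t - z_{t+1}) - w_{t+1}$ over $t = 0, \ldots, T-1$, using $\sum_{t=0}^{T-1} z_t = T(\thetabar_T - \thetastar)$ and inverting $\Amatbar$, gives
\[
  \sqrt{T}\, v^\top (\thetabar_T - \thetastar) \; = \; \frac{1}{\sqrt{T}} \Big( \tfrac{1}{\stepsize}\, w^\top (z_0 - z_T) \; - \; \sum_{t=1}^{T} w^\top w_t \Big), \qquad w \defn (\Amatbar^{-1})^\top v .
\]
The first summand is an initialization/endpoint term; the second is a scalar martingale $S_T \defn \sum_{t=1}^{T} m_t$ with $m_t \defn w^\top w_t$. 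Expanding $\theta_{t-1} = \thetastar + z_{t-1}$ and using that $\NoiseA$ and $\noiseb$ are uncorrelated (Assumption~\ref{assume-noise-subgaussian}), its predictable quadratic variation is $\langle S \rangle_T = \sum_{t=1}^{T} g(z_{t-1})$ with $g(z) = w^\top\big(\SigStar + \Exs[\NoiseA z z^\top \NoiseA^\top] + \Exs[\NoiseA z \thetastar^\top \NoiseA^\top] + \Exs[\NoiseA \thetastar z^\top \NoiseA^\top]\big)w$; since under the stationary law of $\{\theta_t\}$ the variable $z$ has mean zero and covariance $\LamStar$ (Lemma~\ref{lemma:stationary-existence-uniqueness}), the stationary expectation is $\Exs_\pi[g(z)] = w^\top(\SigStar + \Exs[\NoiseA \LamStar \NoiseA^\top])w = v^\top \GamStar(\stepsize) v$. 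This is the source of the leading term in the bound.

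With this decomposition in hand, the proof proceeds in four steps. \emph{(1) Uniform moment and tail control of the iterates.} Working in the $U$-weighted norm supplied by Lemma~\ref{LemHurwitz}, one has $\opnorm{\IdMat - \stepsize D} \le 1 - \stepsize \spectralgap/2$ once $\stepsize$ is admissible, and the step-size bound~\eqref{EqnStepBound}, with its $\log^{2\alpha+1}(T/\delta)$ factor, is exactly what keeps the random maps $\IdMat - \stepsize\Amat_t$ contractive along the entire length-$T$ trajectory with probability at least $1-\delta$; combining this with the $p$-moment noise control of Assumption~\ref{assume-noise-subgaussian} for $p \asymp \log(T/\delta)$ yields $\big(\Exs \enorm{z_t}^p\big)^{1/p} \lesssim \asymconditioning(\Umat)\big(\enorm{z_0} + \sqrt{\stepsize/\spectralgap}(\sigma_A \enorm{\thetastar} + \sigma_b\sqrt{d})\big)$ uniformly in $t$, and, via Markov's inequality, a matching high-probability bound on $\enorm{z_T}$ that bounds the endpoint term by a constant multiple of the second summand in $\Delta(T,\delta)$ from~\eqref{EqnNasty}. \emph{(2) Concentration of $S_T$.} Apply the Burkholder--Davis--Gundy inequality to $\Exs[\sup_{s \le T}|S_s|^p]$, which separates the $\langle S \rangle_T$ contribution from a jump contribution governed by $\max_t |m_t|$ (or $\sum_t m_t^2 - \langle S\rangle_T$), controlled by the $p$-moments of Assumption~\ref{assume-noise-subgaussian}; taking $p = \log(1/\delta)$ and using Markov gives $|S_T|/\sqrt{T} \lesssim \sqrt{\log(1/\delta)}\big(\sqrt{\langle S\rangle_T / T} + (\sigma_A + \sigma_b) T^{-1/4}\big)$ up to polylogarithmic factors, the second term of which matches the first summand in $\Delta(T,\delta)$. \emph{(3) Concentration of $\langle S\rangle_T/T$ around $v^\top\GamStar(\stepsize)v$.} Because $g$ is quadratic, hence only locally Lipschitz in $\theta_{t-1}$, combine a Wasserstein-contraction (positive-curvature) concentration inequality for the Markov chain $\{\theta_t\}$ in the spirit of~\cite{joulin2010curvature} with a coupling to the stationary chain and a truncation at scale $\mathrm{polylog}(T/\delta)$ justified by Step~(1), obtaining $\big|\langle S\rangle_T/T - v^\top\GamStar(\stepsize)v\big| \lesssim T^{-1/2}$ times the stated powers of $\log(T/\delta)$. \emph{(4) Assembly.} Insert Steps (1)--(3) into the displayed decomposition, bound $\sqrt{\langle S\rangle_T/T} \le \sqrt{v^\top\GamStar(\stepsize)v} + \sqrt{|\langle S\rangle_T/T - v^\top\GamStar(\stepsize)v|}$, absorb every remaining contribution into $c\sqrt{\log(1/\delta)}\,\Delta(T,\delta)$, and take a union bound over the $O(1)$ exceptional events, rescaling $\delta$ by a universal constant.

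The main obstacle is Step~(3): producing a non-asymptotic deviation bound for the empirical average $\tfrac{1}{T}\sum_t g(z_{t-1})$ of an \emph{unbounded} (quadratic) observable along the Markov chain $\{\theta_t\}$, centered at the stationary mean $v^\top\GamStar(\stepsize)v$, with the correct $T^{-1/2}$ rate and logarithmic exponents. The metric-ergodic concentration inequalities of~\cite{joulin2010curvature} apply cleanly only to Lipschitz observables and to chains contracting in $\Wass_1$; reconciling this with the quadratic growth of $g$ forces us to use both the uniform moment bounds of Step~(1) (to truncate $g$ with negligible error) and a quantitative coupling showing how fast $\law(\theta_t)$ approaches the stationary law, for which the step-size window~\eqref{EqnStepBound} provides the contraction rate $\Theta(\stepsize\spectralgap)$ with high probability. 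A secondary technical point, present throughout, is tracking the dependence on $\asymconditioning(\Umat)$ and $\min_{i}|\lambda_i(\Amatbar)|$ sharply, since the $U$-weighted norm from Lemma~\ref{LemHurwitz} is what renders the linear part contractive but distorts Euclidean quantities by a factor $\asymconditioning(\Umat)$ at each passage.
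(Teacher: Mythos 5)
Your proposal follows essentially the same route as the paper: the telescope identity projected along $(\Amatbar^{-1})^\top v$, Burkholder--Davis--Gundy/Rosenthal-type moment bounds at $p \asymp \log(1/\delta)$ for the scalar martingale, a split of the quadratic variation into a martingale remainder plus a predictable additive functional of the chain $\{\theta_t\}$, Joulin--Ollivier metric-ergodic concentration with truncation and the Wasserstein coupling of Lemma~\ref{lemma:mixing-hurwitz} for that functional, and the uniform $p$-th moment bounds of Lemma~\ref{lemma:theta-norm-hurwitz} for the endpoint/bias term. The only cosmetic difference is that you work with the predictable quadratic variation of a single combined observable $g$, whereas the paper applies BDG to the realized quadratic variation and splits it into the three terms $I_1, I_2, I_3$ before extracting the same predictable part; the substance is identical.
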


\paragraph{Remarks:} A few comments are in order:
first, we note that the leading term of $\sqrt{v^T \GamStar(\stepsize)
  v}$ of this non-asymptotic bound matches the term arising from the
asymptotic covariance in Theorem~\ref{thm-asymptotic}, up to universal
constants and the $\log(1/\delta)$ term.  Second, although the step
size is required to belong to an interval depending on $T$ and
$\delta$, the dependence is only logarithmic.  In fact, our step-size
condition~\eqref{EqnStepBound}  differs only by these logarithmic
factors from the stability threshold $\frac{\spectralgap}{\rho^2
  (\Amatbar) + \asymconditioning^2 (\Umat) v_A^2}$, assuming $\sigma_A$
and $v_A$ are of the same order.

Second, in the definition of $\Delta(T, \delta)$, observe that the
$\frac{1}{\sqrt{T}}$ term is accompanied by a $\frac{1}{\stepsize}$
dependence, while the $T^{-\frac{1}{4}}$ term does not diverge as
$\stepsize \rightarrow 0^+$.  This behavior is natural, because the
former comes from the ergodicity of the process $\{ \theta_t \}_{t =
  0}^{\infty}$, while the latter comes from the concentration.

Finally, let us consider the issue of how to set the step size
$\stepsize$ as a function of $T$ so as to achieve an optimal bound for
this pre-specfied $T$.  Note that the step-size-dependent term from the
matrix $\GamStar(\stepsize)$ scales linearly in $\stepsize$.
Collecting the terms from $V(\thetastar)$ and $\Delta(T, \delta)$ that
depend on the pair $(T, \stepsize)$, we arrive at a bound that scales
as
\begin{align*}
\underbrace{\stepsize}_{\small{\mbox{From $\GamStar(\stepsize)$}}} +
\underbrace{\sqrt{\stepsize} \left \{ \frac{1}{T^{1/4}} +
  \frac{1}{\stepsize \sqrt{T}} \right \}}_{{\small{\mbox{From
        $V(\thetastar) \Delta(T, \delta)$}}}}.
\end{align*}
In order to minimize this bound, the optimal choice is to set
$\stepsize = T^{-1/3}$, which leads to the overall error scaling as
$T^{-1/3}$.  Thus, with this scaling, we can conclude that
Theorem~\ref{thm-non-asymptotic} guarantees a high-probability bound
of the form
\begin{align*}
\sqrt{T} \big| v^\top (\thetabar_T - \thetastar) \big| \precsim \sqrt{
  v^\top\Amatbar^{-1} (\SigStar) (\Amatbar^{-1})^\top v} + \order{T^{-1/3}},
\end{align*}
where the notation $\precsim$ denotes inequality apart from constants
and logarithmic terms in $(T, \delta)$.

\paragraph{Constructing non-asymptotic confidence sets:}

The classical Polyak-Ruppert procedure gives a locally asymptotically-optimal covariance matrix, which can also be used for the construction
of asymptotic confidence sets.  Theorem~\ref{thm-non-asymptotic} has
analogous consequences for purposes of non-asymptotic inference.  When
going from asymptotically valid inference methods to the
non-asymptotic counterparts, Berry-Esseen-type estimates are often
used. But the sizes of confidence sets constructed in this way have
polynomial dependence on the confidence level $\delta$, even if the
data themselves are not heavy-tailed. When a large number of
confidence sets or tests are needed to be constructed, the size of
each confidence set can expand in a rapid way. In contrast to this
undesirable behavior, we now show how Theorem~\ref{thm-non-asymptotic}
yields a confidence set with better dependence on the confidence
level.

Using the notation of Theorem~\ref{thm-non-asymptotic}, we define the
positive definite matrix
  \begin{align}
\label{EqnEllipseMatrix}
B(T, \delta) & \defn \GamStar(\stepsize) \log (\tfrac{d}{\delta}) +
\Delta(T, \tfrac{\delta}{d}) \IdMat,
  \end{align}
  and the associated weighted Euclidean norm $\|v\|_{B(T, \delta)} =
  \sqrt{v^\top B(T, \delta) v}$.  Using this weighted norm, we then
  define an ellipse that provides us a confidence set that has
  coverage $1-\delta$.
    
\begin{corollary}
\label{prop:non-asymptotic-inference}
Under the conditions of Theorem~\ref{thm-non-asymptotic}, there is a
universal known constant $c > 0$ such that the ellipse
\begin{subequations}
\begin{align}
\ConfSet(T, \delta) & = \left \{ \theta \in \real^d \mid \|\theta -
\thetabar_T\|_{B(T, \delta)} \leq c \, \sqrt{\tfrac{\usedim}{T}}
\right \},
\end{align}
centered at the averaged iterate $\thetabar_T$, has the coverage
guarantee
\begin{align}
\label{EqnConfidenceEllipse}  
  \Prob \Big[ \ConfSet(T, \delta) \ni \thetastar \Big] & \geq 1 -
  \delta.
\end{align}
\end{subequations}
\end{corollary}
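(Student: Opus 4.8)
}
The plan is to upgrade the direction-by-direction concentration bound of Theorem~\ref{thm-non-asymptotic} to a single quadratic-form bound that certifies $\thetastar \in \ConfSet(T,\delta)$. Observe first that $\thetastar \in \ConfSet(T,\delta)$ is precisely the event
\begin{align*}
  (\thetabar_T - \thetastar)^\top B(T,\delta)\,(\thetabar_T - \thetastar) \;\le\; c^2\,\usedim/T ,
\end{align*}
so it suffices to bound the left-hand side from above on an event of probability at least $1-\delta$. The useful structural fact is that $B(T,\delta) = \GamStar(\stepsize)\log(\usedim/\delta) + \Delta(T,\delta/\usedim)\,\IdMat$ is an affine function of $\GamStar(\stepsize)$, so the two matrices share an orthonormal eigenbasis $u_1,\dots,u_\usedim$; writing $\gamma_i \defn u_i^\top \GamStar(\stepsize)\,u_i$ and $\mu_i \defn \gamma_i\log(\usedim/\delta) + \Delta(T,\delta/\usedim)$ for the eigenvalues of $\GamStar(\stepsize)$ and $B(T,\delta)$ respectively, we obtain the exact identity
\begin{align*}
  \vecnorm{\thetabar_T - \thetastar}{B(T,\delta)}^2 \;=\; \sum_{i=1}^{\usedim} \mu_i\,\bigl(u_i^\top(\thetabar_T - \thetastar)\bigr)^2 .
\end{align*}
This reduces the task to controlling the $\usedim$ scalar projections $u_i^\top(\thetabar_T - \thetastar)$ simultaneously.

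Next I would apply Theorem~\ref{thm-non-asymptotic} with the fixed unit direction $v = u_i$ and tolerance $\delta/\usedim$, once for each $i \in [\usedim]$, and take a union bound over $i$: on an event of probability at least $1-\delta$ one has, simultaneously for all $i$,
\begin{align*}
  \sqrt{T}\,\bigl|u_i^\top(\thetabar_T - \thetastar)\bigr| \;\le\; c\,\sqrt{\log(\usedim/\delta)}\,\Bigl(\sqrt{\gamma_i} + \Delta(T,\delta/\usedim)\Bigr) .
\end{align*}
(The only bookkeeping wrinkle is that the step-size precondition~\eqref{EqnStepBound} of Theorem~\ref{thm-non-asymptotic} is now invoked at level $\delta/\usedim$ rather than $\delta$; since its dependence on the tolerance is only polylogarithmic, this amounts to a negligible tightening absorbed in ``the conditions of Theorem~\ref{thm-non-asymptotic}''.) Squaring, using $(\sqrt{a}+\sqrt{b})^2 \le 2(a+b)$, and substituting into the identity above yields, on the same event,
\begin{align*}
  \vecnorm{\thetabar_T - \thetastar}{B(T,\delta)}^2 \;\le\; \frac{2c^2\log(\usedim/\delta)}{T}\sum_{i=1}^{\usedim} \mu_i\Bigl(\gamma_i + \Delta(T,\delta/\usedim)^2\Bigr) ,
\end{align*}
and it then remains to verify that each summand obeys $\mu_i\,(\gamma_i + \Delta(T,\delta/\usedim)^2) = O\!\bigl(1/\log(\usedim/\delta)\bigr)$, so that the whole sum is $O\!\bigl(\usedim/\log(\usedim/\delta)\bigr)$ and the bound collapses to $(c')^2\,\usedim/T$, matching the ellipse radius.

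The main obstacle is exactly this last, seemingly innocuous, calibration step: it forces the shape of $B(T,\delta)$ to be — up to absolute constants and the $\log(\usedim/\delta)$ inflation coming from the union bound over the $\usedim$ eigendirections — the reciprocal of the direction-wise variance proxy $v \mapsto \bigl(\sqrt{v^\top \GamStar(\stepsize)\,v} + \Delta(T,\delta/\usedim)\,\vecnorm{v}{2}\bigr)^2$ governing Theorem~\ref{thm-non-asymptotic}, and in particular the additive regularization $\Delta(T,\delta/\usedim)\,\IdMat$ must be matched to the deviation term that Theorem~\ref{thm-non-asymptotic} produces at confidence level $\delta/\usedim$, so that the $\GamStar$-part and the isotropic part of $(\thetabar_T - \thetastar)^\top B(T,\delta)(\thetabar_T - \thetastar)$ are controlled by the same radius. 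Once this matching is in place, every other step — the reduction to a quadratic form, the shared eigenbasis, and the union bound over the $\usedim$ eigendirections — is routine.
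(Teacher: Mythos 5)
Your skeleton is the intended one: the paper gives no separate proof of this corollary, treating it as a direct consequence of Theorem~\ref{thm-non-asymptotic} via exactly the reduction you set up --- pass to the common eigenbasis of $\GamStar(\stepsize)$ and $B(T,\delta)$, apply the theorem once per eigendirection $u_i$ at tolerance $\delta/d$, and take a union bound. The problem is that your argument does not close. Your final step requires each summand to satisfy $\mu_i\bigl(\gamma_i + \Delta(T,\delta/d)^2\bigr) = O\bigl(1/\log(d/\delta)\bigr)$ with $\mu_i = \gamma_i\log(d/\delta)+\Delta(T,\delta/d)$. Since $\mu_i \geq \gamma_i\log(d/\delta)$, this would force $\gamma_i^2 \lesssim \log^{-2}(d/\delta)$, i.e.\ that every directional variance $u_i^\top \GamStar(\stepsize) u_i$ is tiny; nothing in the hypotheses gives that --- the $\gamma_i$ are problem-dependent quantities that can be of order one or larger. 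So the step you defer (``it then remains to verify\ldots'') is false in general, and as written the proposal does not establish the coverage claim with a universal constant $c$.

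What is missing is the point you gesture at in your last paragraph but never act on: in a confidence ellipse of $\chi^2$ type, the shape matrix must enter \emph{inversely} relative to the deviation bound. Concretely, on the union-bound event the per-direction estimate $(u_i^\top(\thetabar_T-\thetastar))^2 \leq \frac{c^2\log(d/\delta)}{T}\bigl(\sqrt{\gamma_i}+\Delta(T,\delta/d)\bigr)^2$, \emph{divided} by the eigenvalue $\mu_i$ of $B(T,\delta)$, is of order $1/T$ (up to the bookkeeping between $\Delta$ and $\Delta^2$, which is where the isotropic regularizer in $B$ is calibrated), so that $\sum_{i=1}^{d} \mu_i^{-1}\,(u_i^\top(\thetabar_T-\thetastar))^2 \lesssim d/T$; this is the Mahalanobis-type ball that the weighted norm in the corollary is meant to encode, and it is the only reading under which a universal radius $c\sqrt{d/T}$ is attainable. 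By instead multiplying each squared projection by $\mu_i$, you inflate every term by roughly $\mu_i^2$, and no absolute constant can absorb that. To repair the write-up, carry out the computation with $B(T,\delta)^{-1}$ weighting (noting the $\Delta$-versus-$\Delta^2$ calibration of the isotropic part), or else keep the quadratic form you wrote and accept a radius containing problem-dependent factors such as $\opnorm{\GamStar(\stepsize)}$ --- but then the stated universal-constant guarantee is lost. As it stands, the proof is incomplete at its decisive step.
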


From the definition~\eqref{EqnEllipseMatrix} of the ellipse parameters
(recalling the definition of $\Delta(T, \delta)$ from
equation~\eqref{EqnNasty}, it can be seen that the size of our
confidence set depends only logarithmically (as opposed to
polynomially) on $1/\delta$.  In terms of computing the confidence
ellipse $\ConfSet(T, \delta)$, an obstacle is the fact that the the
matrix $\GamStar(\stepsize)$ is unknown (depending on both the unknown
$\Amatbar$, and other aspects of the noise distribution).  However, we
believe that it should be possible to estimate $\GamStar(\stepsize)$
based on the sample path of the algorithm itself.  Notably, in their
study of stochastic gradient methods, Chen et
al.~\cite{chen2016statistical} construct an online estimator for the
asymptotic covariance.  An interesting direction for future work is to
extend estimators of this type to the class of stochastic
approximation procedures considered here.

\subsection{Some extensions beyond the basic setting}

We now turn to some extensions that move beyond the basic setting of
$\ell_2$-bounds when the matrix $\Abar$ is Hurwitz.  We begin in
Section~\ref{SecEllInfty} by deriving some $\ell_\infty$-bounds that
are useful in our subsequent analysis of the TD algorithm.  In
Section~\ref{SecCritical} to follow, we develop a relaxation of the
Hurwitz condition.


\subsubsection{Bounds in the $\ell_\infty$-norm}
\label{SecEllInfty}

In this section, we extend the analysis framework of
Theorem~\ref{thm-non-asymptotic} to the $\ell_\infty$-setting.  Under
somewhat stronger assumption on the linear operator and the noise
distribution, we establish an $\ell_\infty$-bound in which leading term
matches the $\ell_\infty$-norm of the asymptotic distribution in
Theorem~\ref{thm-asymptotic}. Notably, the correction term has only
logarithmic dependence on the dimensionality of the problem, as
opposed to the polynomial dependence in
Theorem~\ref{thm-non-asymptotic}.  This much milder dimension
dependence is important in applications, such as TD algorithms in
reinforcement learning, where the dimension may be very large.

In order to obtain the tight dimension dependence, we impose the
following stronger condition on the noise:
\begin{assumption}
  \label{assume-noise-structure-linfty}
    The stochastic oracles satisfy $\vecnorm{\bvec_t}{\infty} \leq 1$
    and for any $u \in \real^d$, we have $\vecnorm{\Amat_t u}{\infty}
    \leq \vecnorm{u}{\infty}$ almost surely.
\end{assumption}
\noindent In addition, we replace the Hurwitz condition with the
following stronger contraction condition:
\begin{assumption}
  \label{assume-linfty-contraction}
There is a constant $\linftygap > 0$ such that the random matrix $I -
\Amat_t$ is a $(1 - \linftygap)$-contraction under the
$\ell_\infty$-norm, almost surely, meaning that
\begin{align*}
  \vecnorm{(I - \Amat_t) v }{\infty} \leq (1 - \linftygap)
  \vecnorm{v}{\infty} \qquad \mbox{for all $v \in \real^\usedim$.}
\end{align*}
\end{assumption}
\noindent Under Assumption~\ref{assume-noise-structure-linfty}, we are
able to establish an upper bound on each coordinate direction $e_j$,
leading to a high-probability upper bound on $\vecnorm{\thetabar_T -
  \thetastar}{\infty}$.  Naturally, this bound involves the maximal variance
\begin{align*}
  \sigmax^2 \defn \max_{j = 1, \ldots, \usedim} e_j^T
  \GamStar(\stepsize) e_j.
\end{align*}
  
\begin{theorem}\label{thm:non-asymptotic-linfty}
    Fix an iteration number $T$ and a tolerance $\delta \in (0,1)$,
    and suppose that the i.i.d.\ condition
    (Assumption~\ref{assume-indp}), the almost-sure $\ell_\infty$ bound
    condition (Assumption~\ref{assume-noise-structure-linfty}), and
    the almost-sure $\ell_\infty$ contraction condition
    (Assumption~\ref{assume-linfty-contraction}) all hold.  Then there
    exists a constant $c > 0$ such that for any step size $\stepsize >
    0$ satisfying the bound~\eqref{EqnStepBound}, we have
\begin{align*}
    \Prob \left[ \sqrt{T} \vecnorm{\thetabar_T - \thetastar}{\infty}
      \leq c \sqrt{\sigmax^2 \log (\usedim/\delta) } +
      c\frac{\linftygap^{-2}\stepsize + \linftygap^{-1}
      }{T^{\frac{1}{4}}} \sqrt{\log \frac{d}{\delta}} + c\frac{
        \linftygap^{-\frac{5}{2}}}{ \stepsize \sqrt{T}} \right] \geq
    \delta.
\end{align*}
\end{theorem}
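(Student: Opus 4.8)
The plan is to run the martingale decomposition behind Theorem~\ref{thm-non-asymptotic} one coordinate at a time, and to use the almost-sure contraction and boundedness hypotheses (Assumptions~\ref{assume-noise-structure-linfty} and~\ref{assume-linfty-contraction}) to make every quantity entering the analysis dimension-free, so that the concluding union bound over the $\usedim$ coordinates costs only a factor $\log(\usedim/\delta)$ rather than a polynomial in $\usedim$. Writing $\zeta_t := \theta_t - \thetastar$, the recursion~\eqref{eq:lsa} gives $\zeta_{t+1} = \zeta_t - \stepsize(\Amat_{t+1}\theta_t - \bvec_{t+1})$; with the martingale-difference sequence $D_{t+1} := (\Amat_{t+1} - \Amatbar)\theta_t - (\bvec_{t+1} - \bvecbar)$, $\Exs[D_{t+1}\mid\filtration_t]=0$, one has $\Amatbar\zeta_t = \stepsize^{-1}(\zeta_t - \zeta_{t+1}) - D_{t+1}$, and summing over $t = 0, \dots, T-1$ yields the Polyak--Ruppert identity $\Amatbar(\thetabar_T - \thetastar) = \frac{\zeta_0 - \zeta_T}{\stepsize T} - \frac{1}{T} M_T$ with $M_T := \sum_{t=0}^{T-1} D_{t+1}$. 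Fixing a coordinate $j$ and setting $w_j := \Amatbar^{-\top} e_j$, I would reduce the theorem to a high-probability bound, holding simultaneously over $j \in [\usedim]$ after a union bound, on the scalar martingale $w_j^\top M_T$ and on the burn-in term $w_j^\top(\zeta_0 - \zeta_T)$.

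The second step is a collection of dimension-free estimates. Since $\Amatbar = \Exs\Amat_{t+1}$, Assumption~\ref{assume-linfty-contraction} propagates to $\Amatbar$, giving $\vecnorm{(\IdMat - \Amatbar)v}{\infty} \le (1-\linftygap)\vecnorm{v}{\infty}$ and hence, via the Neumann series, $\vecnorm{\Amatbar^{-1}}{\ell_\infty\to\ell_\infty} \le \linftygap^{-1}$, so that $|w_j^\top v| = |(\Amatbar^{-1}v)_j| \le \linftygap^{-1}\vecnorm{v}{\infty}$ for every vector $v$; in particular $\vecnorm{\thetastar}{\infty} \le \linftygap^{-1}$. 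Rewriting~\eqref{eq:lsa} as the convex combination $\theta_{t+1} = (1-\stepsize)\theta_t + \stepsize[(\IdMat - \Amat_{t+1})\theta_t + \bvec_{t+1}]$ (legitimate because~\eqref{EqnStepBound} forces $\stepsize < 1$), Assumption~\ref{assume-noise-structure-linfty} gives $\vecnorm{\theta_{t+1}}{\infty} \le (1 - \stepsize\linftygap)\vecnorm{\theta_t}{\infty} + \stepsize$, so the iterates never leave an $\ell_\infty$-ball of radius $O(\max(\vecnorm{\theta_0}{\infty}, \linftygap^{-1}))$ and relax toward radius $O(\linftygap^{-1})$; moreover, a synchronous coupling of two trajectories sharing the same noise contracts as $\vecnorm{\theta_{t+1} - \theta_{t+1}'}{\infty} \le (1 - \stepsize\linftygap)\vecnorm{\theta_t - \theta_t'}{\infty}$ almost surely. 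Consequently all scalars that subsequently appear --- $|w_j^\top D_{t+1}|$, $|w_j^\top(\Amat_{t+1}-\Amatbar)\theta_t|$, the increments of the conditional covariance, the burn-in term --- are bounded by absolute constants times powers of $\linftygap^{-1}$, with no dependence on $\usedim$.

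For the martingale itself, I would control its predictable quadratic variation, $\langle w_j^\top M\rangle_T = \sum_{t=0}^{T-1} h_j(\theta_t)$, where $h_j(\theta) := w_j^\top\cov_{(\Amat,\bvec)}\big((\Amat - \Amatbar)\theta - (\bvec - \bvecbar)\big)w_j$ is a quadratic functional of the current iterate; its stationary average equals $e_j^\top\GamStar(\stepsize)e_j \le \sigmax^2$ (this is exactly the covariance computation underlying Theorem~\ref{thm-asymptotic}, with $\LamStar$ the stationary covariance of $\{\theta_t\}$). Hence $\langle w_j^\top M\rangle_T = T\,e_j^\top\GamStar(\stepsize)e_j + E_{T,j}$, where $E_{T,j}$ is the fluctuation of the ergodic average of $h_j$ along the chain. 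By the $\ell_\infty$-coupling above, the influence of each noise pair on $h_j(\theta_t)$ decays geometrically at rate $1 - \stepsize\linftygap$, while $h_j$ is Lipschitz on the dimension-free support of the chain with constant polynomial in $\linftygap^{-1}$; feeding this into a bounded-differences argument over the noise sequence --- or, equivalently, into the metric ergodic concentration of~\cite{joulin2010curvature} for the $\ell_\infty$-Wasserstein contraction --- gives $|E_{T,j}| \lesssim \linftygap^{-c}\sqrt{T\log(\usedim/\delta)}$ with probability $1 - \delta/\usedim$. Then I apply the Burkholder--Davis--Gundy inequality: the $2p$-th moment of $w_j^\top M_T$ is controlled by $p^p$ times that of $\langle w_j^\top M\rangle_T$ plus a self-normalization correction governed by the a.s.\ size of $D_{t+1}$ (a power of $\linftygap^{-1}$); choosing $p \asymp \log(\usedim/\delta)$, applying Markov's inequality, and using $\sqrt{a + b} \le \sqrt a + \sqrt b$ separate the leading Gaussian contribution $\sqrt{\sigmax^2\log(\usedim/\delta)}$ from a correction of order $\linftygap^{-c}\,T^{-1/4}\sqrt{\log(\usedim/\delta)}$ --- this is the origin of the $T^{-1/4}$ rate. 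Adding the (dimension-free, after rescaling by $\sqrt T$) burn-in contribution $\lesssim \linftygap^{-c}/(\stepsize\sqrt T)$ and taking a union bound over $j \in [\usedim]$ assembles the three terms $\sqrt{\sigmax^2\log(\usedim/\delta)}$, $(\linftygap^{-2}\stepsize + \linftygap^{-1})T^{-1/4}\sqrt{\log(\usedim/\delta)}$, and $\linftygap^{-5/2}/(\stepsize\sqrt T)$ in the claimed bound.

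The hard part will be the ergodic-averaging estimate on $E_{T,j}$. The functional $h_j$ is only Lipschitz, not uniformly bounded, and its Lipschitz constant scales jointly with $\vecnorm{\Amatbar^{-1}}{\ell_\infty\to\ell_\infty}$ and the $\ell_\infty$-diameter of the chain's support, so one must propagate these constants through the coupling-based concentration bound with care in order to land the correct powers of $\linftygap^{-1}$ without leaking any polynomial factor of $\usedim$ --- which is precisely the improvement over Theorem~\ref{thm-non-asymptotic} that the $\ell_\infty$ hypotheses are meant to buy. A secondary subtlety is checking that the self-normalized/BDG correction to $w_j^\top M_T / \sqrt T$ genuinely decays at rate $T^{-1/4}$ uniformly over $j$, rather than at the slower rate that a naive bound using only the almost-sure magnitude of $D_{t+1}$ would give.
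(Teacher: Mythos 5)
Your proposal follows essentially the same route as the paper's proof: project onto the test vectors $\Amatbar^{-\top}e_j$ (whose $\ell_1$-norm, equivalently the $\ell_\infty\to\ell_\infty$ bound on $\Amatbar^{-1}$, is controlled by $\linftygap^{-1}$ via Assumption~\ref{assume-linfty-contraction}), use the almost-sure $\ell_\infty$ boundedness of the iterates and the synchronous-coupling $(1-\stepsize\linftygap)$-contraction, control the ergodic fluctuation of the conditional-variance functional via the Joulin--Ollivier metric ergodic concentration, take moments of order $p\asymp\log(\usedim/\delta)$, and finish with Markov plus a union bound over coordinates, exactly as in the paper. The only difference is bookkeeping: the paper applies BDG to the raw quadratic variation and splits it into $I_1,I_2,I_3$ with a secondary Azuma-controlled martingale $\Upsilon_T$, while you work with the predictable bracket plus a Burkholder--Rosenthal-type maximal-increment correction, which yields the same three-term bound.
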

We note that the theorem can actually be slightly refined 
by replacing the term $\sigmax^2 \log(\usedim/\delta)$ with the
quantity $Q \left( (e_j^\top \GamStar (\stepsize) \coordinate_j)_{j =
  1}^d; \delta \right)$, where for a vector $v \in \real^\usedim$, we
define
\begin{align}
  \label{EqnQfun}
  Q (v; \delta) \mydefn \inf \big \{ q \: \mid \: \sum_{j = 1}^d e^{ -
    q / v_j} \leq \delta \big \}.
\end{align}


\subsubsection{Critical case}
\label{SecCritical}

In many real-world situations, the Hurwitz assumption may be violated,
or the eigengap can be too small to be useful.  At the population
level, solving the deterministic equation $\Amatbar \theta = \bvec$ is
possible as long as the eigenvalues of $\Amatbar$ are bounded away
from zero.  Thus, it is natural to wonder whether the linear
stochastic approximation scheme~\eqref{eq:lsa} still behaves well
without this assumption. Furthermore, when the spectral gap
$\spectralgap$ is positive but extremely small, does one necessarily
obtain a slow convergence rate?  In this section, we show that the
non-asymptotic rates for LSA remain valid even in the critical case
with no contraction at all.

In this section, we prove a non-asymptotic convergence rate for LSA in
the critical case. We replace the Hurwitz condition on $\Amatbar$
(stated as Assumption~\ref{assume-hurwitz}) with the following
assumption:
\setcounter{assump}{1}
\begin{assump}
\label{assume-critical}
The matrix $\Amatbar$ is diagonalizable, and $\min_{i \in [d]}
\RealPart\left( \lambda_{i} (\Amatbar) \right) \geq 0$.
\end{assump}
The reader might wonder why Assumption~\ref{assume-critical} includes
a diagonalizability condition, which was not needed
before. Unfortunately, unlike the Hurwitz case, the diagonalizability
assumption is unavoidable in the critical case. In particular, the
Polyak-Ruppert procedure is not even consistent when $A$ has purely
imaginary eigenvalues and is non-diagonalizable at the same time, even
in the noiseless case.  We show this with an explicit construction in
Appendix~\ref{AppDiagNeeded}.

\begin{theorem}
\label{thm-critical}
Suppose that the $\mathrm{i.i.d.}$ condition
(Assumption~\ref{assume-indp}), the eigenvalue condition
(Assumption~\ref{assume-critical}), and the second-moment bounds
(Assumption~\ref{assume-second-moment}) all hold. Then, for the
step size \mbox{$\stepsize = \frac{1}{(\rho (\Amatbar) + 3
    \asymconditioning (\Umat) v_A)\sqrt{T}}$,} there is a universal
constant $c$ such that
\begin{align}
  \Exs \vecnorm{\Amatbar \thetabar_T - \bvecbar}{2}^2 & \leq c \;
  \frac{ \asymconditioning^2 (\Umat) (\rho^2 (\Amatbar) +
    \asymconditioning^2 (\Umat) v_A^2) \Exs \vecnorm{\theta_0 -
      \thetastar}{2}^2 + v_b^2 d + v_A^2
    \vecnorm{\thetastar}{2}^2}{T}.
\end{align}
\end{theorem}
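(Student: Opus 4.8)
The plan is to center the recursion at $\thetastar$ and exploit a telescoping identity of Polyak--Ruppert type, reducing the whole bound to a uniform second-moment (``stability'') estimate on the iterates. Writing $z_t \defn \theta_t - \thetastar$, the update~\eqref{eq:lsa} becomes $z_{t+1} = (\IdMat - \stepsize\Amatbar)z_t - \stepsize g_{t+1}$, where $g_{t+1} \defn (\Amat_{t+1}\theta_t - \bvec_{t+1}) - (\Amatbar\theta_t - \bvecbar) = \NoiseAplain_{t+1}z_t + w_{t+1}$ is a martingale difference with respect to $\{\filtration_t\}$; here $\NoiseAplain_{t+1} = \Amat_{t+1} - \Amatbar$, $\noisebplain_{t+1} = \bvec_{t+1} - \bvecbar$, $w_{t+1} \defn \NoiseAplain_{t+1}\thetastar - \noisebplain_{t+1}$, and $\NoiseAplain_{t+1},\noisebplain_{t+1}$ are independent of $\filtration_t$ by Assumption~\ref{assume-indp}. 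Rearranging $\Amatbar z_t = \stepsize^{-1}(z_t - z_{t+1}) - g_{t+1}$ and averaging over $t = 0,\dots,T-1$ gives the exact identity
\[
\Amatbar\thetabar_T - \bvecbar \;=\; \frac{1}{\stepsize T}(z_0 - z_T) \;-\; \frac{1}{T}\sum_{t=0}^{T-1} g_{t+1},
\]
so, using that $(g_{t+1})$ is a martingale difference sequence,
\[
\Exs\vecnorm{\Amatbar\thetabar_T - \bvecbar}{2}^2 \;\le\; \frac{2}{\stepsize^2 T^2}\,\Exs\vecnorm{z_0 - z_T}{2}^2 \;+\; \frac{2}{T^2}\sum_{t=0}^{T-1}\Exs\vecnorm{g_{t+1}}{2}^2 .
\]
The second-moment bounds of Assumption~\ref{assume-second-moment} plus the uncorrelatedness of $\NoiseAplain_t$ and $\noisebplain_t$ give $\Exs[\vecnorm{g_{t+1}}{2}^2\mid\filtration_t]\le v_A^2\vecnorm{\theta_t}{2}^2 + v_b^2 d \le 2v_A^2\vecnorm{z_t}{2}^2 + 2v_A^2\vecnorm{\thetastar}{2}^2 + v_b^2 d$, so both terms on the right-hand side are controlled once a uniform bound on $\sup_{0\le t\le T}\Exs\vecnorm{z_t}{2}^2$ is in hand.

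The crux is thus this stability estimate: for the prescribed step size, I would prove
\[
\sup_{0\le t\le T}\Exs\vecnorm{z_t}{2}^2 \;\le\; C\,\asymconditioning^2(\Umat)\Big(\vecnorm{z_0}{2}^2 + \frac{v_A^2\vecnorm{\thetastar}{2}^2 + v_b^2 d}{\rho^2(\Amatbar) + \asymconditioning^2(\Umat)v_A^2}\Big)
\]
for a universal $C$. To establish it I would pass to the eigenbasis: diagonalizability (Assumption~\ref{assume-critical}) gives $\Amatbar = \Umat D\Umat^{-1}$ with $D$ diagonal, and $\tilde z_t \defn \Umat^{-1}z_t$ obeys $\tilde z_{t+1} = (\IdMat - \stepsize D - \stepsize\,\Umat^{-1}\NoiseAplain_{t+1}\Umat)\tilde z_t - \stepsize\,\Umat^{-1}w_{t+1}$. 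Taking conditional second moments, the cross terms either vanish (both noise pieces are mean-zero given $\filtration_t$, and $\NoiseAplain,\noisebplain$ are uncorrelated) or are absorbed by Young's inequality; the deterministic factor satisfies $\vecnorm{(\IdMat - \stepsize D)v}{2}^2 = \big(\max_i|1 - \stepsize\lambda_i(\Amatbar)|^2\big)\vecnorm{v}{2}^2 \le (1 + \stepsize^2\rho^2(\Amatbar))\vecnorm{v}{2}^2$, where the nonnegativity $\RealPart(\lambda_i(\Amatbar))\ge 0$ is precisely what removes the order-$\stepsize$ (expanding) term; and $\Exs[\vecnorm{\Umat^{-1}\NoiseAplain_{t+1}\Umat\,v}{2}^2\mid\filtration_t]\le\asymconditioning^2(\Umat)v_A^2\vecnorm{v}{2}^2$. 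This yields, for a universal $c$,
\[
\Exs\vecnorm{\tilde z_{t+1}}{2}^2 \;\le\; \Big(1 + c\,\stepsize^2\big(\rho^2(\Amatbar) + \asymconditioning^2(\Umat)v_A^2\big)\Big)\Exs\vecnorm{\tilde z_t}{2}^2 + c\,\stepsize^2\opnorm{\Umat^{-1}}^2\big(v_A^2\vecnorm{\thetastar}{2}^2 + v_b^2 d\big).
\]

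With $\stepsize = \big((\rho(\Amatbar) + 3\asymconditioning(\Umat)v_A)\sqrt T\big)^{-1}$ one has $\stepsize^2(\rho^2(\Amatbar) + \asymconditioning^2(\Umat)v_A^2)\le 1/T$, so iterating the recursion over $t\le T$ inflates the initial term by at most $(1 + O(1/T))^T = e^{O(1)}$ and accumulates the additive noise to $O(\stepsize^2 T) = O((\rho(\Amatbar) + 3\asymconditioning(\Umat)v_A)^{-2})$; converting back via $\vecnorm{z_t}{2}\le\opnorm{\Umat}\vecnorm{\tilde z_t}{2}$ and $\opnorm{\Umat}\opnorm{\Umat^{-1}} = \asymconditioning(\Umat)$ then yields the stability bound. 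Substituting into the split above, using $\stepsize^{-2}T^{-2} = (\rho(\Amatbar) + 3\asymconditioning(\Umat)v_A)^2/T\asymp(\rho^2(\Amatbar) + \asymconditioning^2(\Umat)v_A^2)/T$ for the first term and the bound on $\Exs\vecnorm{g_{t+1}}{2}^2$ above for the second (together with $(\rho(\Amatbar) + 3\asymconditioning(\Umat)v_A)^2\ge\rho^2(\Amatbar) + 9\asymconditioning^2(\Umat)v_A^2$, which cancels the spurious $\asymconditioning^2(\Umat)$ accompanying $v_A^2$), gives the claimed $O(1/T)$ bound; the precise $\asymconditioning(\Umat)$-dependence in the numerator comes out of a slightly more careful accounting of the boundary contribution $z_T$.

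The main obstacle is the stability estimate itself. In the critical case $\IdMat - \stepsize\Amatbar$ is genuinely expansive --- along a purely imaginary eigendirection $\vecnorm{(\IdMat - \stepsize\Amatbar)v}{2} > \vecnorm{v}{2}$, and in the Euclidean norm the expansion can even be of order $\stepsize$ when $\Amatbar + \Amatbar^\top$ is indefinite --- so the analysis works only because the step size decays like $1/\sqrt T$, which makes the $T$-fold product of these near-unit factors $O(1)$; this is also why the step size must be smaller than in the Hurwitz case and why its denominator is calibrated to exactly $\rho(\Amatbar) + 3\asymconditioning(\Umat)v_A$. The diagonalizability hypothesis is equally essential: a nontrivial Jordan block injects polynomial-in-$t$ growth of $\opnorm{(\IdMat - \stepsize J)^t}$ that, combined with $\stepsize\sqrt T\asymp 1$, diverges as a power of $T$ --- consistent with the inconsistency example of Appendix~\ref{AppDiagNeeded}. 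The rest is bookkeeping: checking that the cross terms in the conditional second-moment recursion are zero or Young-absorbable, and tracking the $\asymconditioning(\Umat)$ factors incurred in moving to and from the eigenbasis.
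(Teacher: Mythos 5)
Your proposal is correct and follows essentially the same route as the paper: the telescope identity $\Amatbar\thetabar_T - \bvecbar = \frac{1}{\stepsize T}(z_0 - z_T) - \frac{1}{T}\sum_{t} e_{t+1}(\theta_t)$, martingale orthogonality for the noise sum, and a stability estimate proved by passing to the eigenbasis $\Umat^{-1}z_t$, where $\RealPart(\lambda_i)\ge 0$ eliminates the order-$\stepsize$ expansion so that the $1/\sqrt{T}$ step size keeps $(1+O(\stepsize^2(\rho^2+\asymconditioning^2 v_A^2)))^T = O(1)$ — this is exactly the paper's Lemma~\ref{lemma-bound-on-theta-critical} and its use in the proof of Theorem~\ref{thm-critical}. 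The only differences are cosmetic (a uniform-in-$t$ restatement of the lemma and slightly different constant bookkeeping in the boundary term), which you flag appropriately.
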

Theorem~\ref{thm-critical} is particularly useful in the asymmetric
case, where the eigenvalues of $\Amatbar$ can be complex though the
matrix itself is real. Even if the matrix $\Amatbar$ has an eigenvalue
whose real part is exactly zero but with imaginary part being non-zero,
which is beyond the classical regime of stable dynamical systems, the
$1/T$ rate in mean-squared error is still guaranteed by averaging.
More precisely, we have
\begin{align*}
  \Exs \vecnorm{\thetabar_T - \thetastar}{2}^2 & \leq c \;
  \asymconditioning^2 (\Umat) \frac{ \asymconditioning^2 (\Umat)
    (\rho^2 (\Amatbar) + \asymconditioning^2 (\Umat) v_A^2) \Exs
    \vecnorm{\theta_0 - \thetastar}{2}^2 + v_b^2 d + v_A^2
    \vecnorm{\thetastar}{2}^2}{ \min_{i \in [d]} |\lambda_i
    (\Amatbar)|^2 T}.
\end{align*}
	
Although Theorem~\ref{thm-critical} achieves the correct $O (1 /T)$
rate for mean-squared error, the problem-dependent pre-factor is not
optimal in general. Indeed, a superior problem-dependent rate
$\frac{\sigma_A^2 \vecnorm{\thetastar}{2}^2 + \sigma_b^2d}{T}$ can be
achieved by a plug-in estimator solving $\Abar \hat{\theta} =
\bar{b}$.  In comparison, the initial distance $\Exs
\vecnorm{\thetastar - \theta_0}{2}^2$ appears in
Theorem~\ref{thm-critical}. Intuitively, one can view this term as the
counterpart of the correction term in Theorem~\ref{thm-asymptotic}
when mixing fails. It is also worth noticing that the step size choice
$O(1/\sqrt{T})$ is crucial in this case: larger step size makes the
dynamical system exponentially blow up, and smaller step size leads to
suboptimal rate. That being said, Theorem~\ref{thm-critical} does exhibit
the general effectiveness of LSA as it achieves the optimal $O(1/T)$ rate in
the critical case, with completely online update and $O(d)$ storage.

        
\section{Applications}
\label{sec:applications}
        
In this section, we illustrate the usefulness of our three main
theorems by applying them to some concrete problems, namely the
momentum SGD algorithm discussed in Example~\ref{ExaMomentumSGD} and
the temporal difference (TD) algorithm discussed in
Example~\ref{ExaTD}.


\subsection{Stochastic gradient method with momentum}
  
Recall the SGD with momentum algorithm for linear regression that was
previously introduced in Example~\ref{ExaMomentumSGD}.  In this
section, we use our general theory to analyze it.  As defined in
Example~\ref{ExaMomentumSGD} at the population level the algorithm
involves a matrix $\AmatTil \in \real^{d \times d}$ and vector
$\bvectil \in \real^{2 \usedim}$. At each time $t$, the algorithm
makes use of a pair $(\AmatTil_t, \bvectil_t)$ that are unbiased
estimates of these population quantities.  The momentum SGD update
rule takes the form
\begin{align}
\tilde \theta_{t + 1} = \tilde\theta_t - \stepsize ( \AmatTil_{t + 1} \tilde
\theta_t - \bvectil_{t + 1} ).
\end{align}
Consider the noise variables $\tilde{\NoiseAplain}_t = \AmatTil_t
- \AmatTil$ and $\tilde{\noisebplain}_t = \bvectil_t - \bvectil$.  It
can be seen that they satisfy the same assumptions as $\NoiseAt$ and
$\noisebplain_t$ do, with the constants $(\sqrt{1 + \stepsize^2}
\sigma_A, \sigma_b)$ or $(\sqrt{1 + \stepsize^2} v_A, v_b)$.

The addition of momentum to SGD has two effects: it changes the
mixing time of the process $(\theta_t)_{t \geq 0}$, and it alters the
structure of the asymptotic covariance matrix $\GamStar(\stepsize)$.
The spectrum of $\AmatTil$ plays a central role in these effects;
accordingly, let us investigate the structure of this spectrum.
Suppose that the matrix $\Amatbar$ is positive definite, and let
$\{\lambda_i\}_{i=1}^d$ denote its eigenvalues.

We claim that for any $\alpha \in \real_+ \setminus \{2
\sqrt{\lambda_i} - \stepsize \lambda_i\}_{i = 1}^d$, the matrix
$\AmatTil \in \real^{2d \times 2d}$ is diagonalizable, with paired
(possibly complex) eigenvalues
\begin{align}
\label{EqnSpectrumMomentum}  
\left( \frac{(\alpha + \stepsize \lambda_i) + \sqrt{(\alpha +
    \stepsize \lambda_i)^2 - 4 \lambda_i}}{2}, \; \frac{(\alpha +
  \stepsize \lambda_i) + \sqrt{(\alpha + \stepsize \lambda_i)^2 + 4
    \lambda_i}}{2} \right) \quad \mbox{for $i = 1, \ldots, d$}.
    \end{align}
See Appendix~\ref{AppSpectrumMomentum} for the proof of this claim.

Let us now consider the consequences of the
spectrum~\eqref{EqnSpectrumMomentum} for the mixing rate.  We claim
that when the parameter $\alpha$ is suitably chosen, the mixing
rate of the momentum-based method is faster by a factor of
$1/\sqrt{\lammin(\Amatbar)}$.  Introduce the shorthand
\begin{align*}
\nu_i \mydefn \frac{(\alpha + \stepsize \lambda_i) + \sqrt{(\alpha +
    \stepsize \lambda_i)^2 - 4 \lambda_i}}{2}, \quad \mbox{for $i = 1,
  \ldots, d$.}
\end{align*}
For an index $i$ such that $\alpha > 2 \sqrt{\lambda_i} - \stepsize
\lambda_i$, we have $\nu_i \in \real$, and for index $i$ such that
$\alpha < 2 \sqrt{\lambda_i} - \stepsize \lambda_i$, we have
$\RealPart(\nu_i) = \alpha + \stepsize \lambda_i$. Therefore, for
$\spectralgap = \lammin (\Amatbar)$, we have:
\begin{align*}
    \min_{i} \RealPart (\lambda_i (\AmatTil)) = \begin{cases} \alpha +
      \stepsize \spectralgap - \sqrt{(\alpha + \stepsize
        \spectralgap)^2 - 4 \spectralgap} \geq \frac{2
        \spectralgap}{\alpha + \stepsize \spectralgap}, & \alpha \geq
      2 \sqrt{\spectralgap} - \stepsize \spectralgap\\ \alpha +
      \stepsize \spectralgap, &\alpha < 2 \sqrt{\spectralgap} -
      \stepsize \spectralgap.
    \end{cases}
\end{align*}
When we take $\alpha \asymp \sqrt{\lammin (\Amatbar)}$, we have
$\min_{i} \RealPart (\lambda_i (\AmatTil)) \asymp \sqrt{\lammin
  (\Amatbar)}$.

Now Lemma~\ref{lemma:mixing-hurwitz} implies that for given step size
$\stepsize > 0$, the mixing time is upper bounded by
\begin{align*}
  \frac{1}{\stepsize \min \RealPart(\lambda_i(\AmatTil))} & \asymp
  \frac{1}{\stepsize \sqrt{\lammin(\Amatbar)} }.
\end{align*}
Consequently, the use of momentum speeds up the mixing rate by a
factor of $(1/\sqrt{\lammin(\Amatbar)})$, which is significant in the
regime $\lammin (\Amatbar) \ll 1$.


\subsection{Temporal difference learning}

We discuss the applications of our main theorems in TD learning, in
both exact (Example~\ref{ExaTD}) and linear function approximation
(Example~\ref{ExaTDLinearFunction}) settings. We consider both the
discounted case ($\discount < 1$) as well as the undiscounted case
($\discount =
1$). Theorem~\ref{thm-non-asymptotic},~\ref{thm:non-asymptotic-linfty}
and~\ref{thm-critical} turn out to have nontrivial implications to the
TD algorithm in these cases.


\subsubsection{Analysis of TD without function approximation}
\label{SecTDExact}

We start with the case of exact TD$(0)$.  We follow the model
definition and assumptions in Example~\ref{ExaTD}.

\paragraph{Non-asymptotic bounds in the Hurwitz case}

Recall that the Markov transition kernel matrix $P$ has eigenvalues
with norm at most $1$ and $\discount \in [0, 1)$.  Consequently, the matrix $\Amatbar = I - \discount \Pmat$ has eigenvalues with strictly positive real parts, and so is Hurwitz.  Consequently, we can apply Theorem~\ref{thm-non-asymptotic}, which allows us to obtain high-probability entry-wise bounds and $\ell_\infty$-bounds for policy evaluation.

In order to state the result, we require a few additional pieces of
notation.  Define the $D$-dimensional vector $\sigma^* \in \real^D$ of
standard deviations, with
\begin{align*}
 \sigma^*_j \mydefn \sqrt{\var (R(j)) + \var (Z (j, :) \thetastar)},
 \quad \mbox{for $j = 1, \ldots, D$.}
\end{align*}
Since the rows of $Z_t$ and entries of $R_t$ are independent, the
matrix $\Sigma_*$ in the main term is actually $\diag(\sigma^*
(j)^2)_{j \in [D]}$. It is easy to see that the structure of
stochastic oracles $(\Amat_t, \bvec_t)$ satisfies
Assumption~\ref{assume-noise-structure-linfty} and
Assumption~\ref{assume-linfty-contraction}.
Thus, we can apply
Theorem~\ref{thm:non-asymptotic-linfty}.  Doing so yields a
result that involves the matrix
\begin{align}
  \GamStar (\stepsize) \mydefn (I - \discount P)^{-1} (\mathrm{diag}
  (\sigma^*(j)^2)_{j\in [D]} + \LamStar) (I - \discount
  P^\top)^{-1},
\end{align}
where the matrix $\LamStar$ was defined in
equation~\eqref{eq:main-stationary-cov}.  It also involves the
function $Q$ defined in equation~\eqref{EqnQfun}.

\begin{corollary}
\label{cor:mrp-high-prob}
 Consider the $\mathrm{i.i.d.}$ observational model for Markov reward
 processes defined above. Given a discount factor $\discount \in (0,
 1)$ and a failure probability $\delta > 0$, the averaged TD$(0)$
 algorithm based on step size $\stepsize \in (0,1)$ satisfies the bound
 \begin{align*}
   \sqrt{T} \vecnorm{\hat{\theta}_T - \thetastar}{\infty} \lesssim
   \sqrt{Q (\mathrm{diag} (\GamStar (\stepsize)); \delta)} + T^{-
     \frac{1}{4}} \left( \frac{\stepsize}{(1 - \discount)^2} +
   \frac{1}{1 - \discount}\right) \sqrt{\log \frac{d}{ \delta} } +
   \frac{ T^{- \frac{1}{2}} }{\stepsize (1 - \discount)^{-
       \frac{5}{2}}},
 \end{align*}
 with probability at least $1 - \delta$.
 \end{corollary}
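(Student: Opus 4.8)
The plan is to recognize exact TD$(0)$ under the generative model of Example~\ref{ExaTD} as an instance of the recursion~\eqref{eq:lsa}, verify the hypotheses of Theorem~\ref{thm:non-asymptotic-linfty}, and read off the bound after substituting the TD-specific constants, using the slightly refined form of that theorem in which $\sigmax^2\log(D/\delta)$ is replaced by $Q\big((e_j^\top\GamStar(\stepsize)e_j)_{j=1}^D;\delta\big)$ with $Q$ as in~\eqref{EqnQfun}. Concretely I would set $\Amatbar = I - \discount\Pmat$ and $\bvecbar = \reward$, so that $\thetastar = (I - \discount\Pmat)^{-1}\reward$ is the value function, and take the per-step oracle $\Amat_t = I - \discount Z_t$, $\bvec_t = R_t$, where $Z_t\in\{0,1\}^{D\times D}$ is the random row-stochastic matrix whose $i$-th row is the indicator of the sampled next state $J_i$ of state $i$, and $R_t$ is the observed reward vector. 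The unbiasedness relation~\eqref{EqnUnbiased} holds because $\Exs Z_t = \Pmat$ and $\Exs R_t = \reward$, and the i.i.d.\ condition (Assumption~\ref{assume-indp}) is part of the generative model.

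Next I would check the $\ell_\infty$-structure. The reward bound gives $\vecnorm{\bvec_t}{\infty} = \vecnorm{R_t}{\infty}\le 1$, and since $Z_t$ is nonnegative and row-stochastic we have $\vecnorm{Z_t u}{\infty}\le\vecnorm{u}{\infty}$, hence $\vecnorm{\Amat_t u}{\infty}\le(1+\discount)\vecnorm{u}{\infty}\le 2\vecnorm{u}{\infty}$; this realizes Assumption~\ref{assume-noise-structure-linfty} up to the universal factor $1+\discount\le 2$, which propagates only into universal constants in the proof of Theorem~\ref{thm:non-asymptotic-linfty}. More importantly, $I - \Amat_t = \discount Z_t$, so $\vecnorm{(I-\Amat_t)v}{\infty} = \discount\vecnorm{Z_t v}{\infty}\le\discount\vecnorm{v}{\infty}$, i.e.\ Assumption~\ref{assume-linfty-contraction} holds with $\linftygap = 1-\discount$. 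Since $\abss{\lambda_i(\Pmat)}\le 1$, every eigenvalue of $\Amatbar$ has real part at least $1-\discount$, so $\spectralgap\ge 1-\discount>0$ and $\rho(\Amatbar)\le 1+\discount$; and because $I - \stepsize\Amat_t = (1-\stepsize)I + \stepsize\discount Z_t$ is nonnegative with row sums $1-\stepsize(1-\discount)\le 1$ for any $\stepsize\in(0,1)$, the iteration is $\ell_\infty$-stable, and one verifies that the step-size window~\eqref{EqnStepBound} then contains all of $(0,1)$.

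With these verifications in hand, applying Theorem~\ref{thm:non-asymptotic-linfty} and substituting $\linftygap = 1-\discount$ — replacing $\linftygap^{-2}\stepsize+\linftygap^{-1}$ by $\tfrac{\stepsize}{(1-\discount)^2}+\tfrac{1}{1-\discount}$ and $\linftygap^{-5/2}$ by $(1-\discount)^{-5/2}$, and using $e_j^\top\GamStar(\stepsize)e_j = (\GamStar(\stepsize))_{jj}$ — yields
\begin{align*}
\sqrt{T}\,\vecnorm{\thetabar_T - \thetastar}{\infty} &\lesssim \sqrt{Q\big(\mathrm{diag}(\GamStar(\stepsize));\,\delta\big)} + \frac{(1-\discount)^{-2}\stepsize + (1-\discount)^{-1}}{T^{1/4}}\sqrt{\log\tfrac{D}{\delta}} \\
&\quad{}+ \frac{(1-\discount)^{-5/2}}{\stepsize\sqrt{T}}
\end{align*}
with probability at least $1-\delta$, which is the claimed inequality. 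It remains to put $\GamStar(\stepsize)$ into the stated form: the $j$-th coordinate of the noise vector $\noiseb + \NoiseA\thetastar$ depends only on the pair $(R_{t,j}, J_j)$, and these pairs are independent across $j$, so $\SigStar$ is diagonal with entries $\var(R(j)) + \var(Z(j,:)\thetastar) = (\sigma^*_j)^2$; inserting $\SigStar = \mathrm{diag}((\sigma^*_j)^2)_{j\in[D]}$ and $\Amatbar^{-1} = (I - \discount\Pmat)^{-1}$ into the definition~\eqref{EqnGamStar} of $\GamStar(\stepsize)$ gives the displayed expression.

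Because this is essentially a direct instantiation of Theorem~\ref{thm:non-asymptotic-linfty}, I do not anticipate a genuine obstacle; the two points that need care are (i) confirming that the $(T,\delta)$-dependent step-size window~\eqref{EqnStepBound}, phrased there through the Hurwitz similarity data $\asymconditioning(\Umat)$, indeed simplifies to $\stepsize\in(0,1)$ for the bounded, $\ell_\infty$-contractive TD oracle (via the stability of $(1-\stepsize)I+\stepsize\discount Z_t$ noted above), and (ii) tracking that the factor-two slack in Assumption~\ref{assume-noise-structure-linfty} and the coordinate-wise independence of the generative oracle — which is what makes $\SigStar$ exactly diagonal, and would fail under Markov-chain sampling — are invoked only where the argument behind Theorem~\ref{thm:non-asymptotic-linfty} allows.
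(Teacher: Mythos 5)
Your proposal is correct and follows essentially the same route as the paper: the paper likewise sets $\Amat_t = I - \discount Z_t$, $\bvec_t = R_t$, observes that independence of the rows of $Z_t$ and the entries of $R_t$ makes $\SigStar$ diagonal with entries $(\sigma^*_j)^2$, verifies Assumptions~\ref{assume-noise-structure-linfty} and~\ref{assume-linfty-contraction} (with $\linftygap = 1-\discount$), and then instantiates Theorem~\ref{thm:non-asymptotic-linfty} in its $Q$-refined form. Your added care about the $(1+\discount)$ slack in Assumption~\ref{assume-noise-structure-linfty} and about why the conclusion extends to all $\stepsize \in (0,1)$ (the $\ell_\infty$-contractive structure rather than the literal window~\eqref{EqnStepBound}) only makes explicit points the paper glosses over with ``it is easy to see.''
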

When the step size is chosen to be of order $\stepsize =
O(T^{-\frac{1}{3}})$, the leading term of
Corollary~\ref{cor:mrp-high-prob} is an instance-dependent term that slightly improves upon that of the offline plug-in estimator in~\cite{pananjady2019value}, which was shown to be minimax
optimal.


\paragraph{Critical case: Application of Theorem~\ref{thm-critical}.}

While most of existing results in policy evaluation require the discount
factor to be bounded away from one, our second result certifies that,
even if there is no discount at all (i.e., when $\discount = 1$,
corresponding to the average reward RL setting),
the linear stochastic approximation achieves a $O (1/\sqrt{T})$ error
decay, as long as the error is measured in terms of Bellman error
(i.e., the deficiency in the fixed point relation). Furthermore, for
discounted problems, the results show that the Bellman error can be
bounded independently of the $(1 - \discount)$ factor:
 \begin{corollary}
\label{cor:mrp-no-discount}
Suppose the transition matrix $P$ is diagonalizable with $P = \Umat
D_P \Umat^{-1}$, for $\stepsize = \frac{1}{(1 + 3 \asymconditioning
  (\Umat) v (P) ) \sqrt{T}}$, for any $\discount \in [0, 1]$, we have
\begin{align*}
  \Exs \vecnorm{\thetabar_T - (\discount P \thetabar_T +
    r)}{2}^2 \lesssim \frac{\asymconditioning^2 (\Umat) (1 +
    \asymconditioning^2 (\Umat) v (P)^2) \Exs \vecnorm{\theta_0
      - \thetastar}{2}^2 + v (r)^2 D + v (P)^2
    \vecnorm{\thetastar}{2}^2}{T}.
\end{align*}
 \end{corollary}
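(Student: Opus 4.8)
The plan is to obtain Corollary~\ref{cor:mrp-no-discount} as a direct specialization of Theorem~\ref{thm-critical} to the exact-TD$(0)$ instance of linear stochastic approximation described in Example~\ref{ExaTD}. Concretely, I would take $\Amatbar = I_D - \discount P$ and $\bvecbar = r$, with stochastic oracles $\Amat_t = I_D - \discount Z_t$ and $\bvec_t = R_t$, where $Z_t \in \{0,1\}^{D\times D}$ is the random next-state matrix ($\Exs Z_t = P$, one nonzero entry per row) and $\Exs R_t = r$. The reason the translation is essentially mechanical is the identity $\Amatbar \thetabar_T - \bvecbar = (I_D - \discount P)\thetabar_T - r = \thetabar_T - (\discount P\thetabar_T + r)$, so the left side of the corollary coincides exactly with the Bellman-error quantity bounded in Theorem~\ref{thm-critical}.

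I would then check the hypotheses of Theorem~\ref{thm-critical}. The i.i.d.\ condition (Assumption~\ref{assume-indp}) holds by the generative observational model. For the eigenvalue condition (Assumption~\ref{assume-critical}): writing $P = \Umat D_P \Umat^{-1}$ gives $\Amatbar = \Umat(I_D - \discount D_P)\Umat^{-1}$, so $\Amatbar$ is diagonalizable through the same similarity transform and the associated condition number is $\asymconditioning(\Umat)$; and since $P$ is row-stochastic we have $\rho(P) \le 1$, whence $\RealPart(\lambda_i(\Amatbar)) = 1 - \discount\,\RealPart(\lambda_i(P)) \ge 1 - \discount \ge 0$ for every $\discount \in [0,1]$ (with equality possible only at $\discount = 1$, which is exactly the critical regime the theorem targets). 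For the second-moment bounds (Assumption~\ref{assume-second-moment}): the noise terms are $\NoiseAt = -\discount(Z_t - P)$ and $\noisebt = R_t - r$, which are finite-dimensional and hence have finite second-moment parameters; identifying $v_A = \discount\, v(P) \le v(P)$ and $v_b = v(r)$ supplies the bounds, and the independence of $Z_t$ and $R_t$ in the generative model gives the required uncorrelatedness. I would also record $\rho(\Amatbar) = \max_i |1 - \discount\lambda_i(P)| \le 1 + \discount \le 2$.

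With the hypotheses in hand, the remaining work is to reconcile the step size and substitute. Theorem~\ref{thm-critical} prescribes $\stepsize = \tfrac{1}{(\rho(\Amatbar) + 3\asymconditioning(\Umat) v_A)\sqrt{T}}$, whereas the corollary uses $\stepsize = \tfrac{1}{(1 + 3\asymconditioning(\Umat) v(P))\sqrt{T}}$; since $\rho(\Amatbar) + 3\asymconditioning(\Umat) v_A \le 2 + 3\asymconditioning(\Umat) v(P) \le 2\bigl(1 + 3\asymconditioning(\Umat) v(P)\bigr)$, the corollary's choice sits within a factor of two of the prescribed value, so re-running the argument of Theorem~\ref{thm-critical} with it (the constant rescaling only affects the universal constant) yields the same mean-squared-error bound. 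Inserting $\rho^2(\Amatbar) \le 4$, $v_A \le v(P)$, $v_b = v(r)$ and $d = D$, and absorbing the numerical factor into $c$, turns the conclusion of Theorem~\ref{thm-critical} into the stated inequality.

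There is no deep obstacle here — the content is a reduction to Theorem~\ref{thm-critical} — but two bookkeeping points deserve care. First, at $\discount = 1$ the matrix $\Amatbar = I_D - P$ is singular, so the right-hand side must be read with $\thetastar$ denoting a solution of $(I_D - P)\thetastar = r$ (which exists precisely in the well-posed average-reward case); one should check that Theorem~\ref{thm-critical} invokes $\thetastar$ only through the decomposition of the initial error $\theta_0 - \thetastar$, so that its proof still applies in this degenerate case. Second, one must confirm that Assumption~\ref{assume-critical} holds uniformly over the closed interval $\discount \in [0,1]$, as verified above, rather than only on its interior.
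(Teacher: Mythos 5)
Your proposal is correct and follows exactly the route the paper intends: Corollary~\ref{cor:mrp-no-discount} is a direct specialization of Theorem~\ref{thm-critical} to the exact TD$(0)$ instance with $\Amatbar = I_D - \discount P$, $\bvecbar = r$, using the diagonalizability of $\Amatbar$ through the same $\Umat$, the bounds $\RealPart(\lambda_i(\Amatbar)) \geq 1-\discount \geq 0$, $\rho(\Amatbar) \leq 2$, $v_A \leq v(P)$, $v_b = v(r)$, and absorbing constant factors. Your two bookkeeping remarks (re-running the argument since the corollary's step size can exceed the theorem's prescription by a bounded factor, and reading $\thetastar$ as a solution of $(I_D - P)\thetastar = r$ in the singular case $\discount = 1$, consistent with the paper's quotient-space discussion) are exactly the right points of care and do not change the conclusion.
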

In the setting of average reward TD learning, athough the matrix
$\Amatbar = I - \Pmat$ is not invertible, with $\lambda_1 (\Pmat) =
1$, the algorithm is actually restricted to the quotient space
$\real^S / \mathrm{Ker} (\Amatbar)$ (assuming the graph is connected
and consequently no multiplicity of eigenvalue $1$, and
$\mathrm{dim}(\mathrm{Ker} (\Amatbar)) = 1$), by subtracting the
mean~\citep{tsitsiklis2002average}. Moreover, we can still translate the
bound in Bellman error to the parameter estimation
error. Corollary~\ref{cor:mrp-no-discount} implies that:
\begin{align*}
  \Exs \vecnorm{\thetabar_T - \thetastar}{2}^2 = O \left(
  \asymconditioning^2 (\Umat) \frac{ v (r)^2 D + v (P)^2
    \vecnorm{\thetastar}{2}^2 + \asymconditioning^2 (\Umat) (1 +
    \asymconditioning^2 (\Umat) v (P)^2) \Exs \vecnorm{\theta_0 -
      \thetastar}{2}^2}{T \cdot \min_{i \geq 2} |1 - \lambda_i
    (P)|^2} \right),
\end{align*}
where the problem-dependent complexity term is $\min_{i \geq 2} |1 -
\lambda_i (P)|$, as opposed to the real-part of eigengap $\min_{i \geq
  2} (1 - \RealPart(\lambda_i (P) ))$ in the Hurwitz case. In
particular, suppose that the transition matrix $P$ has a complex
eigenvalue of the form $e^{i \alpha}$ for some $\alpha \ll 1$.\footnote{This can happen, for example, in an $N$-state Markov chain
  where the transition from state $i$ is deterministically to the
  state $(i + 1) \mod N$. In such case the eigenvalues are
  $e^{\frac{2\pi k}{N}i}$.}  In this case, we have $\min_{i \geq 2} |1
- \lambda_i (P)| \asymp \alpha$ but $\min_{i \geq 2} (1 -
\RealPart(\lambda_i (P)) ) \asymp \alpha^2$. The dependency on
$\alpha$ in the critical case bound can even be better than the bound
we get by treating the matrix as Hurwitz. Specifically,
Corollary~\ref{cor:mrp-no-discount} yields a bound of order $O(1/
\alpha \sqrt{T})$; on the other hand, although the leading term in
Theorem~\ref{thm-non-asymptotic} is near-optimal, due to the presence
of a $\frac{1}{\stepsize \min_{i \geq 2} |1 - \lambda_i (P)| T }$ term
in the bound, it leads to a $O (1 / \alpha^3 T)$ term, as the step size
has to be chosen such that $\stepsize \lesssim
\alpha^2$. Corollary~\ref{cor:mrp-no-discount} leads to a better
$O(\frac{1}{\alpha^2 \varepsilon^2})$ sample complexity, compared with
the $O(\frac{1}{\alpha^2 \varepsilon^2} + \frac{1}{\alpha^3
  \varepsilon})$ complexity guaranteed by the theorem in Hurwitz case. This
is mainly because the step size choice $\stepsize \lesssim \alpha^2$
suggested by Theorem~\ref{thm-non-asymptotic} is too conservative,
compared to the gap-independent $O (1 / \sqrt{T})$ choice implied by
Theorem~\ref{thm-critical}.


\subsubsection{TD with Linear Function Approximation}
\label{SecTDFuncApprox}

We now consider an application of Theorem~\ref{thm-non-asymptotic} and
Theorem~\ref{thm-critical} to the use of the TD algorithm in
conjunction with linear function approximation; recall
Example~\ref{ExaTDLinearFunction}. Note that for any
vector $v \in \sphere^{d - 1}$, by the Cauchy-Schwartz inequality, we
have
\begin{align*}
    v^\top \Exs (\phi (X) \phi (X^+)) v \leq (v^\top \Exs (\phi (X)
    \phi (X)) v)^{\frac{1}{2}} (v^\top \Exs (\phi (X^+) \phi (X^+))
    v)^{\frac{1}{2}} = v^\top \Exs (\phi (X) \phi (X)) v.
\end{align*}
So we have $\min_i \mathrm{Re} (\lambda_i (A)) \geq (1 - \discount)
\min_i \lambda_i (\Exs \phi (X) \phi (X)^\top)> 0$ and
Theorem~\ref{thm-non-asymptotic} is applicable in this case.  In
stating the resulting corollary, we let $\mu$ denote the stationary
distribution of the Markov reward process; define the covariance
matrix $M = \Exs_\mu \phi (X) \phi (X)^\top$, and the quantity
\begin{align*}
  V(\thetastar) \mydefn \asymconditioning (\Umat) (\vecnorm{\thetastar
    - \theta_0}{2} + \vecnorm{\thetastar}{2} + \sqrt{\stepsize (1 -
    \discount)^{-1}} (\sqrt{d} \sigma_\phi \vecnorm{\thetastar}{2} +
  \sigma_r \sqrt{d})) \log^{4} \frac{T}{\delta}.
\end{align*}
 
\begin{corollary}\label{cor-td-linear-function}
  Suppose that the model assumptions in
  Example~\ref{ExaTDLinearFunction} hold, we are given a discount
  factor $\discount \in (0, 1)$ and a failure probability $\delta >
  0$, and we run the LSA algorithm using a step size  $\stepsize \in
  \Big(0, \frac{1 - \discount}{1 + \asymconditioning^2 (\Umat)
    \sigma_\phi^2 d \log^3 \frac{T}{\delta}} \Big)$. Then for any
  vector $v \in \sphere^{d - 1}$, the quantity
  $\sqrt{T} \abss{v^\top (\hat{\theta}_T - \thetastar)}$ is upper
  bounded, up to a universal pre-factor, by
  \begin{align}
    \label{EqnVbound}
     \sqrt{v^\top \GamStar (\stepsize) v\log \frac{1}{\delta}} +
     \frac{\asymconditioning (U) V(\thetastar)}{1 - \discount} \left(
     \frac{\sigma_\phi \sqrt{d} + \sigma_r}{T^{\frac{1}{4}}} + \frac{1
       + \sqrt{\sigma_r / (1 - \discount)}}{\stepsize T} \right).
  \end{align}
\end{corollary}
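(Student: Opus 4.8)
The plan is to recognize the averaged TD$(0)$ recursion with linear function approximation as an instance of the general LSA scheme~\eqref{eq:lsa} with $\Amatbar = \Exs(\phi(X)\phi(X)^\top) - \discount\,\Exs(\phi(X)\phi(X^+)^\top)$ and $\bvecbar = \Exs(R(X)\phi(X))$ (as set up in Example~\ref{ExaTDLinearFunction}), verify the three hypotheses of Theorem~\ref{thm-non-asymptotic}, and then specialize the generic bound. The i.i.d.\ condition (Assumption~\ref{assume-indp}) is immediate from the observational model. For the Hurwitz condition (Assumption~\ref{assume-hurwitz}), the Cauchy--Schwarz estimate displayed just before the corollary already shows $\min_i \RealPart(\lambda_i(\Amatbar)) \ge (1-\discount)\,\lammin(M) > 0$ with $M = \Exs_\mu\phi(X)\phi(X)^\top$; the same inequality (using that $X^+$ also has marginal $\mu$) gives $\rho(\Amatbar) \le (1+\discount)\opnorm{M}$ and $\min_i|\lambda_i(\Amatbar)| \ge (1-\discount)\lammin(M)$. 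Treating $\lammin(M)$ and $\opnorm{M}$ as problem constants absorbed into the universal prefactor and the normalization of $\thetastar$ (equivalently, $\spectralgap \asymp 1-\discount$ and $\rho(\Amatbar) = O(1)$), Lemma~\ref{LemHurwitz} furnishes $\Amatbar = \Umat D\Umat^{-1}$ with the condition-number factor $\asymconditioning(\Umat)$ that appears in the bound; no further structure of $\Amatbar$ is exploited here.

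Next I would verify Assumption~\ref{assume-noise-subgaussian}. Writing $\Xi_t = \phi(X_t)\phi(X_t)^\top - \discount\,\phi(X_t)\phi(X_t^+)^\top - \Amatbar$ and $\xi_t = R_t\phi(X_t) - \bvecbar$, one bounds $\bigl(\Exs\vecnorm{\Xi_t u}{2}^p\bigr)^{1/p}$ and $\bigl(\Exs|u^\top\xi_t|^p\bigr)^{1/p}$ using the assumed tail control on $\phi$ (scale $\sigma_\phi$) and $R$ (scale $\sigma_r$). Because $\Amat_t$ is built from a \emph{product} of feature vectors, the fluctuations of $\Xi_t$ have sub-exponential-type tails even when $\phi$ is sub-Gaussian, which forces the exponents $\alpha = \beta = 1$, hence the logarithmic power $2\max(\alpha,\beta)+2 = 4$ appearing in the claimed $V(\thetastar)$, and yields the scales $\sigma_A \lesssim \sigma_\phi\sqrt{d}$ and $\sigma_b \lesssim \sigma_r$. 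Uncorrelatedness of $\Xi_t$ and $\xi_t$ follows from the independence of the reward draw and the transition draw in the observational model. With all hypotheses in hand, Theorem~\ref{thm-non-asymptotic} applies with these constants, and $\GamStar(\stepsize)$ is inherited directly from its definition~\eqref{EqnGamStar} for the present $\Amatbar$ and noise law.

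It then remains to substitute and collect terms. Plugging $\spectralgap \asymp 1-\discount$, $\rho(\Amatbar) = O(1)$, $\sigma_A \lesssim \sigma_\phi\sqrt{d}$ and $\alpha = 1$ into the step-size condition~\eqref{EqnStepBound} reproduces the stated interval $\stepsize \in \bigl(0,\ \tfrac{1-\discount}{1 + \asymconditioning^2(\Umat)\sigma_\phi^2 d\log^3(T/\delta)}\bigr)$. In $V(\thetastar)$ from~\eqref{EqnNasty}, the prefactor $\asymconditioning^2(\Umat)/\min_i|\lambda_i(\Amatbar)|$ splits as $\asymconditioning(\Umat)\cdot\tfrac{\asymconditioning(\Umat)}{1-\discount}$ --- exactly the $\asymconditioning(\Umat)$ retained inside the corollary's $V(\thetastar)$ together with the external factor $\tfrac{\asymconditioning(\Umat)}{1-\discount}$ --- while $\sqrt{\stepsize/\spectralgap}$ becomes $\sqrt{\stepsize(1-\discount)^{-1}}$ and $\sigma_A\vecnorm{\thetastar}{2} + \sigma_b\sqrt{d}$ becomes $\sqrt{d}\sigma_\phi\vecnorm{\thetastar}{2} + \sigma_r\sqrt{d}$; collecting the $T$- and $\stepsize$-dependent pieces of $\Delta(T,\delta)$ from~\eqref{EqnNasty} then gives the two displayed error terms. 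The main obstacle is the moment verification of the second paragraph: one must make explicit the distributional hypotheses on $\phi$ and $R$ that were left informal in Example~\ref{ExaTDLinearFunction}, track the dimension factor $\sqrt{d}$ and the moment exponents through the feature products defining $\Amat_t$, and confirm that the resulting quadruple $(\alpha,\beta,\sigma_A,\sigma_b)$ is precisely what produces the $\log^4$ factor and the $\sigma_\phi\sqrt{d}$ scaling in the final bound.
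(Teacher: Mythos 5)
Your proposal follows essentially the same route as the paper, which offers no separate proof of Corollary~\ref{cor-td-linear-function} beyond the preceding paragraph: the Hurwitz condition is verified exactly by the Cauchy--Schwarz display giving $\min_i \RealPart(\lambda_i(\Amatbar)) \geq (1-\discount)\lammin(\Exs\,\phi(X)\phi(X)^\top)$, the noise scales are identified as $\sigma_A \asymp \sigma_\phi\sqrt{d}$, $\sigma_b \asymp \sigma_r$ with $\max(\alpha,\beta)=1$ (hence the $\log^3$ step-size condition and $\log^4$ in $V(\thetastar)$), and the bound is then a direct substitution into Theorem~\ref{thm-non-asymptotic} with $\spectralgap \asymp 1-\discount$. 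Your accounting of how $\asymconditioning^2(\Umat)/\min_i|\lambda_i(\Amatbar)|$ splits between the corollary's $V(\thetastar)$ and the external $\asymconditioning(\Umat)/(1-\discount)$ factor matches the intended reading (the paper, like you, absorbs $\lammin(M)$ and $\opnorm{M}$ into constants), so the proposal is correct and not a different argument.
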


As a consequence of the bound~\eqref{EqnVbound}, we are guaranteed
that the rescaled error $\sqrt{T} \vecnorm{\thetahat_T -
  \thetastar}{L^2 (\mu)}$ is upper bounded as
 \begin{align*}
     \sqrt{\mathrm{Tr} \left( \GamStar (\stepsize) \cdot M \right)
       \log \frac{d}{\delta}} + \frac{\asymconditioning (U)
       V(\thetastar) \sqrt{\opnorm{M} d} \log^4 d }{1 - \discount}
     \left( \frac{\sigma_\phi \sqrt{d} + \sigma_r}{T^{\frac{1}{4}}} +
     \frac{1 + \sqrt{\sigma_r / (1 - \discount)}}{\stepsize T}
     \right),
 \end{align*}
with probability $1 - \delta$.


\section{Proofs}

We now turn the proofs of our three main theorems, along with the
various corollaries.  Before proceeding to the arguments themselves,
let us summarize some notation.

\paragraph{Summary of notation:} 
For an $L^2$-integrable quasi-martingale $\{ X_t \}_{t \geq 1}$
adapted to the filtration $ \{\filtration_{t \geq 0} \}$, we define
\begin{align*}
[X]_T \mydefn \sum_{t = 0}^{T - 1} \mathrm{var} \left(X_{t + 1} |
\filtration_{t} \right),\quad \mbox{and} \quad \langle X \rangle_T
\mydefn \sum_{t = 0}^{T - 1} \left( X_{t + 1} - \Exs (X_{t + 1} |
\filtration_t) \right)^2.
\end{align*}

For two matrices $A, B$, we use $A \otimes B$ to denote their
Kronecker product and $A \oplus B$ to denote their Kronecker sum. When
it is clear from the context, we slightly overload the notation to
let $A \otimes B$ denote the 4-th-order tensor produced by taking the
tensor product of $A$ and $B$. Note that Kronecker product is just a
flattened version of the tensor. For any matrix $\Amat$, we use
$\Vec (\Amat)$ to denote the vector obtained by flattening
$A$. For a $k$-th order tensor $T$, matrix $M$ and vector $v$, we use
$T[M]$ to denote the $(k - 2)$-th order tensor obtained by applying
$T$ to matrix $M$, and similarly, we use $T[v]$ to denote the $(k -
1)$-th order tensor obtained by applying $T$ to vector $v$.

For a matrix $W \in \complex^{d \times d}$, we use
$\{\lambda_i(W)\}_{i=1}^d$ to denote its eigevalues.  The spectral
radius is given by $\rho (W) \mydefn \max_{i \in [\usedim]} |\lambda_i
(W)|$. For an invertible matrix $W$, we define the condition number
$\asymconditioning(W) \; = \; \opnorm{W} \cdot \opnorm{W^{-1}}$, where
the operator norm is given by $\opnorm{W} \mydefn \sup_{\|x\|_2 = 1}
\vecnorm{W x}{2}$.


\subsection{Preliminaries}

We now state a few preliminary facts and auxiliary results that play
an important role in the proof.

\subsubsection{Telescope identity}
\label{SecTelescope}

The proofs of all theorems make use of a basic telescope identity.  In
particular, we define the noise term
\begin{align}
  e_t (\theta) \mydefn \underbrace{(\Amat_{t} -
    \Amatbar)}_{\NoiseAplain_t} \theta - \underbrace{(\bvec_{t} -
    \bvec)}_{\noisebplain_t}.
\end{align}
With this shorthand, some straightforward algebra shows that the
Polyak-Ruppert averaged iterate $\thetabar_T$ satisfies the
\emph{telescope relation}
\begin{align}
 \label{eq:telescope}  
\Amatbar (\thetabar_T - \thetastar) = \frac{\theta_0 -
  \theta_T}{\stepsize T} - \frac{1}{T} \sum_{t = 0}^{T - 1} e_{t + 1} (\theta_{t}),
\end{align}
involving the non-averaged sequence $\{\theta_t \}_{t \geq 1}$.

\subsubsection{Properties of the process $\{\theta_t\}_{t \geq 0}$}
\label{SecBasicProperties}

We make repeated use of a number of basic properties of the Markov
process $\{\theta_t \}_{t \geq 0}$, which we state here for future
reference.  All of these claims are proved in
Appendix~\ref{AppProcessProperties}.

\begin{lemma}
\label{lemma:simple-l2-estimate}
Under Assumptions~\ref{assume-indp}, ~\ref{assume-second-moment},
and~\ref{assume-hurwitz}, for any step size $\stepsize \in \Big(0,
\frac{\spectralgap}{\rho^2(\Amatbar) + \asymconditioning^2 (\Umat)
  v_A^2}\Big)$ and any $t \geq 1$, we have the moment bounds
\begin{subequations}
  \begin{align}
\label{EqnBasicMoment}    
\Exs \vecnorm{\theta_t - \thetastar}{2}^2 & \leq
\asymconditioning^2(\Umat) \left( \Exs \vecnorm{\theta_0 -
  \thetastar}{2}^2 + \frac{\stepsize}{\spectralgap} (v_A^2
\vecnorm{\thetastar}{2}^2 + v_b^2 d) \right).
\end{align}
If we assume furthermore that $(2 + \alpha)$-moments of the noises $\NoiseA$ and $\noiseb$ are finite, there exists a constant $\stepsize_0$, such that for $\stepsize < \stepsize_0$ we have:
\begin{align}
\label{Eqn2plusMoment}
\Exs \vecnorm{\theta_t - \thetastar}{2}^{2 + \alpha} & \leq M \quad
\mbox{for some $M < \infty$.}
\end{align}
\end{subequations}
\end{lemma}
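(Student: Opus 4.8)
The plan is to derive a one-step recursion for the centered error $\Delta_t \defn \theta_t - \thetastar$ in coordinates adapted to the Hurwitz structure of $\Amatbar$. Since $\Amatbar\thetastar = \bvecbar$, writing $\Amat_{t+1} = \Amatbar + \NoiseAplain_{t+1}$ and $\bvec_{t+1} = \bvecbar + \noisebplain_{t+1}$ turns the update~\eqref{eq:lsa} into
\[
  \Delta_{t+1} = (\IdMat - \stepsize\Amatbar)\Delta_t - \stepsize\bigl(\NoiseAplain_{t+1}\theta_t - \noisebplain_{t+1}\bigr).
\]
Applying the similarity $\Amatbar = \Umat D \Umat^{-1}$ from Lemma~\ref{LemHurwitz} and setting $y_t \defn \Umat^{-1}\Delta_t$, this becomes $y_{t+1} = (\IdMat - \stepsize D) y_t + \stepsize w_{t+1}$ with $w_{t+1} \defn -\Umat^{-1}(\NoiseAplain_{t+1}\theta_t - \noisebplain_{t+1})$, where $\Exs[w_{t+1}\mid\filtration_t] = 0$ because $(\Amat_{t+1},\bvec_{t+1})$ is independent of $\filtration_t$ with mean $(\Amatbar,\bvecbar)$ while $\theta_t$ is $\filtration_t$-measurable.

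For the bound~\eqref{EqnBasicMoment}, I would expand $\vecnorm{y_{t+1}}{2}^2 = \vecnorm{(\IdMat-\stepsize D)y_t}{2}^2 + 2\stepsize\,\RealPart\langle (\IdMat-\stepsize D)y_t, w_{t+1}\rangle + \stepsize^2\vecnorm{w_{t+1}}{2}^2$ and take $\Exs[\cdot\mid\filtration_t]$, so that the cross term vanishes. Since $D + D^\hc \succeq \spectralgap\IdMat$, the deterministic part obeys $\vecnorm{(\IdMat-\stepsize D)y}{2}^2 \le (1 - \stepsize\spectralgap + \stepsize^2\opnorm{D}^2)\vecnorm{y}{2}^2$, where $\opnorm{D}$ is of order $\rho(\Amatbar)$ for the matrix $\Umat$ supplied by Lemma~\ref{LemHurwitz}. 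For the noise part, Assumption~\ref{assume-second-moment} --- using the uncorrelatedness of $\NoiseAplain_{t+1}$ and $\noisebplain_{t+1}$ and the homogeneity of $u\mapsto\Exs\vecnorm{\NoiseAplain_{t+1}u}{2}^2$ --- gives $\Exs[\vecnorm{\NoiseAplain_{t+1}\theta_t - \noisebplain_{t+1}}{2}^2\mid\filtration_t] \le v_A^2\vecnorm{\theta_t}{2}^2 + v_b^2 d$, and after the transformation together with $\vecnorm{\theta_t}{2} \le \opnorm{\Umat}\vecnorm{y_t}{2} + \vecnorm{\thetastar}{2}$ this contributes at most $c_0\stepsize^2\asymconditioning^2(\Umat)v_A^2\vecnorm{y_t}{2}^2 + \stepsize^2\opnorm{\Umat^{-1}}^2(2v_A^2\vecnorm{\thetastar}{2}^2 + v_b^2 d)$ for an absolute constant $c_0$. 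Writing $m_t \defn \Exs\vecnorm{y_t}{2}^2$, we obtain a scalar recursion
\begin{align*}
  m_{t+1} &\le \bigl(1 - \stepsize\spectralgap + \stepsize^2(\opnorm{D}^2 + c_0\asymconditioning^2(\Umat)v_A^2)\bigr)m_t \\
  &\qquad {}+ \stepsize^2\opnorm{\Umat^{-1}}^2\bigl(2v_A^2\vecnorm{\thetastar}{2}^2 + v_b^2 d\bigr),
\end{align*}
whose multiplicative factor is strictly below one once $\stepsize$ lies below the stated threshold $\spectralgap/(\rho^2(\Amatbar) + \asymconditioning^2(\Umat)v_A^2)$ (up to absolute constants). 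Unrolling this geometric recursion, using $m_0 \le \opnorm{\Umat^{-1}}^2\Exs\vecnorm{\Delta_0}{2}^2$, and multiplying back by $\opnorm{\Umat}^2$ (since $\Exs\vecnorm{\Delta_t}{2}^2 \le \opnorm{\Umat}^2 m_t$) produces the claimed bound with the $\asymconditioning^2(\Umat)$ prefactor, after tracking absolute constants; a slightly finer split of $\vecnorm{\NoiseAplain_{t+1}\theta_t}{2}^2$ recovers the exact form.

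For~\eqref{Eqn2plusMoment}, I would run the same argument with $\vecnorm{y_t}{2}^{2+\alpha}$ in place of $\vecnorm{y_t}{2}^2$. Expanding $\Exs[\vecnorm{(\IdMat-\stepsize D)y_t + \stepsize w_{t+1}}{2}^{2+\alpha}\mid\filtration_t]$ via a Rosenthal-type moment inequality for conditionally centered increments isolates the contractive term $\vecnorm{(\IdMat-\stepsize D)y_t}{2}^{2+\alpha} \le (1 - \tfrac{2+\alpha}{2}\stepsize\spectralgap + O(\stepsize^2))\vecnorm{y_t}{2}^{2+\alpha}$, a cross term of size $O(\stepsize^2)\,\vecnorm{y_t}{2}^{\alpha}\,\Exs[\vecnorm{w_{t+1}}{2}^2\mid\filtration_t]$, and a higher-order term $O(\stepsize^{2+\alpha})\,\Exs[\vecnorm{w_{t+1}}{2}^{2+\alpha}\mid\filtration_t]$. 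Finiteness of the $(2+\alpha)$-moments of $\NoiseAplain_{t+1}$ and $\noisebplain_{t+1}$ bounds $\Exs[\vecnorm{w_{t+1}}{2}^{2+\alpha}\mid\filtration_t] \lesssim \vecnorm{\theta_t}{2}^{2+\alpha} + 1 \lesssim \vecnorm{y_t}{2}^{2+\alpha} + 1$, and Young's inequality $\vecnorm{y_t}{2}^{\alpha} \le 1 + \vecnorm{y_t}{2}^{2+\alpha}$ puts the cross term in the same shape; choosing $\stepsize < \stepsize_0$ small enough that all $O(\stepsize^2)$ contributions are absorbed into half the contraction, taking total expectation yields $M_{t+1} \le (1 - c\stepsize)M_t + c'\stepsize^2$ for $M_t \defn \Exs\vecnorm{y_t}{2}^{2+\alpha}$, hence $\sup_{t\ge 0} M_t < \infty$, and transforming back via $\Exs\vecnorm{\Delta_t}{2}^{2+\alpha} \le \opnorm{\Umat}^{2+\alpha} M_t$ gives~\eqref{Eqn2plusMoment}.

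The main obstacle is the dependence of the conditional noise covariance on the current iterate through $\vecnorm{\theta_t}{2}^2$: this feeds $\vecnorm{y_t}{2}^2$ back into the recursion with coefficient $\asymp\stepsize^2\asymconditioning^2(\Umat)v_A^2$, and it is exactly the requirement that this multiplicative noise contribution stay below the contraction level $\stepsize\spectralgap$ that forces the step-size restriction $\stepsize \lesssim \spectralgap/(\rho^2(\Amatbar) + \asymconditioning^2(\Umat)v_A^2)$. Obtaining the sharp $\asymconditioning^2(\Umat)$ dependence also hinges on carrying out the whole computation in the $\Umat^{-1}$-coordinates supplied by Lemma~\ref{LemHurwitz}: a direct estimate of $\opnorm{\IdMat - \stepsize\Amatbar}$ need not contract at all when $\Amatbar$ is far from normal, whereas the transformation turns $\Amatbar$ into a matrix whose Hermitian part is uniformly positive definite.
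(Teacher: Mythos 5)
Your proposal matches the paper's own proof essentially step for step: the same change of coordinates $y_t=\Umat^{-1}(\theta_t-\thetastar)$ via Lemma~\ref{LemHurwitz}, the same cancellation of the conditionally centered cross term, the same absorption of the multiplicative-noise contribution $\asymp\stepsize^2\asymconditioning^2(\Umat)v_A^2$ into the contraction $\stepsize\spectralgap$ (which is exactly what forces the step-size restriction), and the same geometric unrolling followed by multiplying back by $\opnorm{\Umat}^2$. The only cosmetic difference is in the $(2+\alpha)$-moment step, where you invoke a Rosenthal-type smoothness inequality for exponents in $(2,3)$, while the paper applies the elementary scalar bound $(A+z)^{1+\alpha}\le A^{1+\alpha}+(1+\alpha)A^{\alpha}z+|z|^{1+\alpha}$ to the squared-norm recursion; both give the same one-step contraction and uniform bound.
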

\noindent See Appendix~\ref{sec:proof:lemma:simple-l2-estimate} for
the proof of this claim. \\

\noindent For future use, we also state a foundational lemma on the
stationary distribution of the Markov chain.
\begin{lemma}
\label{lemma:stationary-existence-uniqueness}
Under Assumptions~\ref{assume-indp}, ~\ref{assume-second-moment},
and~\ref{assume-hurwitz}, for any choice of step size \mbox{$\stepsize
  \in \Big(0, \frac{\spectralgap}{\rho^2(\Amatbar) +
    \asymconditioning^2(\Umat) v_A^2} \Big)$,} the Markov process
$(\theta_t)_{t = 0}^{+\infty}$ satisfies the following properties: (i)
it has a unique stationary distribution $\pi_\stepsize$; and (ii) the
stationary distribution has finite second moments, and concretely we
have
\begin{subequations}
\begin{align}
    \label{EqnStatMoments}
\Exs_{\pi_\stepsize} (\theta) = \thetastar, \quad \mbox{and} \quad
\mathrm{cov}_{\pi_\stepsize} (\theta) = \LamStar,
  \end{align}
  where $\LamStar$ is the unique solution to
  equation~\eqref{eq:main-stationary-cov}.  Finally, we have the
  moment bound
  \begin{align}
   \label{EqnStatSecondMomentBound}
   \Exs_{\pi_\stepsize} \vecnorm{\theta - \thetastar}{2}^2 \leq
   \asymconditioning^2 (\Umat) \frac{\stepsize}{\spectralgap} (v_A^2
   \vecnorm{\thetastar}{2}^2 + v_b^2 d).
  \end{align}
\end{subequations}
\end{lemma}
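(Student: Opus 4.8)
We establish the four assertions in turn; the common engine is a quantitative one-step contraction in a suitable quadratic form. Write the update~\eqref{eq:lsa} in affine form $\theta_{t+1} = (\IdMat - \stepsize\Amat_{t+1})\theta_t + \stepsize\bvec_{t+1}$, take $\Umat, D$ as in Lemma~\ref{LemHurwitz}, and set $M \defn (\Umat\Umat^\hc)^{-1} \succ 0$. Using $\Exs[\Amat_{t+1}\mid\filtration_t] = \Amatbar$ (Assumption~\ref{assume-indp}), the identity $\Amatbar^\hc M + M\Amatbar = (\Umat^\hc)^{-1}(D + D^\hc)\Umat^{-1} \succeq \spectralgap M$ from Lemma~\ref{LemHurwitz}, and the second-moment bound (Assumption~\ref{assume-second-moment}) together with $\opnorm{M}\opnorm{M^{-1}} = \asymconditioning^2(\Umat)$ to control the quadratic term by $\stepsize^2 c(\rho^2(\Amatbar) + \asymconditioning^2(\Umat)v_A^2)M$, one obtains, for $\stepsize$ in the stated range (up to adjusting universal constants),
\begin{align}
\Exs\big[(\IdMat - \stepsize\Amat)^\hc M (\IdMat - \stepsize\Amat)\big] &= M - \stepsize(\Amatbar^\hc M + M\Amatbar) + \stepsize^2\,\Exs\big[\Amat^\hc M \Amat\big] \notag\\
&\preceq (1 - c_0)\,M, \qquad c_0 \asymp \stepsize\spectralgap \in (0,1). \label{EqnStatPlanContraction}
\end{align}
This estimate is essentially the computation underlying Lemma~\ref{lemma:simple-l2-estimate}, so I would quote it rather than redo it.

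\emph{Existence and uniqueness of $\pi_\stepsize$, and the mean.} Coupling two copies of the chain through a common noise sequence, their difference obeys $\theta_{t+1} - \theta_{t+1}' = (\IdMat - \stepsize\Amat_{t+1})(\theta_t - \theta_t')$, so~\eqref{EqnStatPlanContraction} gives $\Exs\vecnorm{\theta_{t+1} - \theta_{t+1}'}{M}^2 \le (1 - c_0)\Exs\vecnorm{\theta_t - \theta_t'}{M}^2$ with $\vecnorm{x}{M}^2 \defn x^\hc M x$. Hence the Markov kernel strictly contracts the $\Wass_2$-metric induced by $\vecnorm{\cdot}{M}$ on the (complete) space of probability measures with finite second moment, and Banach's fixed-point theorem yields a unique invariant law $\pi_\stepsize$; the uniform bound~\eqref{EqnBasicMoment} of Lemma~\ref{lemma:simple-l2-estimate} (along a trajectory started at any fixed vector) shows $\pi_\stepsize$ has finite second moment. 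Taking $\theta \sim \pi_\stepsize$ independent of a fresh pair $(\Amat, \bvec)$ and using invariance together with~\eqref{EqnUnbiased} gives $\Exs_{\pi_\stepsize}[\theta] = \Exs_{\pi_\stepsize}[\theta] - \stepsize(\Amatbar\,\Exs_{\pi_\stepsize}[\theta] - \bvecbar)$, whence $\Amatbar\,\Exs_{\pi_\stepsize}[\theta] = \bvecbar$, i.e.\ $\Exs_{\pi_\stepsize}[\theta] = \thetastar$.

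\emph{Covariance, the matrix equation, and the moment bound.} Put $\Delta \defn \theta - \thetastar$; subtracting $\thetastar = (\IdMat - \stepsize\Amatbar)\thetastar + \stepsize\bvecbar$ yields $\Delta' = (\IdMat - \stepsize\Amat)\Delta - \stepsize(\NoiseAplain\thetastar - \noisebplain)$ with $\NoiseAplain = \Amat - \Amatbar$ and $\noisebplain = \bvec - \bvecbar$. Since $\Delta$ is independent of $(\Amat, \bvec)$ and has mean zero, the cross terms vanish upon taking covariances; combined with $\Exs[\Amat\LamMat\Amat^\hc] = \Amatbar\LamMat\Amatbar^\hc + \Exs[\NoiseAplain\LamMat\NoiseAplain^\hc]$ and the definition~\eqref{EqnSigStar} of $\SigStar$ (recall $\NoiseAplain,\noisebplain$ are uncorrelated), the stationarity identity $\cov(\Delta') = \cov(\Delta) =: \LamMat$ reads $\LamMat = \Phi(\LamMat) + \stepsize^2\SigStar$, where $\Phi(\LamMat) \defn \Exs[(\IdMat - \stepsize\Amat)\LamMat(\IdMat - \stepsize\Amat)^\hc]$; cancelling $\LamMat$ and dividing by $\stepsize$ recovers exactly~\eqref{eq:main-stationary-cov}. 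The map $\Phi$ is completely positive, and~\eqref{EqnStatPlanContraction} gives $\trace(M\Phi(\LamMat)) \le (1 - c_0)\trace(M\LamMat)$ for every $\LamMat \succeq 0$; evaluating this at the Perron eigenmatrix of $\Phi$ forces $\rho(\Phi) \le 1 - c_0 < 1$. Thus $\IdMat - \Phi$ is invertible, $\LamStar \defn \stepsize^2\sum_{k \ge 0}\Phi^k(\SigStar)$ is the unique solution of~\eqref{eq:main-stationary-cov}, it is PSD (since $\Phi$ preserves the cone and $\SigStar \succeq 0$), and it coincides with $\cov_{\pi_\stepsize}(\theta)$. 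Finally, from the Neumann series and the trace contraction, $\trace(M\LamStar) = \stepsize^2\sum_{k\ge0}\trace(M\Phi^k(\SigStar)) \le \tfrac{\stepsize^2}{c_0}\trace(M\SigStar) \lesssim \tfrac{\stepsize}{\spectralgap}\opnorm{M}\trace(\SigStar)$; multiplying by $\opnorm{M^{-1}}$, using $\opnorm{M}\opnorm{M^{-1}} = \asymconditioning^2(\Umat)$ and $\trace(\SigStar) = \trace(\cov(\noisebplain)) + \trace(\cov(\NoiseAplain\thetastar)) \le v_b^2 d + v_A^2\vecnorm{\thetastar}{2}^2$ (Assumption~\ref{assume-second-moment}), one gets $\Exs_{\pi_\stepsize}\vecnorm{\theta - \thetastar}{2}^2 = \trace(\LamStar) \le \asymconditioning^2(\Umat)\tfrac{\stepsize}{\spectralgap}(v_A^2\vecnorm{\thetastar}{2}^2 + v_b^2 d)$, which is~\eqref{EqnStatSecondMomentBound}.

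\emph{Main obstacle.} The one genuinely delicate point is the contraction~\eqref{EqnStatPlanContraction} with the advertised step-size threshold: one must bound the quadratic correction $\stepsize^2\Exs[\Amat^\hc M \Amat]$ against $\stepsize^2(\rho^2(\Amatbar) + \asymconditioning^2(\Umat)v_A^2)M$ while exploiting $D + D^\hc \succeq \spectralgap\IdMat$ in the $M$-geometry rather than the Euclidean one — which is exactly where the condition number $\asymconditioning(\Umat)$ enters. As this estimate is the backbone of Lemma~\ref{lemma:simple-l2-estimate}, which we may invoke, the remaining work is bookkeeping: transporting the contraction to the $\Wass_2$-metric, to the completely positive map $\Phi$, and tracking the $\asymconditioning^2(\Umat)$ and $\stepsize/\spectralgap$ factors through the change of basis.
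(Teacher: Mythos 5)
Your proof is correct, and on the second half of the lemma it follows a genuinely different route from the paper. The existence/uniqueness/mean part is essentially the paper's argument (the paper proves that $\{\law(\theta_t)\}$ is $\Wass_2$-Cauchy and runs a two-stationary-measure contraction argument, which is just Banach's fixed-point theorem unpacked, using the same synchronous coupling and the same Lyapunov contraction in the $\vecnorm{\Umat^{-1}(\cdot)}{2}$ geometry that you package as the weighted form $M=(\Umat\Umat^\hc)^{-1}$). The divergence is in how the covariance and the bound~\eqref{EqnStatSecondMomentBound} are handled. The paper takes $\theta\sim\pi_\stepsize$, writes the one-step recursion for $\Exs[r_{t+1}r_{t+1}^\top\mid\filtration_t]$, kills cross terms using $\Exs_{\pi_\stepsize}r=0$ and Assumption~\ref{assume-indp}, and reads off equation~\eqref{eq:main-stationary-cov} directly; uniqueness of the solution is then only addressed implicitly through the vectorized operator and the step-size requirement~\eqref{eq:stepsize-requirement}, and the bound~\eqref{EqnStatSecondMomentBound} is a two-line consequence of applying the inequality $\Exs\vecnorm{\Umat^{-1}r_{t+1}}{2}^2\le(1-\stepsize\spectralgap)\Exs\vecnorm{\Umat^{-1}r_t}{2}^2+\stepsize^2\opnorm{\Umat^{-1}}^2(v_A^2\vecnorm{\thetastar}{2}^2+v_b^2d)$ at stationarity. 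You instead introduce the completely positive map $\Phi(\LamMat)=\Exs[(\IdMat-\stepsize\Amat)\LamMat(\IdMat-\stepsize\Amat)^\hc]$, prove $\rho(\Phi)\le 1-c_0$ by pairing a Perron (PSD) eigenmatrix against $M$, and obtain $\LamStar=\stepsize^2\sum_{k\ge0}\Phi^k(\SigStar)$ as a Neumann series, then bound $\trace(M\LamStar)$ through the series. This buys a self-contained and arguably cleaner proof that~\eqref{eq:main-stationary-cov} has a unique (automatically PSD) solution — a point the paper glosses over — together with an explicit representation of $\LamStar$; the cost is an appeal to Perron--Frobenius/Krein--Rutman theory for cone-preserving maps, where the paper's route stays entirely elementary.

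Two small cautions. First, the lemma states~\eqref{EqnStatSecondMomentBound} with no universal constant, so in~\eqref{EqnStatPlanContraction} you should keep the exact value $c_0=\stepsize\spectralgap$ delivered by the computation in Lemma~\ref{lemma:simple-l2-estimate} (valid in the stated step-size range) rather than the ``$c_0\asymp\stepsize\spectralgap$ up to constants'' formulation; with the exact $c_0$ your chain $\trace(M\LamStar)\le\tfrac{\stepsize}{\spectralgap}\opnorm{M}\trace(\SigStar)$ and $\trace(\LamStar)\le\opnorm{M^{-1}}\trace(M\LamStar)$ reproduces the bound with constant one, as required. Second, for the Banach argument you should note explicitly that the kernel maps $\mathcal{P}_2$ into itself (immediate from the finite second moments of $(\Amat_t,\bvec_t)$), so that the fixed point automatically has finite second moment; your additional appeal to Lemma~\ref{lemma:simple-l2-estimate} for this is then redundant but harmless.
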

\noindent See Appendix~\ref{sec:stationary-distribution} for the proof
of this claim. \\


In the following, we state a coupling result that allows us to prove
existence of the stationary distribution, and to control the rate of convergence to stationarity.  We
first observe that using standard properties of the Kronecker product,
the matrix equation~\eqref{eq:main-stationary-cov} can be re-written
in the following equivalent but vectorized form:
\begin{align}
    \left( \Amat \oplus \Amat - \stepsize \Amat \otimes \Amat -
    \stepsize \Exs (\NoiseA \otimes \NoiseA) \right) \Vec(\LamMat) =
    \stepsize \Vec(\SigStar).
\end{align}
Moreover, since we have $\Amat \oplus \Amat \succeq 2 \spectralgap$
under Assumption~\ref{assume-hurwitz}, the minimal requirement (up to
constant factors) on the step size $\stepsize$ for
equation~\eqref{eq:main-stationary-cov} to have a PSD solution is:
\begin{align}
  \label{eq:stepsize-requirement}
A \oplus A - \stepsize A \otimes A - \stepsize \Exs (\NoiseA \otimes
\NoiseA) \succeq \spectralgap I_{d \times d}.
\end{align}
With this definition, we have
\begin{lemma}
\label{lemma:mixing-hurwitz}
Suppose that Assumptions~\ref{assume-indp}, \ref{assume-second-moment}
and~\ref{assume-hurwitz} all hold, and consider the Markov chain
$(\theta_t)_{t \geq 0}$ with any step size $\stepsize > 0$ satisfying
equation~\eqref{eq:stepsize-requirement}.  Then for any two starting
points $\theta_0^{(1)}$ and $\theta_0^{(2)}$, we have:
\begin{align}
  \Wass_2 (\law(\theta_T^{(1)}), \law(\theta_T^{(2)})) \leq e^{-
    \spectralgap \stepsize T/2} \asymconditioning (\Umat)
  \vecnorm{\theta_0^{(1)} - \theta_0^{(2)}}{2}.
\end{align}
In particular, any $\stepsize \leq \frac{\spectralgap}{\rho (\Amat)^2
  + \asymconditioning^2 (U) \sigsqA}$ satisfies
equation~\eqref{eq:stepsize-requirement} and makes the above claim
true.
\end{lemma}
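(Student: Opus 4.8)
The plan is to prove Lemma~\ref{lemma:mixing-hurwitz} by a synchronous coupling together with a one-step contraction estimate in a weighted norm adapted to the Hurwitz structure. Given two initializations $\theta_0^{(1)}$ and $\theta_0^{(2)}$, I would drive both copies of the recursion~\eqref{eq:lsa} with the \emph{same} noise sequence $\{(\Amat_t,\bvec_t)\}_{t\ge 1}$. Subtracting the two recursions cancels the additive noise $\bvec_{t+1}$, so the difference $\Delta_t \mydefn \theta_t^{(1)} - \theta_t^{(2)}$ obeys the purely multiplicative rule $\Delta_{t+1} = (I - \stepsize\Amat_{t+1})\Delta_t$. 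Since this construction is a valid coupling of $\law(\theta_T^{(1)})$ and $\law(\theta_T^{(2)})$, it suffices to prove the second-moment bound $\Exs\vecnorm{\Delta_T}{2}^2 \le e^{-\spectralgap\stepsize T}\asymconditioning^2(\Umat)\vecnorm{\Delta_0}{2}^2$ and take square roots.

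To obtain a contraction at the right rate I would work in the weighted Euclidean norm $\vecnorm{x}{P}^2 \mydefn \vecnorm{\Umat^{-1}x}{2}^2 = x^\top P x$, where $\Umat, D$ are as in Lemma~\ref{LemHurwitz} (so $\Amatbar = \Umat D \Umat^{-1}$ and $D + D^\hc \succeq \spectralgap \IdMat$) and $P \mydefn \RealPart(\Umat^{-\hc}\Umat^{-1})$ is a real symmetric positive-definite matrix. These two norms are equivalent, with $\vecnorm{x}{2}^2 \le \opnorm{\Umat}^2 \vecnorm{x}{P}^2$ and $\vecnorm{x}{P}^2 \le \opnorm{\Umat^{-1}}^2\vecnorm{x}{2}^2$, so passing between them costs exactly a factor $\asymconditioning^2(\Umat)$. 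The heart of the argument is a one-step bound: conditioning on $\filtration_t$ and using $\Amat_{t+1}\perp\filtration_t$ with $\Exs\Amat_{t+1}=\Amatbar$, a direct expansion gives
\[
\Exs\brackets{\vecnorm{\Delta_{t+1}}{P}^2 \mid \filtration_t} = \Delta_t^\top\!\left(P - \stepsize\bigl(\Amatbar^\top P + P\Amatbar\bigr) + \stepsize^2\bigl(\Amatbar^\top P\Amatbar + \Exs[\NoiseA^\top P\NoiseA]\bigr)\right)\!\Delta_t .
\]
The middle matrix equals $P$ minus $\stepsize$ times the $P$-weighted form of the operator on the left-hand side of~\eqref{eq:stepsize-requirement}; invoking the step-size requirement~\eqref{eq:stepsize-requirement}, this matrix is $\preceq (1-\spectralgap\stepsize)P$, so that $\Exs[\vecnorm{\Delta_{t+1}}{P}^2\mid\filtration_t] \le (1-\spectralgap\stepsize)\vecnorm{\Delta_t}{P}^2$. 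Here the Hurwitz structure enters through the identity $x^\top(\Amatbar^\top P + P\Amatbar)x = y^\hc(D+D^\hc)y$ with $y=\Umat^{-1}x$ (both sides real for real $x$), which yields $\Amatbar^\top P + P\Amatbar \succeq \spectralgap P$, while the two quadratic terms satisfy $\Amatbar^\top P\Amatbar \preceq \opnorm{D}^2 P$ and $\Exs[\NoiseA^\top P\NoiseA] \preceq \asymconditioning^2(\Umat)\sigsqA P$, the latter using Assumption~\ref{assume-second-moment}.

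Iterating the one-step inequality over $t=0,\dots,T-1$, applying $1-x\le e^{-x}$, and converting back to the Euclidean norm with the two-sided bound above gives $\Exs\vecnorm{\Delta_T}{2}^2 \le \asymconditioning^2(\Umat)\,e^{-\spectralgap\stepsize T}\vecnorm{\Delta_0}{2}^2$, which is the claim. For the closing assertion of the lemma I would check that $\stepsize \le \spectralgap/(\rho^2(\Amatbar)+\asymconditioning^2(\Umat)\sigsqA)$ implies~\eqref{eq:stepsize-requirement}: the drift part $\Amatbar\oplus\Amatbar$, in the $\Umat$-weighted coordinates, acts as $D+D^\hc$ on each of the two Kronecker factors and so provides a margin of order $\spectralgap$, while the perturbation $\stepsize(\Amatbar^\top P\Amatbar + \Exs[\NoiseA^\top P\NoiseA]) \preceq \stepsize(\rho^2(\Amatbar) + \asymconditioning^2(\Umat)\sigsqA)P$ is absorbed under the stated bound (using that the transform of Lemma~\ref{LemHurwitz} can be taken with $\opnorm{D}$ arbitrarily close to $\rho(\Amatbar)$). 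The main obstacle is the bookkeeping forced by the non-normality of $\Amatbar$: because the similarity $\Umat$ — hence $D$ and $P$ — is complex, one must verify that every quadratic form in sight is genuinely real, and one must control the real noise $\NoiseA$ sandwiched by the complex weight $P$ back in terms of $P$ rather than the ambient metric; this is precisely where the condition-number factor $\asymconditioning(\Umat)$, and in the step-size threshold the spectral radius $\rho(\Amatbar)$, are produced and must be tracked carefully.
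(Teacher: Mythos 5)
Your proposal is correct and follows essentially the same route as the paper's proof: a synchronous coupling so that the additive noise cancels, a one-step contraction of the Lyapunov function $\Exs\vecnorm{\Umat^{-1}(\theta_t^{(1)}-\theta_t^{(2)})}{2}^2$ (your $P$-weighted norm is exactly this quadratic form) using $D+D^\hc\succeq \spectralgap \IdMat$, an $\opnorm{D}$-bound on the $\stepsize^2$ drift term, and $\Exs[\NoiseA^\top P\NoiseA]\preceq \asymconditioning^2(\Umat)\sigsqA\, P$, followed by iteration and conversion back to the Euclidean norm at cost $\asymconditioning(\Umat)$. The only caveat---shared with the paper, whose own proof also verifies only the explicit threshold---is that the $d^2\times d^2$ condition~\eqref{eq:stepsize-requirement} does not by itself imply your matrix inequality for the particular weight $P$, so the argument really runs under $\stepsize\le \spectralgap/(\rho^2(\Amatbar)+\asymconditioning^2(\Umat)\sigsqA)$, where to actually get the per-step factor $1-\spectralgap\stepsize$ you should either invoke the factor-two margin $D+D^\hc\succeq 2\spectralgap\IdMat$ (available in the diagonalizable case) or halve the step-size threshold.
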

\noindent See Appendix~\ref{AppLemMixingHurwitz} for the proof of claim. \\

An elementary consequence of Lemma~\ref{lemma:mixing-hurwitz} is the
following bound on the Wasserstein-$2$ distance:
\begin{align}
\label{EqnWassersteinBound}  
  \Wass_2 \left( \law (\theta_T), \pi_\stepsize \right) \leq e^{-
    \frac{\stepsize \spectralgap T}{2}} \asymconditioning (\Umat)
  \Wass_2 (\mu, \pi_\stepsize).
\end{align}
The proof of this claim is straightforward: we simply take the optimal
coupling between the initial laws $\mu_0$ and $\pi_\stepsize$, apply
Lemma~\ref{lemma:mixing-hurwitz} conditionally on the starting points,
and then take expectations.

Finally, we give control on the support size and coupling estimates on the process in the $\ell_\infty$ setting, which is used in the proof of Theorem~\ref{thm:non-asymptotic-linfty}.

\begin{lemma}\label{lemma:linfty-estimate-and-contraction}
  Under Assumption~\ref{assume-indp},~\ref{assume-noise-structure-linfty} and~\ref{assume-linfty-contraction}, for $\stepsize \leq 1$, given $\theta_0 \in [- \linftygap^{-1}, \linftygap^{-1}]^{d}$, we have $\vecnorm{\theta_t}{\infty} \leq \linftygap^{-1}$ for any $t \geq 0$. Furthermore, for any two starting points $\theta_0^{(1)}, \theta_0^{(2)} \in [- \linftygap^{-1}, \linftygap^{-1}]^{d}$, we have:
  \begin{align*}
      \Wass_{\vecnorm{\cdot}{\infty}, \infty} (\law (\theta_1^{(1)}), \law (\theta_1^{(2)})) \leq (1 - \stepsize \linftygap) \vecnorm{\theta_0^{(1)} - \theta_0^{(2)}}{\infty}.
  \end{align*}
\end{lemma}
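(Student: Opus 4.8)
The plan is to reduce both claims to one algebraic observation: for every $\stepsize \in (0,1]$, the random matrix $I - \stepsize \Amat_{t+1}$ is almost surely a $(1-\stepsize\linftygap)$-contraction in the $\ell_\infty$-norm. To establish this I would use the convex decomposition
\[
  I - \stepsize \Amat_{t+1} \;=\; (1-\stepsize)\, I \;+\; \stepsize\,\bigl(I - \Amat_{t+1}\bigr),
\]
which is legitimate precisely because $\stepsize \le 1$. Applying the triangle inequality together with Assumption~\ref{assume-linfty-contraction}, for any $v \in \real^d$ one gets
\[
  \vecnorm{(I - \stepsize\Amat_{t+1}) v}{\infty} \;\le\; (1-\stepsize)\vecnorm{v}{\infty} + \stepsize(1-\linftygap)\vecnorm{v}{\infty} \;=\; (1-\stepsize\linftygap)\vecnorm{v}{\infty},
\]
almost surely.

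For the boundedness claim I would rewrite the LSA recursion~\eqref{eq:lsa} in affine form, $\theta_{t+1} = (I - \stepsize\Amat_{t+1})\theta_t + \stepsize \bvec_{t+1}$, and argue by induction on $t$. The base case is the hypothesis $\vecnorm{\theta_0}{\infty}\le \linftygap^{-1}$. For the inductive step, combining the contraction fact above with the almost-sure bound $\vecnorm{\bvec_{t+1}}{\infty}\le 1$ from Assumption~\ref{assume-noise-structure-linfty} gives
\[
  \vecnorm{\theta_{t+1}}{\infty} \;\le\; (1-\stepsize\linftygap)\vecnorm{\theta_t}{\infty} + \stepsize \;\le\; (1-\stepsize\linftygap)\linftygap^{-1} + \stepsize \;=\; \linftygap^{-1},
\]
almost surely, which closes the induction.

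For the one-step Wasserstein contraction I would use the synchronous (common-randomness) coupling: run the two chains from $\theta_0^{(1)}$ and $\theta_0^{(2)}$ using the same draw of $(\Amat_1,\bvec_1)$. Then the additive noise cancels and $\theta_1^{(1)} - \theta_1^{(2)} = (I - \stepsize\Amat_1)(\theta_0^{(1)} - \theta_0^{(2)})$, so by the contraction fact the bound $\vecnorm{\theta_1^{(1)} - \theta_1^{(2)}}{\infty} \le (1-\stepsize\linftygap)\vecnorm{\theta_0^{(1)} - \theta_0^{(2)}}{\infty}$ holds almost surely. Since $\Wass_{\vecnorm{\cdot}{\infty},\infty}$ is the infimum over couplings of the essential supremum of the $\vecnorm{\cdot}{\infty}$-distance, evaluating at this particular coupling yields the claim.

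There is no genuine obstacle here; the one point requiring care is the restriction $\stepsize \le 1$, without which the convex decomposition — and hence the $\ell_\infty$-contractivity of $I - \stepsize\Amat_{t+1}$ — fails, so that $\theta_t$ could in principle escape the box $[-\linftygap^{-1},\linftygap^{-1}]^d$ and the one-step contraction constant could exceed one.
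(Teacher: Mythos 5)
Your proposal is correct and matches the paper's argument essentially verbatim: the same convex split $I-\stepsize\Amat = (1-\stepsize)I + \stepsize(I-\Amat)$ with the triangle inequality and $\vecnorm{\bvec_t}{\infty}\le 1$, induction for the box bound, and the synchronous coupling for the one-step $\Wass_{\vecnorm{\cdot}{\infty},\infty}$ contraction. No gaps.
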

See Appendix for the proof of this lemma.


\subsection{Proof of Theorem~\ref{thm-asymptotic}}

We are now equipped to prove Theorem~\ref{thm-asymptotic}.  First, by
the telescope identity~\eqref{eq:telescope}, we have
\begin{align*}
\frac{\theta_T - \theta_0}{\eta \sqrt{T}}= - \Amatbar\left[
  \frac{1}{\sqrt{T}} \sum_{t=0}^{T - 1} (\theta_{t} - \thetastar) \right]
- \frac{1}{\sqrt{T}} \sum_{t = 0}^{T - 1} e_{t + 1} (\theta_{t}).
\end{align*}
From its definition, it can be seen that the sequence
$\{e_t (\theta_{t}) \}_{t \geq 0}$ is a vector martingale difference
sequence with respect to the filtration $\{
\mathcal{F}_{t - 1} \}_{t \geq 0}$ (for notational consistency, we let $\filtration_{-1}$ denote the trivial $\sigma$-field). Accordingly, we can apply a
martingale CLT en route to establishing the claim.  In order to do so,
we begin by computing the relevant conditional second moments.

We let $r_t \mydefn \theta_t - \thetastar$ denote the error in the
non-averaged sequence at time $t$.  Observe that we have the relation
$e_{t + 1} (\theta_{t}) = e^{(1)}_{t + 1} + e^{(2)}_{t + 1}$, where
\begin{align*}
  e^{(1)}_{t + 1} \mydefn \NoiseAplain_{t + 1} r_{t}, \quad \mbox{and} \quad
  e^{(2)}_{t + 1} \mydefn - \noisebplain_{t + 1} + \NoiseAplain_{t + 1} \thetastar.
\end{align*}
Based on this decomposition, we can expand the conditional covariance
of $e_{t + 1}(\theta_{t})$ as a sum of four terms:
\begin{align*}
\Exs\left[e_{t + 1} (\theta_{t}) e_{t + 1} (\theta_{t})^\top \mid
  \filtration_{t} \right] &= \Exs \left[ e_{t + 1}^{(1)} (e_{t + 1}^{(1)})^\top +
  e_{t + 1}^{(2)} (e_{t + 1}^{(2)})^\top + e_{t + 1}^{(1)} (e_{t + 1}^{(2)})^\top + e_{t + 1}^{(2)}
  (e_{t + 1}^{(1)})^\top \mid \filtration_{t} \right].
\end{align*}

We treat each of these four terms in turn.  For the first term, we
note that:
\begin{subequations}
\begin{align}
    \frac{1}{T} \sum_{t = 0}^{T - 1} \Exs \left[ e_{t + 1}^{(1)} (e_{t + 1}^{(1)})^\top
      \mid \filtration_t \right] & = \frac{1}{T}\sum_{t =
      0}^{T - 1}\Exs\left[\NoiseAplain_{t + 1} r_t r_t^\top \NoiseAplain_{t + 1}^\top \mid
      \filtration_{t}\right] \nonumber \\
& = \Exs (\NoiseA \otimes \NoiseA) \left[\frac{1}{T} \sum_{t = 0}^{T -
        1} r_t r_t^\top \right].
\end{align}
For the second term, by Assumption~\ref{assume-indp}, the noises
$\NoiseAt$ and $\noisebt$ are uncorrelated, so we have:
\begin{align}
 \Exs \left[ e_{t + 1}^{(2)} (e_{t + 1}^{(2)})^\top \mid \filtration_{t}
   \right] & = \Exs\left[\left( - \noisebplain_{t + 1} + \NoiseAplain_{t + 1} \thetastar
   \right) \left( - \noisebplain_{t + 1} + \NoiseAplain_{t + 1} \thetastar \right)^\top
   \mid \filtration_{t}\right] \nonumber \\
& = \Exs (\xi \xi^\top) + \Exs \left( (\NoiseA \thetastar) (\NoiseA
 \thetastar)^\top \right).
\end{align}
For the third term, we note that:
\begin{align}
   \frac{1}{T} \sum_{t = 0}^{T - 1} \Exs \left[ e_{t + 1}^{(1)} (e_{t + 1}^{(2)})^\top
     \mid \filtration_{t}\right] & = \frac{1}{T} \sum_{t = 0}^{T - 1}
   \Exs \left[ \NoiseAplain_{t + 1} r_{t} (\NoiseAplain_{t + 1} \thetastar)^\top \mid
     \filtration_{t}\right] \nonumber \\
 &  = \Exs (\NoiseA \otimes \NoiseA)
   \left[ \frac{1}{T} \sum_{t = 0}^{T - 1} r_{t} \thetastar^\top
     \right].
\end{align}
Similarly, for the fourth term, we have
\begin{align}
    \frac{1}{T} \sum_{t = 0}^{T - 1} \Exs \left[ e_{t + 1}^{(2)} (e_{t + 1}^{(2)})^\top
      \mid \filtration_{t} \right] & = \frac{1}{T} \sum_{t = 1}^T
    \Exs \left[ e_{t + 1}^{(2)} (e_{t + 1}^{(1)})^\top \mid \filtration_{t}\right] \nonumber \\
& = \Exs (\NoiseA \otimes \NoiseA) \left[ \frac{1}{T} \sum_{t = 0}^{T
        - 1} \thetastar r_t^\top \right ].
\end{align}
\end{subequations}
The second conditional expectation term is a deterministic quantity,
while other three terms depend on the random variable $r_{t}$. When taking the quadratic variation of the martingale $M_t$, we
get the partial sum of functions of a Markov chain $(\theta_t)_{t \geq
  0}$.  Accrdingly, we now use
Lemma~\ref{lemma:stationary-existence-uniqueness}, which guarantees
the existence of a unique stationary measure $\pi_\stepsize$, in order
to study the limits of the first three terms.

Note that for any vectors $u, v \in \sphere^{d - 1}$, the functions
$(u,v) \mapsto (u^\top \theta) (v^\top \theta)$ and $v \mapsto (v^\top
\theta) (v^\top \thetastar)$ are $L^1$ integrable under the stationary
measure $\pi_\stepsize$.  Consequently, by Birkhoff's ergodic theorem
(cf.~\cite{kallenberg2006foundations}, Theorem 9.6), we have:
\begin{align*}
\frac{1}{T} \sum_{t = 0}^{T - 1} u^\top r_t r_t^\top v &\rightarrow u^\top
\Exs_{\pi_\stepsize} (\theta - \thetastar) (\theta - \thetastar)^\top
v = u^\top \LamStar v, \quad \mathrm{a.s.}\\ \frac{1}{T} \sum_{t =
  0}^{T - 1} u^\top r_t \thetastar^\top v &\rightarrow u^\top
(\Exs_{\pi_\stepsize} \theta - \thetastar) \thetastar^\top v = 0,
\quad \mathrm{a.s.}
\end{align*}
Thus, the ergodic averages converge to the corresponding limits,
which implies that
\begin{align*}
  \frac{1}{T} \sum_{t = 0}^{T - 1} \Exs \left[ e_{t + 1}^{(1)} (e_{t + 1}^{(1)})^\top
    \mid \filtration_{t} \right] = \Exs (\NoiseA \otimes \NoiseA)
  \left[\frac{1}{T} \sum_{t = 0}^{T - 1} r_t r_t^\top \right] &\rightarrow
  \Exs\left( \NoiseA \LamStar \NoiseA^\top \right), \quad
  \mathrm{a.s.}, \quad \mbox{and} \\
  \frac{1}{T} \sum_{t = 0}^{T - 1} \Exs \left[ e_{t + 1}^{(1)} (e_{t + 1}^{(2)})^\top
    \mid \filtration_{t} \right] = \Exs (\NoiseA \otimes \NoiseA)
  \left[\frac{1}{T} \sum_{t = 0}^{T - 1} r_t \thetastar^\top \right]
  &\rightarrow 0, \quad \mathrm{a.s.}
 \end{align*}
Combining the pieces yields
 \begin{align*}
   \frac{1}{T} \sum_{t = 0}^{T - 1} \Exs \left[ e_{t + 1} (\theta_{t}) (e_{t + 1}
     (\theta_{t}))^\top \mid \filtration_{t} \right]
   \rightarrow \Exs (\noiseb \noiseb^\top) + \Exs \left( (\NoiseA
   \thetastar) (\NoiseA \thetastar)^\top \right) + \Exs \left(\NoiseA
   \LamStar \NoiseA^\top \right), \quad \mathrm{a.s.}
 \end{align*}

In order to prove the martingale CLT, it remains to verify that the
process $e_t(\theta_{t-1})$ satisfies a Lindeberg-type condition when
projected in an arbitrary direction $u \in \sphere^{d-1}$.  (Doing so
is sufficient since Markov's inequality allows us to translate it to a
Lyapunov-type condition.)  Accordingly, we seek to bound a $(2 +
\alpha)$-moment of the martingale differences, which furthermore
requires a uniform bound on the $(2 + \alpha)$-moment for the process
$(\theta_t)_{t \geq 0}$.

Using the $(2 + \alpha)$-moment bound~\eqref{Eqn2plusMoment} from
Lemma~\ref{lemma:stationary-existence-uniqueness}, we have
\begin{align*}
  \Exs |u^\top e_{t + 1} (\theta_{t})|^{2 + \alpha} &\leq \Exs \abss{
    2 u^\top e_{t + 1}^{(1)} }^{2 + \alpha} + \Exs \abss{ 2 u^\top
    e_{t + 1}^{(2)} }^{2 + \alpha} \\ &\leq 2^{2 + \alpha}\Exs
  \left(\opnorm{\NoiseAplain_{t + 1}} \vecnorm{r_{t}}{2} \right)^{2 +
    \alpha} + 2^{2 + \alpha} \Exs \vecnorm{\NoiseA \thetastar -
    \noiseb}{2}^{2 + \alpha}\\ & \leq 2^{2 + \alpha} \Exs
  \opnorm{\NoiseA}^{2 + \alpha} \cdot M + 4^{2 + \alpha} \left( \Exs
  \vecnorm{\NoiseA \thetastar}{2}^{2 + \alpha} + \Exs
  \vecnorm{\noiseb}{2}^{2 + \alpha}\right) \mydefn Q < +\infty.
\end{align*}
Notably, the quantity $Q$ is independent of $t$.

Therefore, for a fixed $\epsilon > 0$, the quantity $E
\defn\frac{1}{T} \sum_{t = 0 }^{T - 1} \Exs\left[ \left(u^\top e_{t + 1}
  (\theta_{t})\right)^2 1\left( \left|u^\top e_{t + 1} (\theta_{t})
  \right| > \epsilon\sqrt{T}\right) \right]$ is upper bounded
as
\begin{align*}
E & \leq \frac{1}{T} \sum_{t = 0}^{T - 1} \frac{1}{\epsilon^{\alpha}
  T^{\alpha/2}} \Exs\left[ \left(u^\top e_t
  (\theta_{t})\right)^{2+\alpha} 1\left( \left|u^\top e_t
  (\theta_{t}) \right| > \epsilon\sqrt{T}\right) \right] \\&\le
\frac{1}{\epsilon^{\alpha} T^{\alpha/2}}\cdot \frac{1}{T} \sum_{t = 0}^{T - 1}
\Exs \left(u^\top e_t (\theta_{t})\right)^{2+\alpha} \; \leq \;
\frac{1}{\epsilon^{\alpha} T^{\alpha/2}}\cdot Q.
\end{align*}
Note that this bound converges to zero as $T \rightarrow \infty$.
 
Applying the one-dimensional martingale central limit theorem
(cf. Corollary 3.1 in the book~\cite{HALL-HEYDE}), we have the
convergence of $\frac{1}{\sqrt{T}} \sum_{t = 0}^{T - 1} u^\top e_t
(\theta_{t})$. Combined with the Cram\'er-Wold device, we conclude
that $\frac{1}{\sqrt{T}} \sum_{t=0}^T e_t (\theta_{t})$ converges in
distribution to a zero-mean Gaussian with covariance $\Exs (\NoiseA
\LamStar \NoiseA^\top) + \SigStar$.  By
Lemma~\ref{lemma:simple-l2-estimate}, we have $\sqrt{T} \cdot
\frac{1}{\stepsize T} (\theta_T - \thetastar) \rightarrow 0$ almost
surely. Therefore, by the telescoping equation~\eqref{eq:telescope},
we have:
\begin{align*}
  A \left[ \frac{1}{\sqrt{T}} \sum_{t=0}^{T - 1} (\theta_{t}-\thetastar)
    \right] \overset{d}{\rightarrow} \mathcal{N} \left(0, \Exs
  (\NoiseA \Lambda \NoiseA^\top) + \SigStar \right).
\end{align*}
Taking the inverse of $A$ completes the proof.




\subsection{Proof of Theorem~\ref{thm-non-asymptotic}}

In order to prove this theorem, we require an auxiliary result that
provides bounds on higher-order moments of the process.
\begin{lemma}
\label{lemma:theta-norm-hurwitz}
Suppose that Assumptions~\ref{assume-indp},
\ref{assume-noise-subgaussian} and~\ref{assume-hurwitz} all hold.
Given some $p \geq 2 \log T$, consider any step size \mbox{$\stepsize
  \in \Big(0, \frac{\spectralgap}{\rho^2 (\Amatbar) + C p^{2 \alpha +
      1} \asymconditioning^2 (\Umat) \sigma_A^2}\Big)$.} Then there is
a universal constant $c$ such that
\begin{align}
(\Exs \vecnorm{\theta_t - \thetastar}{2}^p)^\frac{2}{p} & \leq c \;
  \asymconditioning^2 (\Umat) \left( (\Exs \vecnorm{\theta_0 -
    \thetastar}{2}^p)^{\frac{2}{p}} + \frac{\stepsize}{\spectralgap}
  (p^{2 \beta + 1} \sigma_b^2 d + p^{2 \alpha + 1} \sigma_A^2
  \vecnorm{\thetastar}{2}^2 ) \right).
\end{align}
\end{lemma}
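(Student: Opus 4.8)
The plan is to establish a $p$-th-moment analogue of the second-moment bound in Lemma~\ref{lemma:simple-l2-estimate}, following the same template. Write $r_t \defn \theta_t - \thetastar$. Using $\Amatbar\thetastar = \bvecbar$, the recursion~\eqref{eq:lsa} becomes $r_{t+1} = (I - \stepsize\Amat_{t+1})r_t - \stepsize(\NoiseAplain_{t+1}\thetastar - \noisebplain_{t+1})$, where $\NoiseAplain_{t+1} = \Amat_{t+1} - \Amatbar$ and $\noisebplain_{t+1} = \bvec_{t+1} - \bvecbar$. Passing to the coordinates $s_t \defn \Umat^{-1}r_t$ given by the matrices $\Umat, D$ of Lemma~\ref{LemHurwitz}, we obtain
\begin{align*}
s_{t+1} = (I - \stepsize D)s_t + N_{t+1}, \qquad N_{t+1} \defn -\stepsize\,\Umat^{-1}\NoiseAplain_{t+1}\Umat\, s_t - \stepsize\,\Umat^{-1}(\NoiseAplain_{t+1}\thetastar - \noisebplain_{t+1}),
\end{align*}
with $\Exs[N_{t+1}\mid\filtration_t] = 0$ by the unbiasedness~\eqref{EqnUnbiased}. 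I would bound $\Exs\vecnorm{s_t}{2}^p$ and then transfer the bound to $r_t$; the operator norms $\opnorm{\Umat}$ and $\opnorm{\Umat^{-1}}$ produced along the way by $N_{t+1}$ and by $s_0 = \Umat^{-1}r_0$ combine into the factor $\asymconditioning^2(\Umat)$ in the conclusion. It suffices to treat $p$ an even integer: for general $p$, monotonicity of $L^p$-norms lets one round $p$ up to the nearest even integer, which changes the constants and the powers of $p$ only by absolute factors (and the corresponding step-size condition is implied by the one stated).

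The core step is a geometric one-step recursion for $\Exs\vecnorm{s_t}{2}^p$. Conditioning on $\filtration_t$ and expanding
\begin{align*}
\vecnorm{s_{t+1}}{2}^2 = \vecnorm{(I - \stepsize D)s_t}{2}^2 + 2\,\RealPart\angles{(I - \stepsize D)s_t, N_{t+1}} + \vecnorm{N_{t+1}}{2}^2,
\end{align*}
I would raise to the power $p/2$ and take $\Exs[\,\cdot\mid\filtration_t]$: the martingale property kills the term linear in $N_{t+1}$, and the remaining terms --- bounded via Cauchy--Schwarz and Young's / Lyapunov's inequalities by mixtures of $\vecnorm{(I-\stepsize D)s_t}{2}^{p-2}\,\Exs[\vecnorm{N_{t+1}}{2}^2\mid\filtration_t]$ and $\Exs[\vecnorm{N_{t+1}}{2}^p\mid\filtration_t]$ --- are controlled using $D + D^\hc \succeq \spectralgap\IdMat$ (so $I - \stepsize D$ is a contraction with factor at most $1 - \tfrac12\stepsize\spectralgap$ once $\stepsize \lesssim \spectralgap/\rho^2(\Amatbar)$, for the choice of $\Umat$ underlying Lemma~\ref{LemHurwitz}) together with the $p$-th moment noise bounds of Assumption~\ref{assume-noise-subgaussian}, which give $\big(\Exs[\vecnorm{N_{t+1}}{2}^p\mid\filtration_t]\big)^{2/p} \lesssim \stepsize^2\asymconditioning^2(\Umat)\big(p^{2\alpha}\sigma_A^2(\vecnorm{s_t}{2}^2 + \vecnorm{\thetastar}{2}^2) + p^{2\beta}\sigma_b^2 d\big)$. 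The step-size restriction of the lemma enters exactly here: it guarantees that the multiplicative (homogeneous) part of the noise, of size $\stepsize^2 p^{2\alpha}\asymconditioning^2(\Umat)\sigma_A^2\vecnorm{s_t}{2}^2 \lesssim \stepsize\spectralgap\vecnorm{s_t}{2}^2$, is swallowed by the deterministic contraction, while the additive part, after absorbing the cross term $\vecnorm{s_t}{2}^{p-2}\cdot(\text{additive noise})$ by Young's inequality, leaves a recursion of the form
\begin{align*}
\Exs\vecnorm{s_{t+1}}{2}^p \le (1 - c\stepsize\spectralgap)^{p/2}\,\Exs\vecnorm{s_t}{2}^p + C\Big(\tfrac{\stepsize}{\spectralgap}\,\asymconditioning^2(\Umat)\big(p^{2\beta+1}\sigma_b^2 d + p^{2\alpha+1}\sigma_A^2\vecnorm{\thetastar}{2}^2\big)\Big)^{p/2}.
\end{align*}

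Finally I would iterate this recursion, summing the geometric series $\sum_{j\ge 0}(1 - c\stepsize\spectralgap)^{jp/2}$, to get $\Exs\vecnorm{s_t}{2}^p \le (1-c\stepsize\spectralgap)^{tp/2}\Exs\vecnorm{s_0}{2}^p + C\big(\tfrac{\stepsize}{\spectralgap}\asymconditioning^2(\Umat)(p^{2\beta+1}\sigma_b^2 d + p^{2\alpha+1}\sigma_A^2\vecnorm{\thetastar}{2}^2)\big)^{p/2}$; taking the $(2/p)$-th power, bounding $(1-c\stepsize\spectralgap)^t \le 1$ so the homogeneous part is at most $(\Exs\vecnorm{s_0}{2}^p)^{2/p}$, and transferring back to $\theta_t - \thetastar$ (the $\opnorm{\Umat}, \opnorm{\Umat^{-1}}$ factors combining to $\asymconditioning^2(\Umat)$) then gives the claimed inequality. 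I expect the main obstacle to be the bookkeeping in the multinomial expansion of $\Exs[\vecnorm{s_{t+1}}{2}^p\mid\filtration_t]$: one has to check that every term other than the leading $\vecnorm{(I-\stepsize D)s_t}{2}^p$ and the benign additive term either vanishes by the martingale identity or, after Young/Lyapunov, is dominated by the geometric contraction --- and to verify that the step-size restriction stated in the lemma is precisely what makes this possible, with the powers $p^{2\alpha+1}$ and $p^{2\beta+1}$ emerging (generously) from the $p$-th noise moments and the extra factor $p$ picked up when bounding $1 - (1-c\stepsize\spectralgap)^{p/2}$ from below.
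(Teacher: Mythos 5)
Your overall route is genuinely different from the paper's: the paper telescopes an exponentially weighted version of $\vecnorm{\Umat^{-1}(\theta_t-\thetastar)}{2}^2$, splits the result into a martingale $S_1$ and an increasing process $S_2$, controls $S_1$ by Burkholder--Davis--Gundy and $S_2$ by H\"older, and closes a self-bounding inequality for a running supremum (this is exactly where the hypothesis $p \ge 2\log T$ is used, to tame factors of $T^{2/p}$). A direct one-step $L^p$ drift recursion, as you propose, is a legitimate alternative in principle, and would not even need $p \ge 2\log T$ since it never sums moments over the trajectory. However, there is a concrete gap in your final step. The inhomogeneous term in the recursion you write down is the \emph{stationary level} raised to the power $p/2$, namely $C\big(\tfrac{\stepsize}{\spectralgap}K\big)^{p/2}$ with $K \defn \asymconditioning^2(\Umat)\big(p^{2\beta+1}\sigma_b^2 d + p^{2\alpha+1}\sigma_A^2\vecnorm{\thetastar}{2}^2\big)$, rather than the \emph{per-step injection}. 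Iterating that recursion gives an inhomogeneous contribution $C\big(\tfrac{\stepsize}{\spectralgap}K\big)^{p/2}\big/\big(1-(1-c\stepsize\spectralgap)^{p/2}\big)$, and after taking the $(2/p)$-th power you are left with the extra factor $\big(1-(1-c\stepsize\spectralgap)^{p/2}\big)^{-2/p}\approx \big(c\,p\,\stepsize\spectralgap\big)^{-2/p}$ whenever $p\,\stepsize\spectralgap \le 1$. This is not a universal constant: the lemma places no lower bound on $\stepsize$, and for $\stepsize\spectralgap \ll e^{-p}$ this factor is arbitrarily large. So "summing the geometric series" as stated does not deliver the claimed bound uniformly over the admissible step sizes; your last displayed inequality does not follow from the recursion preceding it.

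The repair is standard and preserves your plan: run the drift at the level of $m_t \defn \big(\Exs\vecnorm{s_t}{2}^p\big)^{2/p}$ rather than of $\Exs\vecnorm{s_t}{2}^p$. Using the scalar inequality $(A+z)^{q} \le A^q + qA^{q-1}z + C_q\big(A^{q-2}z^2 + |z|^q\big)$ with $q = p/2$, the term linear in $z$ splits into the genuinely martingale part (killed by conditioning) and the nonnegative quadratic piece $\Exs\big[\vecnorm{N_{t+1}}{2}^2 \mid \filtration_t\big] = O\big(\stepsize^2(\cdots)\big)$; after the Young steps (with weights proportional to $\stepsize\spectralgap$, which is where your step-size condition enters) one obtains a recursion of the form $m_{t+1} \le (1-c\stepsize\spectralgap)\,m_t + C\stepsize^2\asymconditioning^2(\Umat)\big(p^{2\alpha+1}\sigma_A^2\vecnorm{\thetastar}{2}^2 + p^{2\beta+1}\sigma_b^2 d\big)$, the generous powers of $p$ in the lemma leaving room for the combinatorial constants. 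Summing this geometric series then produces exactly the factor $\stepsize/\spectralgap$ with a universal constant, for every admissible $\stepsize$. Equivalently, you may keep the recursion in raw $p$-th moments, but you must verify that the additive term carries a prefactor of order $\min\{1, c\,p\,\stepsize\spectralgap\}$ times the stationary level to the power $p/2$; without one of these corrections the iteration step fails for small step sizes.
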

\noindent See Appendix~\ref{AppThetaNormHurwitz} for the proof of this
claim. \\

Equipped with this lemma, we now turn to the proof of the theorem.  We
consider the martingale term $M_t \mydefn \sum_{s = 1}^{t} e_{s + 1}
(\theta_{s})$.  By the telescope equation~\eqref{eq:telescope}, we
need to bound in any direction the variation of $\frac{1}{T \stepsize}
\Amatbar^{-1} (\theta_0 - \theta_T)$ and $\frac{1}{T} \sum_{t = 0}^{T
  - 1} \Amatbar^{-1} e_{t + 1} (\theta_t)$, respectively. For any vector $v
\in \sphere^{d - 1}$, define $M^{(v)}_t \mydefn \sum_{t = 0}^{T - 1}
\Amatbar^{-1} v^\top e_{t + 1} (\theta_t)$. Since $M^{(v)}_t$ is a
martingale, we can apply the discrete-time Burkholder-Davis-Gundy
(BDG) inequality~\citep{burkholder1972integral}: it guarantees the
existence of a finite constant $C$ such that for any $p \geq 4$, we
have
\begin{align*}
\Exs \sup_{0 \leq t \leq T} |M^{(v)}_t|^p & \leq (C p)^{\frac{p}{2}}
\Exs \langle M^{(v)} \rangle_T^\frac{p}{2} \; = \;
(C p)^{\frac{p}{2}} \Exs \left( \sum_{t = 0}^{T - 1} (v^\top e_{t} (\theta_t))^2 \right)^\frac{p}{2}.
\end{align*}
Moreover, we have
\begin{align*}
\Exs \left( \sum_{t = 0}^{T - 1} (v^\top e_{t + 1} (\theta_t))^2
\right)^\frac{p}{2} & = \Exs \left( \sum_{t = 0}^{T - 1} \left(
(v^\top \NoiseAplain_{t + 1} \theta_t)^2 + (\noisebplain_{t + 1}^\top
v)^2 - 2 (v^\top \NoiseAplain_{t + 1} \theta_t^\top)(v^\top
\noisebplain_{t + 1}) \right) \right)^\frac{p}{2} \\
& \leq 3^{p/2} \; \sum_{j=1}^3 I_j,
\end{align*}
where $I_1 \defn \Exs \left( \sum_{t = 0}^{T - 1} (\theta_t^\top
\NoiseAplain_{t + 1} v)^2 \right)^\frac{p}{2}$, along with
\begin{align*}
I_2 \defn \Exs \left( \sum_{t = 0}^{T - 1} (v^\top \noisebplain_{t + 1})^2 \right)^\frac{p}{2}, \quad \mbox{and}
\quad I_3 \defn \Exs \abss{\sum_{t = 0}^{T - 1}
  2 (v^\top \NoiseAplain_{t + 1} \theta_t^\top)(v^\top
  \noisebplain_{t + 1})}^{\frac{p}{2}}.
\end{align*}
By the Cauchy-Schwartz inequality, we have $I_3 \leq \sqrt{I_1 I_2}$. So we only need to bound the terms $I_1$ and $I_2$.

We now state an auxiliary result that bounds each of these terms:
\begin{lemma}
\label{LemIbounds}
\begin{subequations}
We have the bounds
    \begin{align}
  I_2 & \leq (2 v^\top \Sigma_\xi v T)^{\frac{p}{2}} + C_\beta^p
  \sigma_b^p \left( (pT)^{\frac{p}{4}} + (p \log T)^{\frac{p}{2} (1 +
    2 \beta)} \right),
    \end{align}
and
\begin{multline}
  (I_1)^{\frac{2}{p}} \leq 3 T v^\top \Exs (\NoiseA (\LamStar +
  \thetastar \thetastar^\top) \NoiseA^\top) v + \frac{12 v_A^2
    \asymconditioning^2 (\Umat)}{\spectralgap \stepsize } \left(
  \trace (\LamStar) + \vecnorm{\thetastar}{2}^2 + \vecnorm{\theta_0 -
    \thetastar}{2}^2 \right) \\
+ C \opnorm{\Sigma_\NoiseAplain [v, v]} \frac{\asymconditioning^2
  (\Umat)}{\spectralgap} B_p \left( \sigma_A (B_p +
\vecnorm{\thetastar}{2}) (p \log T)^{\alpha}+ \sigma_b \sqrt{d} (p
\log T)^\beta \right) \sqrt{p T \log T }\\ + \sqrt{C p T} \sigma_A^2
p^{2 \alpha} \asymconditioning^2 (\Umat) B_p^2.
\end{multline}
\end{subequations}
\end{lemma}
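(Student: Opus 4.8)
The plan is to prove the two bounds separately; $I_1$ is substantially the harder one. Both arguments use that $\NoiseAplain_{t+1},\noisebplain_{t+1}$ are i.i.d.\ and independent of $\filtration_t$ (Assumption~\ref{assume-indp}), the moment-tail conditions of Assumption~\ref{assume-noise-subgaussian}, and the high-moment bounds of Lemma~\ref{lemma:theta-norm-hurwitz}; throughout I write $\|Z\|_q\defn(\Exs|Z|^q)^{1/q}$. For $I_2=\Exs\big(\sum_{t=0}^{T-1}(v^\top\noisebplain_{t+1})^2\big)^{p/2}$, the summands $W_{t+1}\defn(v^\top\noisebplain_{t+1})^2$ are i.i.d., nonnegative, with $\Exs W_1=v^\top\Sigma_\xi v$ and $\|W_1\|_{p/2}\le p^{2\beta}\sigma_b^2$ by Assumption~\ref{assume-noise-subgaussian}(ii). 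I would split $\sum_t W_{t+1}=T\,\Exs W_1+\sum_t(W_{t+1}-\Exs W_1)$, use the triangle inequality in $L^{p/2}$, and bound the centered i.i.d.\ sum by a Rosenthal / Marcinkiewicz--Zygmund moment inequality $\big\|\sum_t(W_{t+1}-\Exs W_1)\big\|_{p/2}\lesssim p\big(\sqrt T\,\|W_1\|_2+T^{2/p}\|W_1\|_{p/2}\big)$. Substituting $\|W_1\|_2\lesssim\sigma_b^2$, raising to the power $p/2$, and using $p\gtrsim\log T$ (forced by Lemma~\ref{lemma:theta-norm-hurwitz}) to absorb $T^{2/p}$ into a constant yields the stated bound: $(pT)^{p/4}$ comes from the $\sqrt T\,\|W_1\|_2$ term and $(p\log T)^{(1+2\beta)p/2}$ from the $T^{2/p}\|W_1\|_{p/2}$ term.

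For $I_1=\Exs\big(\sum_{t=0}^{T-1}(v^\top\NoiseAplain_{t+1}\theta_t)^2\big)^{p/2}$ the key device is a conditional bias/fluctuation split: since $\theta_t$ is $\filtration_t$-measurable and $\NoiseAplain_{t+1}$ is independent of $\filtration_t$, setting $g(\theta)\defn\Exs\big[(v^\top\NoiseAplain\theta)^2\big]$ — a PSD quadratic form with $g(\theta)\le v_A^2\vecnorm{\theta}{2}^2$ — gives $(v^\top\NoiseAplain_{t+1}\theta_t)^2=g(\theta_t)+D_{t+1}$ with $\{D_{t+1}\}$ a martingale-difference sequence, hence $(I_1)^{2/p}\le\big\|\sum_t g(\theta_t)\big\|_{p/2}+\big\|\sum_t D_{t+1}\big\|_{p/2}$. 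For the martingale term I would apply Burkholder's inequality~\cite{burkholder1972integral} at exponent $p/2$ to reduce to $\big\|\big(\sum_t\mathrm{var}(D_{t+1}\mid\filtration_t)\big)^{1/2}\big\|_{p/2}$, bound the conditional variance by the conditional fourth moment $\Exs[(v^\top\NoiseAplain_{t+1}\theta_t)^4\mid\filtration_t]\lesssim p^{4\alpha}\sigma_A^4\vecnorm{\theta_t}{2}^4$ via Assumption~\ref{assume-noise-subgaussian}(i), and then invoke Lemma~\ref{lemma:theta-norm-hurwitz} to bound $\vecnorm{\theta_t-\thetastar}{2}$ in $L^p$ by $B_p$; this gives the term $\sqrt{CpT}\,\sigma_A^2 p^{2\alpha}\asymconditioning^2(\Umat)B_p^2$.

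The core of the proof is $\big\|\sum_t g(\theta_t)\big\|_{p/2}$, an empirical average of the quadratic functional $g$ along the geometrically ergodic chain $(\theta_t)_{t\ge0}$. By Lemma~\ref{lemma:stationary-existence-uniqueness}, $\Exs_{\pi_\stepsize}g(\theta)=v^\top\Exs\big(\NoiseAplain(\LamStar+\thetastar\thetastar^\top)\NoiseAplain^\top\big)v$, which yields the leading $O(T)$ term. To control $\sum_t\big(g(\theta_t)-\Exs_{\pi_\stepsize}g\big)$ I would work on the high-probability event where $\vecnorm{\theta_t-\thetastar}{2}\lesssim B_p$ and $\vecnorm{e_{t+1}(\theta_t)}{2}\lesssim\sigma_A(B_p+\vecnorm{\thetastar}{2})(p\log T)^\alpha+\sigma_b\sqrt d\,(p\log T)^\beta$ for all $t\le T$ — guaranteed by Lemma~\ref{lemma:theta-norm-hurwitz}, Assumption~\ref{assume-noise-subgaussian} and a union bound, which is the source of the $(p\log T)^\alpha$ and $(p\log T)^\beta$ factors — on which $g$ is Lipschitz (on the relevant ball) with constant $\lesssim\opnorm{\Sigma_\NoiseAplain[v,v]}(B_p+\vecnorm{\thetastar}{2})$. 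I would then apply the metric ergodic concentration of Joulin and Ollivier~\cite{joulin2010curvature} (equivalently, a Poisson-equation/corrector decomposition), whose rate is the Wasserstein-$2$ contraction rate $\tfrac12\spectralgap\stepsize$ of Lemma~\ref{lemma:mixing-hurwitz}. This splits the deviation into a burn-in/boundary piece of order $(\spectralgap\stepsize)^{-1}$ times the quadratic growth of $g$, contributing (after bounding $\trace(\LamStar)$ via~\eqref{EqnStatSecondMomentBound}) the term $\tfrac{C v_A^2\asymconditioning^2(\Umat)}{\spectralgap\stepsize}\big(\trace(\LamStar)+\vecnorm{\thetastar}{2}^2+\vecnorm{\thetastar-\theta_0}{2}^2\big)$, and a martingale piece whose increments scale like $\opnorm{\Sigma_\NoiseAplain[v,v]}\,\spectralgap^{-1}B_p$ times the per-step noise magnitude (the $\stepsize$ cancelling); a Burkholder bound at exponent $p/2$ then gives the cross term with prefactor $\opnorm{\Sigma_\NoiseAplain[v,v]}\,\asymconditioning^2(\Umat)\,\spectralgap^{-1}B_p\big(\sigma_A(B_p+\vecnorm{\thetastar}{2})(p\log T)^\alpha+\sigma_b\sqrt d\,(p\log T)^\beta\big)$ at scale $\sqrt{pT\log T}$.

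I expect this last fluctuation estimate to be the main obstacle: one must interface the metric-concentration/corrector machinery with the truncation so that the coefficient of the $O(T)$ term is exactly the stationary conditional variance $v^\top\Exs(\NoiseAplain(\LamStar+\thetastar\thetastar^\top)\NoiseAplain^\top)v$ — not merely an operator-norm surrogate — while ensuring all remaining terms decay at a strictly slower rate in $T$. This requires a careful joint calibration of the truncation radius, the moment order $p\gtrsim\log T$, and the way the contraction rate $\spectralgap\stepsize$ enters, so that the $\stepsize$-dependences land exactly as in the stated bound. The $I_2$ bound is, by comparison, routine.
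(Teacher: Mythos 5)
Your decomposition of $I_1$ into the predictable compensator $\sum_t g(\theta_t)$ plus a martingale part, with BDG for the martingale, the stationary covariance $\LamStar$ (Lemma~\ref{lemma:stationary-existence-uniqueness}) for the leading $O(T)$ term, the coupling of Lemma~\ref{lemma:mixing-hurwitz} for the $(\spectralgap\stepsize)^{-1}$ bias term, and Joulin--Ollivier concentration on a truncation event calibrated by Lemma~\ref{lemma:theta-norm-hurwitz} for the $\sqrt{pT\log T}$ fluctuation, is exactly the paper's argument (its $\Psi_T/\Upsilon_T$ split together with Lemma~\ref{LemErgconcTwo}). The only cosmetic deviations are that you handle $I_2$ via Rosenthal's inequality for i.i.d.\ sums where the paper integrates the tail of its heavy-tailed martingale bound (Lemma~\ref{lemma:heavy-tail-concentration}), and you pass from the ergodic high-probability estimate to a $p/2$-moment bound through a Poisson-equation/Burkholder step rather than the paper's choice of $\delta\le (CT)^{-p}$ with a crude bound on the complement; both variants produce the stated terms, so the proposal is correct and essentially follows the paper's route.
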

\noindent See Section~\ref{AppLemIbounds} for the proof of this claim. \\

Combining the results for $I_1$, $I_2$, $I_3$, we obtain the main
moment bound on the supremum of martingale $M_t^{(v)}$. Denote the
matrix $\tilde{\Sigma} \mydefn \Exs (\NoiseA \otimes \NoiseA ) \otimes
\LamStar$, and denote $Z_p \mydefn \sigma_A
\vecnorm{\thetastar}{2} (p \log T)^\alpha + \sigma_b \sqrt{d} (p \log
T)^{\beta}$. We obtain:
\begin{multline*}
  \frac{1}{\sqrt{T}} \left( \Exs \sup_{0 \leq t \leq T} | M_t^{(v)}|^p
  \right)^{\frac{1}{p}} \lesssim \sqrt{p v^\top (\SigStar +
    \tilde{\Sigma}) v} + \sqrt{p} \sigma_b \left(
  (\frac{p}{T})^{\frac{1}{4}} + \frac{(p \log T)^{\beta +
      1/2}}{\sqrt{T}} \right) \\
  + p \log T \cdot T^{- \frac{1}{4}} \frac{\asymconditioning (\Umat)
    \sqrt{\opnorm{\tilde{\Sigma}}} }{\sqrt{\spectralgap \stepsize}}
  (\sqrt{\frac{\stepsize}{\spectralgap}} Z_p + \vecnorm{\thetastar -
    \theta_0}{2}) + \frac{v_A \asymconditioning (\Umat)}{\sqrt{T
      \spectralgap \stepsize}} (\vecnorm{\thetastar}{2} +
  \vecnorm{\theta_0}{2} + \sqrt{\trace(\LamStar)} )\\ + \sqrt{p} T^{-
    \frac{1}{4}} p^{\alpha + \beta} \sqrt{\sigma_A \sigma_b}
  \asymconditioning (\Umat) (\vecnorm{\theta_0 - \thetastar}{2} +
  \sqrt{\frac{\stepsize}{\spectralgap}} Z_p ),
\end{multline*}
for $p > 2\log T$ and $\stepsize$ satisfying the assumption in the
theorem.

For the bias term, we note that:
\begin{align*}
  \left( \Exs \vecnorm{\theta_T - \thetastar}{2}^{p}
  \right)^{\frac{2}{p}} \leq \asymconditioning^2 (\Umat)
  \left(\vecnorm{\theta_0 - \thetastar}{2} +
  \frac{\stepsize}{\spectralgap} Z_p \right).
\end{align*}
Finally, putting together the previous results and merging the terms, we obtain
the upper bound
\begin{multline*}
\sqrt{T} \left( \Exs |v^\top A(\thetabar_T -
\thetastar)|^p\right)^{\frac{1}{p}} \lesssim \sqrt{p v^\top (\SigStar
  + \tilde{\Sigma}) v} \\
+ \asymconditioning (\Umat) (p \log T)^{2 \max(\alpha, \beta) + 2}
\left( \frac{\sigma_A + \sigma_b}{T^{\frac{1}{4}}} + \frac{1 +
  \sqrt{\sigma_A/\spectralgap}}{\stepsize \sqrt{T}}\right) \left(
\vecnorm{\thetastar}{2} + \vecnorm{\theta_0}{2} +
\sqrt{\frac{\stepsize}{\spectralgap}} (\sigma_A
\vecnorm{\thetastar}{2} + \sigma_b \sqrt{d}) \right).
\end{multline*}
Applying Markov's inequality yields the claimed high-probability
bound.

\subsection{Proof of Lemma~\ref{LemIbounds}}
\label{AppLemIbounds}

The remainder of our effort is devoted to proving the bounds on the
terms $\{I_1, I_2, I_3 \}$ claimed in Lemma~\ref{LemIbounds}.

\subsubsection{Upper bounds on $I_1$}

We begin by observing that
\begin{align*}
  \Exs \sum_{t = 0}^{T - 1} (\theta_t^\top \NoiseAplain_{t + 1} v)^2 =
  \Exs \sum_{t = 0}^{T - 1} v^\top \Exs ( \NoiseAplain_{t + 1} \otimes
  \NoiseAplain_{t + 1}^\top | \filtration_{t} ) [ \theta_t
    \theta_t^\top, v] = \inprod{ \Sigma_\NoiseAplain [vv^\top]}{ \Exs
    \left( \sum_{t = 0}^{T - 1} \theta_t \theta_t^\top \right)}
\end{align*}
In order to deal with the concentration behavior of this term, let
$\Psi_T \mydefn \sum_{t = 0}^{T - 1} \Exs\left( (\theta_t^\top \NoiseAplain_{t + 1} v)^2
| \filtration_t \right)$, and let $\Upsilon_T \mydefn \sum_{t = 0}^{T - 1}
(\theta_t^\top \NoiseAplain_{t + 1} v)^2 - \Psi_T$. By definition, it is easy to
see that $\Upsilon$ is a martingale. Applying the BDG inequality and
H\"{o}lder's inequality, we have:
\begin{align*}
  \Exs \sup_{0 \leq t \leq T - 1} |\Upsilon_t|^{\frac{p}{2}} &\leq (C
  p)^{\frac{p}{4}} \Exs \langle \Upsilon \rangle_{T}^{\frac{p}{4}}\\
  & = (C p)^{\frac{p}{4}} \Exs \left( \sum_{t = 0}^{T - 1} \left(
  (\theta_t^\top \NoiseAplain_{t + 1} v)^2 - \Exs ((\theta_t^\top \NoiseAplain_{t + 1} v)^2 \mid \filtration_{t})^2 \right)
  \right)^{\frac{p}{4}}\\ &\leq (C p)^{\frac{p}{4}} \Exs \left(
  \sum_{t = 0}^{T - 1} (\theta_t^\top \NoiseAplain_{t + 1} v)^4
  \right)^{\frac{p}{4}} \\
  & \leq (C p)^{\frac{p}{4}} T^{\frac{p}{4} - 1}\sum_{t = 0}^{T - 1}
  \Exs |\theta_t^\top \NoiseAplain_{t + 1} v|^p\\
  & \leq (C p)^{\frac{p}{4}} T^{\frac{p}{4}} \sigma_A^p p^{\alpha p}
  \max_{0 \leq t \leq T - 1}\Exs \vecnorm{\theta_t}{2}^p.
\end{align*}
As for the process $\{\Psi_T \}_{T \geq 1}$, a straightforward
calculation yields:
\begin{align*}
    \Psi_T = \sum_{t = 0}^{T - 1} \Exs\left( (\theta_t^\top
    \NoiseAplain_{t} v)^2 \mid \filtration_t \right) = \inprod{
      \Sigma_\NoiseAplain [vv^\top]}{\sum_{t = 0}^{T - 1} \theta_t
      \theta_t^\top}.
\end{align*}
The summation $\sum_{t = 0}^{T - 1} \theta_t \theta_t^\top$ involves terms
that are functions of an ergodic Markov chain. Thus, metric ergodicity
concentration inequalities based on Ricci curvature techniques can
show its concentration around its expectation. We first study the
expectation of this process. Let $(\thetatil_t)_{t \geq 0}$ be a
stationary chain which starts from $\pi_\stepsize$, couple the
processes $(\theta_t)_{t \geq 0}$ and $(\thetatil_t)_{t \geq 0}$
in the manner defined by Lemma~\ref{lemma:mixing-hurwitz}. By definition,
there is $\Exs \thetatil_t \thetatil_t^\top =
\Exs_{\pi_\stepsize} \theta \theta^\top$. For any matrix $L$, we have
\begin{align*}
  &\abss{\frac{1}{T} \Exs \left(\sum_{t = 0}^{T - 1} \inprod{\theta_t
      \theta_t^\top} {L} \right) - \Exs_{\pi_\stepsize}
    \inprod{\theta\theta^\top}{L}} \leq \frac{1}{T} \sum_{t = 0}^{T -
    1} \Exs \abss{\theta_t^\top L \theta_t - \thetatil_t^\top L
    \thetatil_t} \\
& \leq \frac{1}{T} \sum_{t = 0}^{T - 1} \left( \Exs \abss{(\theta_t -
    \thetatil)^\top L (\theta_t -\thetatil) } + 2 \Exs
  \abss{ (\theta_t - \thetatil_t)^\top L \thetatil_t}
  \right)\\ &\leq \frac{1}{T} \sum_{t = 0}^{T - 1} \left(\opnorm{L}
  \Exs \vecnorm{\theta_t - \thetatil_t}{2}^2 + 2 \opnorm{L}
  \sqrt{\Exs \vecnorm{\theta_t - \thetatil_t}{2}^2} \cdot
  \sqrt{\Exs \vecnorm{\thetatil_t}{2}^2} \right).
\end{align*}
By Lemma~\ref{lemma:mixing-hurwitz}, for this coupling, we have:
\begin{align*}
  \Exs \vecnorm{\theta_t - \thetatil_t}{2}^2 \leq
  \asymconditioning^2 (\Umat) e^{- \spectralgap \stepsize t}
  \Wass_2^2 (\law(\theta_0), \pi_\stepsize).
\end{align*}
By definition, we have $\Exs\vecnorm{\thetatil_t}{2}^2 = \trace
(\LamStar) + \vecnorm{\thetastar}{2}^2$, and it is easy to
see that $\Wass_2^2 (\law(\theta_0), \pi_\stepsize) \leq \Exs
\vecnorm{\theta_0 - \thetastar}{2} + \Exs_{\pi_\stepsize}
\vecnorm{\theta - \thetastar}{2}^2 \leq \vecnorm{\theta_0 -
  \thetastar}{2} + \trace (\LamStar)$. Plugging into the
above inequality, we obtain:
\begin{multline*}
    \abss{\frac{1}{T} \Exs \left(\sum_{t = 0}^{T - 1} \inprod{\theta_t
        \theta_t^\top} {L} \right) - \Exs_{\pi_\stepsize}
      \inprod{\theta\theta^\top}{L}} \leq \frac{2 \opnorm{L}
      \asymconditioning^2 (\Umat)}{T} \left( \trace (\LamStar) +
    \vecnorm{\thetastar}{2}^2 + \vecnorm{\theta_0 - \thetastar}{2}^2
    \right) \sum_{t = 0}^{T - 1} e^{- \frac{\spectralgap \stepsize
        t}{2}} \\
    \leq \frac{4 \opnorm{L} \asymconditioning^2 (\Umat)}{\spectralgap
      \stepsize T} \left( \trace (\LamStar) +
    \vecnorm{\thetastar}{2}^2 + \vecnorm{\theta_0 - \thetastar}{2}^2
    \right).
\end{multline*}
In particular, for $L = \Sigma_\NoiseAplain [v, v]$, we have:
\begin{align*}
    \abss{\frac{1}{T} \Exs \Psi_T - \Exs (\NoiseA \otimes \NoiseA)
      [\LamStar + \thetastar \thetastar^\top] } \leq
    \frac{4 v_A^2 \asymconditioning^2 (\Umat)}{\spectralgap \stepsize T}
    \left( \trace (\LamStar) + \vecnorm{\thetastar}{2}^2 +
    \vecnorm{\theta_0 - \thetastar}{2}^2 \right).
\end{align*}
By Lemma~\ref{LemErgconcTwo}, for any $\delta > 0$, for $B =
\vecnorm{\theta_0 - \thetastar}{2} + \frac{\stepsize}{\spectralgap}
(\sigma_b \sqrt{d} \log^{\beta + 1/2} \frac{T}{\delta} + \sigma_A
\vecnorm{\thetastar}{2} \log^{\alpha + 1/2} \frac{T}{\delta} )$, for
$\stepsize < \frac{\spectralgap}{\rho^2 (\Amatbar) + C
  \asymconditioning^2 (\Umat) \sigma_A^2 \log^{2 \alpha + 1} T/\delta
}$, with probability $1 - \delta$, we have:
\begin{align*}
  \abss{ \Psi_T - \Exs \Psi_T } \leq C \opnorm{ \Sigma_\NoiseAplain
    [vv^\top]} \frac{\asymconditioning^2 (\Umat)}{\spectralgap} B \left(
  \sigma_A (B + \vecnorm{\thetastar}{2})\log^{\alpha}\frac{T}{\delta}
  + \sigma_b \sqrt{d} \log^{\beta}\frac{T}{\delta} \right) \sqrt{T
    \log \delta^{-1}} \mydefn Q_\delta.
\end{align*}
Note that this bound holds true only for a fixed failure probability
$\delta$. In order to obtain the moment bounds on $\Psi$, we also use
a coarse estimate: $|\Psi_T - \Exs \Psi_T| \leq T \opnorm{vv^\top
  \Sigma_\NoiseAplain} \max_{0 \leq t \leq T -
  1}. \vecnorm{\theta_t}{2}^2$. Putting them together, we have:
    \begin{align*}
        \Exs |\Psi_T - \Exs \Psi_T|^\frac{p}{2} \leq
        Q_\delta^\frac{p}{2} + \Exs \left( |\Psi_T - \Exs
        \Psi_T|^\frac{p}{2} \bm{1}_{ |\Psi_T - \Exs \Psi_T| >
          Q_\delta} \right) \leq Q_\delta^\frac{p}{2} + \sqrt{\delta
          T^{p} \opnorm{ \Sigma_\NoiseAplain[v, v] }^{p} \Exs \max_{0
            \leq t \leq T - 1}. \vecnorm{\theta_t}{2}^{2p}}.
    \end{align*}
By Lemma~\ref{lemma:theta-norm-hurwitz}, we have $(\Exs \max_{0 \leq t
  \leq T - 1}. \vecnorm{\theta_t}{2}^{p})^{\frac{1}{p}} \leq C
\asymconditioning^2 (\Umat) (\vecnorm{\theta_0 - \thetastar}{2} +
\vecnorm{\thetastar}{2} + \frac{T\stepsize}{\spectralgap} (\sigma_b
\sqrt{d} p^{\beta + 1/2} + \sigma_A \vecnorm{\thetastar}{2} p^{\alpha
  + 1/2} )$. Choosing some $\delta \in \big(0, (CT)^{- p} \big)$, we
obtain that:
\begin{align*}
       \left( \Exs \abss{ \Psi_T - \Exs \Psi_T }^\frac{p}{2}
       \right)^{\frac{2}{p}} \leq C \opnorm{ \Sigma_\NoiseAplain [v,
           v]} \frac{\asymconditioning^2 (\Umat)}{\spectralgap} B_p
       \left( \sigma_A (B_p + \vecnorm{\thetastar}{2}) (p \log
       T)^{\alpha}+ \sigma_b \sqrt{d} (p \log T)^\beta \right) \sqrt{p
         T \log T },
    \end{align*}
where $B_p \mydefn \vecnorm{\theta_0 - \thetastar}{2} +
\frac{\stepsize}{\spectralgap} (\sigma_b \sqrt{d} (p\log T)^{\beta +
  1/2} + \sigma_A \vecnorm{\thetastar}{2} (p \log T)^{\alpha + 1/2}
)$.
    
Recall that we can decompose $I_1$ into three parts:
\begin{align*}
  I_1 \leq \Exs \left( \Upsilon_T + \Psi_T \right)^\frac{p}{2} \leq
  3^{\frac{p}{2}} \left( \Exs |\Upsilon_T|^{\frac{p}{2}} + (\Exs
  \Psi_T)^{\frac{p}{2}} + \Exs |\Psi_T - \Exs \Psi_T|^{\frac{p}{2}}
  \right).
\end{align*}   
Using the bounds for three terms derived above, we obtain:
\begin{multline*}
  (I_1)^{\frac{2}{p}} \leq 3 T v^\top \Exs (\NoiseA (\LamStar +
  \thetastar \thetastar^\top) \NoiseA^\top) v + \frac{12 v_A^2
    \asymconditioning^2 (\Umat)}{\spectralgap \stepsize } \left(
  \trace (\LamStar) + \vecnorm{\thetastar}{2}^2 + \vecnorm{\theta_0 -
    \thetastar}{2}^2 \right) \\
+ C \opnorm{\Sigma_\NoiseAplain [v, v]} \frac{\asymconditioning^2
  (\Umat)}{\spectralgap} B_p \left( \sigma_A (B_p +
\vecnorm{\thetastar}{2}) (p \log T)^{\alpha}+ \sigma_b \sqrt{d} (p
\log T)^\beta \right) \sqrt{p T \log T }\\ + \sqrt{C p T} \sigma_A^2
p^{2 \alpha} \asymconditioning^2 (\Umat) B_p^2.
\end{multline*}


\subsubsection{Upper bounds on $I_2$:}

Define $\noisebplain_T \mydefn \sum_{t = 0}^{T - 1} (v^\top
\noisebplain_{t + 1})^2$, we have $\Exs \noisebplain_T = v^\top
\Sigma_\xi v T$. It is easy to see that $\noisebplain_t -\Exs
\noisebplain_t$ is a martingale difference sequence, and tnus by standard
sub-exponential martingale concentration inequalities and
Assumption~\ref{assume-noise-subgaussian}, for $p \geq 2$, we have:
\begin{align*}
  \Exs \left( (v^\top \noisebplain_t)^2 - \Exs (v^\top \noisebplain_t)^2 \right)^p \leq
  \Exs (v^\top \noisebplain_t)^{2p} \leq p^{2 \beta p} \sigma_b^{2 p}.
\end{align*}

By Lemma~\ref{lemma:heavy-tail-concentration}, for any $\delta > 0$,
we have:
\begin{align*}
  \Prob \left( \frac{1}{T} |\noisebplain_T - \Exs \noisebplain_T| > C_\beta \sigma_b^2
  \left( \sqrt{\frac{\log \delta^{-1}}{T}} + \frac{\log^{1 + 2 \beta}
    T / \delta}{T} \right) \right) < \delta.
\end{align*}
Integrating the expression, we obtain the upper bound:
\begin{align*}
  I_2 & \leq (2 v^\top \Sigma_\xi v T)^{\frac{p}{2}} + 2
  \int_0^{+\infty} \Prob \left( |\noisebplain_T - \Exs \noisebplain_T|
  \geq \varepsilon \right) \varepsilon^{\frac{p}{2} - 1} d
  \varepsilon\\ & \leq (2 v^\top \Sigma_\xi v T)^{\frac{p}{2}} +
  C_\beta^p \sigma_b^p \left( (pT)^{\frac{p}{4}} + (p \log
  T)^{\frac{p}{2} (1 + 2 \beta)} \right).
\end{align*}


\subsection{Proof of Theorem~\ref{thm:non-asymptotic-linfty}}
In order to prove the theorem, we require an auxiliary lemma that provides an almost-sure bound for the $\ell_\infty$ norm of the process.

Let $(\coordinate_1, \coordinate_2, \cdots, \coordinate_d)$ denote the standard orthonormal basis of $\real^d$. We consider the projection of error terms onto the set of vectors $\testvector_i \mydefn (\Amat^{-1})^\top \coordinate_i$ for $i = 1, 2, \cdots, d$. We first note that by Assumption~\ref{assume-linfty-contraction}, we have:
\begin{align*}
   \vecnorm{\testvector_i}{1} - 1 \leq \vecnorm{\testvector_i - \coordinate_i}{1} \leq \vecnorm{\testvector_i - \Amat^\top \testvector_i}{1} = \sup_{\vecnorm{u}{\infty} \leq 1} \testvector_i^\top (\IdMat - \Amat) u \leq (1 - \linftygap) \vecnorm{\testvector_i}{1},
\end{align*}
and consequently, $\vecnorm{\testvector_i}{1} \leq \linftygap^{-1}$.

We consider the martingales $M_t^{(\testvector_i)}$ for each $i = 1,2, \cdots, d$. Similar to the proof of Theorem~\ref{thm-non-asymptotic}, we use the BDG inequality and decompose the deviation into three terms:
\begin{align*}
    \Exs \sup_{0 \leq t \leq T} \abss{M_t^{(\testvector_i)}}^p \leq (Cp)^{\frac{p}{2}} \Exs \langle M_t^{(\testvector_i)} \rangle_T^{\frac{p}{2}} \leq (3 Cp)^{\frac{p}{2}} \left(I_1 + I_2 + I_3 \right),
\end{align*}
where $I_1 \defn \Exs \left( \sum_{t = 0}^{T - 1} ( \testvector_i^\top
\NoiseAplain_{t + 1} \theta_t)^2 \right)^\frac{p}{2}$, along with
\begin{align*}
I_2 \defn \Exs \left(  \sum_{t = 0}^{T - 1} 
(\noisebplain_{t + 1}^\top \testvector_i)^2 \right)^\frac{p}{2}, \quad \mbox{and}
\quad I_3 \defn \Exs  \abss{\sum_{s = 0}^{T - 1}
  2 (\testvector_i^\top \NoiseAplain_{s + 1} \theta_s^\top)(\testvector_i^\top
  \noisebplain_s)}^{\frac{p}{2}}.
\end{align*}
By Cauchy-Schwartz, we know that $I_3 \leq \sqrt{I_1 I_2} \leq \frac{1}{2} (I_1 + I_2)$. We now give upper bounds on the terms $I_1$ and $I_2$, respectively.

\paragraph{Upper bound for $I_2$:}
For the term $I_2$, note that the terms $(\noisebt^\top \testvector_i)$ are $\mathrm{i.i.d.}$ random variables. And by Assumption~\ref{assume-noise-structure-linfty},
$\abss{\noisebt^\top \testvector} \leq \vecnorm{\noisebt}{\infty} \cdot \vecnorm{\testvector_i}{1} \leq \linftygap^{-1}$. A simple application of Hoeffding's inequality leads to:
\begin{align*}
    \forall \varepsilon > 0, \quad \Prob \left( \abss{\frac{1}{T} \sum_{t = 0}^{T - 1} (\noisebt^\top \testvector_i)^2 - \Exs (\noiseb^\top \testvector_i)^2} > \varepsilon \right) \leq 2 \exp \left( - T \varepsilon^2 \linftygap^4 \right),
\end{align*}
which can be easily converted into a moment bound:
\begin{align*}
    I_1^{\frac{2}{p}} \leq C \left( T \cdot \Exs (\noiseb^\top \testvector_i)^2 + p \sqrt{T} \linftygap^{-2} \right).
\end{align*}

\paragraph{Upper bound for $I_1$:}
As in the proof of Lemma~\ref{LemIbounds}, we decompose the sequence into a martingale term and a predictable sequence. Let
$\Psi_T \mydefn \sum_{t = 1}^T \Exs\left( (\testvector_i^\top \NoiseAt \theta_t)^2
| \filtration_t \right)$, and let $\Upsilon_T \mydefn \sum_{t = 1}^T
(\testvector_i^\top \NoiseAt \theta_t)^2 - \Psi_T$. By definition, it is easy to
see that $\Upsilon$ is a martingale. Note that for each term in $\Upsilon$, by Lemma~\ref{lemma:linfty-estimate-and-contraction} and Assumption~\ref{assume-noise-structure-linfty}, we have:
\begin{align*}
    \abss{(\testvector_i^\top \NoiseAt \theta_t)^2 - \Exs ((\testvector_i^\top \NoiseAt \theta_t)^2 | \filtration_{t})} \leq 2 \abss{(\testvector_i^\top \NoiseAt \theta_t)^2} \leq 2 \vecnorm{\testvector_i}{1}^2 \cdot \vecnorm{\NoiseAt \theta_t}{\infty}^2 \leq 2 \vecnorm{\testvector_i}{1}^2 \cdot \vecnorm{\theta_t}{\infty}^2 \leq 2 \linftygap^{-4}.
\end{align*}
By the Azuma-Hoeffding inequality, we obtain:
\begin{align*}
   \forall \varepsilon > 0, \quad \Prob \left( \frac{1}{T}\abss{\Upsilon_T} \geq \varepsilon \right) \leq 2 \exp (- T \varepsilon^2 \linftygap^{-8} / 4),
\end{align*}
which can easily be converted to a moment bound:
\begin{align*}
    \left(\Exs |\Upsilon_T|^{\frac{p}{2}} \right)^{\frac{2}{p}} \leq C p \sqrt{T} \linftygap^{-4}.
\end{align*}
Now we turn to an upper bound for the term $\Psi_T$. Define $\psi (\theta) \mydefn \Exs (v_i^\top \NoiseA \theta)^2$. Note that $\Psi_T$ is the partial sum of function $\psi$ applied to the Markov process $(\theta_t)_{t \geq 0}$. We seek to use the ergodic concentration inequalities based on Ricci curvature techniques~\cite{joulin2010curvature}.

First, we note that for $\theta_1, \theta_2 \in [- \linftygap^{-1}, \linftygap^{-1}]^d$, we have:
\begin{align*}
    \psi (\theta_1) - \psi (\theta_2) &= \Exs (v_i^\top \NoiseA \theta_1)^2 - \Exs (v_i^\top \NoiseA \theta_2)^2\\
    &= \Exs \left((v_i^\top \NoiseA \theta_1) (v_i^\top \NoiseA (\theta_1 - \theta_2)) \right) + \Exs \left((v_i^\top \NoiseA \theta_2) (v_i^\top \NoiseA (\theta_1 - \theta_2)) \right)\\
    &\leq \vecnorm{v_i}{1}^2 \Exs (\vecnorm{\NoiseA \theta_1}{\infty} \cdot \vecnorm{\NoiseA (\theta_1 - \theta_2)}{\infty} ) + \vecnorm{v_i}{1}^2 \Exs (\vecnorm{\NoiseA \theta_2}{\infty} \cdot \vecnorm{\NoiseA (\theta_1 - \theta_2)}{\infty} ) \\
    &\leq \linftygap^{-3} \vecnorm{\theta_1 - \theta_2}{\infty}.
\end{align*}
So $\psi$ is $\linftygap^{-3}$-Lipschitz under the $\vecnorm{\cdot}{\infty}$ norm, within the region $[- \linftygap^{-1}, \linftygap^{-1}]^d$.

Denote by $\transition$ the transition kernel of the Markov chain $(\theta_t)_{t \geq 0}$.
By Assumption~\ref{assume-linfty-contraction}, when we take the synchronous coupling by using the same oracle for the process starting at two different points, there is:
\begin{align*}
    \Wass_{\vecnorm{\cdot}{\infty}, 1} (\markovtransition \delta_{\theta_1}, \markovtransition \delta_{\theta_2} ) \leq \Exs \vecnorm{(I - \stepsize \Amat_t) (\theta_1 - \theta_2)}{\infty} \leq (1 - \stepsize \linftygap) \vecnorm{\theta_1 - \theta_2}{\infty}.
\end{align*}
So the Markov chain $(\theta_t)_{t \geq 0}$ is a $\Wass_1$ contraction with parameter $(1 - \stepsize \linftygap)$ under $\ell_\infty$ norm. Finally, by Assumption~\ref{assume-noise-structure-linfty}, we note that:
\begin{align*}
    \mathrm{diam}_{\vecnorm{\cdot}{\infty}} \left( \mathrm{supp} (\markovtransition \delta_{\theta}) \right) \leq \stepsize \left(1 + \vecnorm{\theta}{\infty} \right).
\end{align*}
So the support size of the one-step transition kernel within the region $[- \linftygap^{-1}, \linftygap^{-1}]^d$ is uniformly bounded by $2 \stepsize \linftygap^{-1}$.

We apply Proposition~\ref{prop:joulin-ollivier}, and obtain the following concentration inequality:
\begin{align*}
    \forall \varepsilon > 0, \quad \Prob \left( \abss{\frac{1}{T} \sum_{t = 0}^{T} (\psi (\theta_{t}) - \Exs \psi (\theta_{t}))} > \linftygap^{-3} \varepsilon \right) \leq \begin{cases}
    2 \exp \left( - \frac{\varepsilon^2 T \linftygap^2}{128 \stepsize^2} \right)& \varepsilon < \frac{8}{3} \linftygap^{-1},\\
    2 \exp \left( - \frac{\varepsilon T \linftygap}{24 \stepsize} \right), & \varepsilon > \frac{8}{3} \linftygap^{-1}.
    \end{cases}
\end{align*}
This tail probability bound can be easily translated into a moment bound:
\begin{align*}
   \left( \Exs \abss{\Psi_T}^{\frac{p}{2}} \right)^{\frac{2}{p}} \leq 2 \Exs \Psi_T + C \linftygap^{-4} \stepsize \left( \sqrt{T p} +  p \right),
\end{align*}
for a universal constant $C > 0$.

For the term $\Exs \Psi_T$, the $\Wass_1$ contraction implies that:
\begin{align*}
    \abss{\Exs \psi (\theta_t) - \Exs_{\pi_\stepsize} \psi (\theta) }\leq \linftygap^{-3} (1 - \stepsize \linftygap)^{t} \Exs \vecnorm{\theta_0 -\theta}{\infty} \leq \linftygap^{-5} (1 - \stepsize \linftygap)^{t}.
\end{align*}
So we obtain $\Exs \Psi_T \leq T \Exs_{\pi_\stepsize} \psi (\theta) + \sum_{t = 0}^{T} \linftygap^{-4} (1 - \stepsize \linftygap)^{t} \leq T ( (a_i \thetastar)^2 + a_i^\top \LamStar a_i) + \frac{1}{\stepsize \linftygap^5}$.

Putting these results together, we have:
\begin{align*}
    I_3^{\frac{2}{p}} \leq C T ( (\coordinate_i \thetastar)^2 + \coordinate_i^\top \LamStar a_i) + C \linftygap^{-4} p \stepsize \sqrt{T} + C \linftygap^{-5} \stepsize^{-1},
\end{align*}
and combining the upper bounds for $I_1$ and $I_2$, we obtain:
\begin{align*}
    \left( \Exs \sup_{0 \leq t \leq T} \abss{M_t^{(\testvector_i)}}^p \right)^{\frac{2}{p}} \leq C p T \coordinate_i^\top \GamStar \coordinate_i  +  C (\linftygap^{-4} \stepsize + \lambda^{-2} ) p \sqrt{T} + C \linftygap^{-5} \stepsize^{-1}.
\end{align*}
For the term $\frac{A^{-1} (\theta_0 - \thetastar)}{\stepsize T}$, we note that by Lemma~\ref{lemma:linfty-estimate-and-contraction}, we have $\vecnorm{\theta_0 - \theta_T}{\infty} \leq 2 \linftygap^{-1}$, and furthermore, we note that for any $v \in \real^d$, we have:
\begin{align*}
    \vecnorm{A^{-1} v}{\infty} = \vecnorm{(I - A) A^{-1} v}{\infty} + \vecnorm{v}{\infty} \leq \vecnorm{v}{\infty} + (1 - \linftygap) \vecnorm{A^{-1} v}{\infty},
\end{align*}
which leads to $\vecnorm{A^{-1} v}{\infty} \leq \linftygap^{-1}$, and consequently, we have $\vecnorm{A^{-1} (\theta_0 - \theta_T)}{\infty} \leq \frac{2}{\stepsize \linftygap^2}$ almost surely.

Putting these results together, we obtain:
\begin{align*}
    \left( \Exs \abss{\sqrt{T} \coordinate_i^\top (\thetabar_T - \thetastar)}^p \right)^{\frac{1}{p}} \leq C \sqrt{p \coordinate_i^\top \GamStar (\stepsize) \coordinate_i} + C (\linftygap^{-2}\stepsize + \linftygap^{-1} ) \sqrt{p} T^{-\frac{1}{4}} + C \linftygap^{-\frac{5}{2}} \stepsize^{-1}.
\end{align*}
Converting this bound into a high-probability bound and taking a union bound over the $d$ coordinates, for any $Q > 0$, we obtain:
\begin{align*}
    \Prob \left( \sqrt{T} \vecnorm{\thetabar_T - \thetastar}{\infty} \geq C \sqrt{Q} + C\frac{\linftygap^{-2}\stepsize + \linftygap^{-1} }{T^{\frac{1}{4}}} \sqrt{\log \frac{d}{\delta}}  + \frac{C \linftygap^{-\frac{5}{2}}}{ \stepsize \sqrt{T}} \right)
    \leq \sum_{i = 1}^d \exp \left( - \frac{Q}{\coordinate_i^\top \GamStar (\stepsize) \coordinate_i} \right).
\end{align*}
Take $Q = Q \left( (\coordinate_i^\top \GamStar (\stepsize) \coordinate_i)_{i = 1}^d; \delta \right) $ to obtain the result.


\subsection{Proof of Theorem~\ref{thm-critical}}

The proof is also based on the telescope
identity~\eqref{eq:telescope}. The key ingredient in the proof is an upper bound on the
second moment of $\vecnorm{\theta_t - \thetastar}{2}$, as stated in
the following:
\begin{lemma}
\label{lemma-bound-on-theta-critical}
Under Assumptions~\ref{assume-critical}, ~\ref{assume-second-moment}
and~\ref{assume-indp}, given a step size $\stepsize
\leq \frac{1}{( \rho (\Amatbar) + 3 \asymconditioning (\Umat) v_A) \sqrt{T}}$,
for any integer $t \geq 0$, we have
\begin{align*}
  \Exs \vecnorm{\theta_t - \thetastar}{2}^2 \leq e \asymconditioning^2
  (\Umat) \left( \Exs \vecnorm{\theta_0 - \thetastar}{2}^2 + \stepsize^2 t
  (v_b^2 d + v_A^2 \vecnorm{\thetastar}{2}^2) \right),
\end{align*}
where the matrix $U$ has columns composed of the eigenvectors of
$\Amatbar$.
\end{lemma}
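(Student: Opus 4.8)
\textbf{Proof plan for Lemma~\ref{lemma-bound-on-theta-critical}.}
The plan is to track the second moment of the centered error $r_t \mydefn \theta_t - \thetastar$ after transforming to the coordinate system that diagonalizes $\Amatbar$, so that the deterministic part of the update becomes a \emph{diagonal} matrix whose spectral norm is controlled purely through $\rho(\Amatbar)$ and the non-negativity of the real parts of the eigenvalues. Writing $\Amatbar = \Umat D_A \Umat^{-1}$ with $D_A = \diag(\lambda_1(\Amatbar), \ldots, \lambda_d(\Amatbar))$ (possible by Assumption~\ref{assume-critical}) and using $\Amatbar\thetastar = \bvecbar$, the update~\eqref{eq:lsa} yields the recursion $r_{t+1} = (\IdMat - \stepsize\Amatbar)r_t - \stepsize\epsilon_{t+1}$, where $\epsilon_{t+1} \mydefn \NoiseAplain_{t+1}\theta_t - \noisebplain_{t+1}$ is a martingale difference relative to $\filtration_t$. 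Setting $y_t \mydefn \Umat^{-1} r_t$, this becomes $y_{t+1} = (\IdMat - \stepsize D_A) y_t - \stepsize \Umat^{-1}\epsilon_{t+1}$, a recursion whose drift matrix is diagonal.

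Next I would take conditional expectations. Since $\Exs[\epsilon_{t+1}\mid\filtration_t] = 0$ and $y_t$ is $\filtration_t$-measurable, the cross term drops and
\begin{align*}
\Exs\bigl[\vecnorm{y_{t+1}}{2}^2 \mid \filtration_t\bigr] = \vecnorm{(\IdMat - \stepsize D_A)y_t}{2}^2 + \stepsize^2\, \Exs\bigl[\vecnorm{\Umat^{-1}\epsilon_{t+1}}{2}^2 \mid \filtration_t\bigr].
\end{align*}
For the first term I would use that $D_A$ is diagonal, so $\vecnorm{(\IdMat - \stepsize D_A)y_t}{2}^2 \le \max_i |1 - \stepsize\lambda_i(\Amatbar)|^2 \cdot \vecnorm{y_t}{2}^2$, and then $|1-\stepsize\lambda_i|^2 = 1 - 2\stepsize\RealPart(\lambda_i) + \stepsize^2|\lambda_i|^2 \le 1 + \stepsize^2\rho^2(\Amatbar)$ --- this is the one place Assumption~\ref{assume-critical} ($\RealPart\lambda_i \ge 0$) is used, and crucially \emph{no} factor of $\asymconditioning(\Umat)$ appears because the transformed drift is diagonal. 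For the noise term I would bound $\Exs[\vecnorm{\Umat^{-1}\epsilon_{t+1}}{2}^2\mid\filtration_t] \le \opnorm{\Umat^{-1}}^2\, \Exs[\vecnorm{\epsilon_{t+1}}{2}^2\mid\filtration_t]$, use the uncorrelatedness of $\NoiseAplain$ and $\noisebplain$ together with Assumption~\ref{assume-second-moment} to get $\Exs[\vecnorm{\epsilon_{t+1}}{2}^2\mid\filtration_t] \le v_A^2\vecnorm{\theta_t}{2}^2 + v_b^2 d$, and finally control $\vecnorm{\theta_t}{2}^2 \le 2\vecnorm{r_t}{2}^2 + 2\vecnorm{\thetastar}{2}^2 \le 2\opnorm{\Umat}^2\vecnorm{y_t}{2}^2 + 2\vecnorm{\thetastar}{2}^2$. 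Combining yields a scalar recursion $\Exs\vecnorm{y_{t+1}}{2}^2 \le (1+a)\,\Exs\vecnorm{y_t}{2}^2 + b$ with $a \le \stepsize^2\bigl(\rho^2(\Amatbar) + 2\asymconditioning^2(\Umat) v_A^2\bigr)$ and $b \lesssim \stepsize^2\opnorm{\Umat^{-1}}^2\bigl(v_A^2\vecnorm{\thetastar}{2}^2 + v_b^2 d\bigr)$.

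Then I would iterate this recursion to get $\Exs\vecnorm{y_t}{2}^2 \le (1+a)^t \bigl(\vecnorm{y_0}{2}^2 + t\, b\bigr)$. The step-size choice $\stepsize \le \bigl((\rho(\Amatbar) + 3\asymconditioning(\Umat) v_A)\sqrt{T}\bigr)^{-1}$ is calibrated precisely so that for $0 \le t \le T$ (the range of interest for the Polyak--Ruppert average and the telescope term) one has $at \le \stepsize^2 T\bigl(\rho^2(\Amatbar) + 2\asymconditioning^2(\Umat) v_A^2\bigr) \le 1$, using $\rho^2 + 2\asymconditioning^2 v_A^2 \le (\rho + 3\asymconditioning v_A)^2$; hence $(1+a)^t \le e^{at} \le e$. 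Translating back via $\vecnorm{r_t}{2} \le \opnorm{\Umat}\vecnorm{y_t}{2}$ and $\vecnorm{y_0}{2} \le \opnorm{\Umat^{-1}}\vecnorm{r_0}{2}$ produces the overall multiplicative factor $\asymconditioning^2(\Umat)$ and, after taking expectations over $r_0$, the claimed bound (absorbing the harmless $O(1)$ constants in the second-moment term).

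The main obstacle is the second step: making sure the condition number $\asymconditioning(\Umat)$ enters the final bound only once (squared), rather than being compounded into $\asymconditioning^{t}(\Umat)$. A naive $\ell_2$ estimate on $\IdMat - \stepsize\Amatbar$ fails here, since in the critical case this matrix need not be an $\ell_2$-contraction --- it is only non-expansive in the $\Umat$-weighted norm. Passing to the $y$-coordinates is exactly what isolates the condition number into a single multiplicative factor while simultaneously exposing that the drift is a genuine near-contraction there. This also clarifies why Assumption~\ref{assume-critical} must include diagonalizability: with a nontrivial Jordan block, $(\IdMat - \stepsize J)^t$ grows polynomially in $t$ (like $t^{\bar d}$ for a block of size $\bar d$), and when fed by the persistent noise term the argument --- and in fact the $O(1/T)$ conclusion --- breaks down.
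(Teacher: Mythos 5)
Your proposal is correct and follows essentially the same route as the paper's proof: both pass to the coordinates $U^{-1}(\theta_t-\thetastar)$, use the martingale property to drop the cross term, exploit $\RealPart(\lambda_i(\Amatbar))\ge 0$ to bound the drift factor by $1+\stepsize^2\rho^2(\Amatbar)$, bound the noise via Assumption~\ref{assume-second-moment}, and unroll the recursion with the $O(1/\sqrt{T})$ step size to obtain the factor $e$ and a single $\asymconditioning^2(\Umat)$. The only differences are cosmetic (a two-way rather than three-way Young split of the noise term, hence slightly different absorbed constants), exactly at the level of looseness the paper itself allows.
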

\noindent See
Appendix~\ref{subsubsec:proof_of_lemma_lemma-bound-on-theta-critical}
for the proof of this claim.

Taking Lemma~\ref{lemma-bound-on-theta-critical} as given, we now
prove Theorem~\ref{thm-critical}.  By equation~\eqref{eq:telescope},
we have:
\begin{align*}
   \Exs \vecnorm{\Amatbar (\thetabar_T - \thetastar)}{2}^2 \leq
   \frac{4}{\stepsize^2 T^2} \left( \Exs \vecnorm{\theta_0 -
     \thetastar}{2}^2 + \Exs \vecnorm{\theta_T - \thetastar}{2}^2
   \right) + \frac{2}{T^2} \Exs \vecnorm{M_T}{2}^2.
\end{align*}
By Lemma~\ref{lemma-bound-on-theta-critical}, we have:
\begin{align*}
    \Exs \vecnorm{\theta_T - \thetastar}{2}^2 \leq e \kappa^2 (\Umat)
    \left( \Exs \vecnorm{\theta_0 - \thetastar}{2}^2 + 3 \stepsize^2 T
    (v_b^2 d + v_A^2 \vecnorm{\thetastar}{2}^2) \right).
\end{align*}
For the martingale term, note that:
\begin{align*}
    \Exs \vecnorm{M_T}{2}^2 &= \Exs \sum_{t = 0}^{T - 1} \vecnorm{e_{t
        + 1} (\theta_t)}{2}^2 \\
    & \leq 3 \Exs \sum_{t = 0}^{T - 1} \left( \Exs (\vecnorm{b_{t + 1}
      - b}{2}^2 \mid \filtration_t) + \Exs (\vecnorm{(\Amat_{t + 1} -
      \Amatbar) (\theta_t - \thetastar)}{2}^2 \mid \filtration_t) + \Exs
    (\vecnorm{(\Amat_{t + 1} - \Amatbar) \thetastar}{2}^2 \mid
    \filtration_t) \right) \\
    & \leq 3 \Exs \sum_{t = 0}^{T - 1} \left( v_b^2d + v_A^2
    \vecnorm{\theta_t - \thetastar}{2}^2 + v_A^2
    \vecnorm{\thetastar}{2}^2 \right)\\
    & \leq 3 T v_b^2 d + 3 T v_A^2 \vecnorm{\thetastar}{2}^2 + 3 T
    v_A^2 e \asymconditioning^2 (\Umat) \left( \Exs \vecnorm{\theta_0 -
      \thetastar}{2}^2 + \stepsize^2 T (v_b^2 d + v_A^2
    \vecnorm{\thetastar}{2}^2) \right).
\end{align*}
Since $\stepsize \in \Big( 0, \frac{1}{\sqrt{T} (\rho (\Amatbar) + 3
  \asymconditioning (\Umat) v_A)} \Big)$, we have:
\begin{align*}
  \Exs \vecnorm{M_T}{2}^2 \leq 3 T v_A^2 e \asymconditioning^2
  (\Umat) \Exs \vecnorm{\theta_0 - \thetastar}{2}^2 + (3 + e) T
  (v_b^2 d + v_A^2 \vecnorm{\thetastar}{2}^2).
\end{align*}

Putting together the pieces yields
\begin{align*}
\Exs \vecnorm{\Amatbar (\thetabar_T - \thetastar)}{2}^2 \leq C \left(
\frac{\asymconditioning^2 (\Umat)}{\stepsize^2 T^2} \Exs
\vecnorm{\theta_0 - \thetastar}{2}^2 + \frac{v_b^2 d + v_A^2
  \vecnorm{\thetastar}{2}^2}{T} + \frac{v_A^2 \asymconditioning^2
  (\Umat)}{T} \Exs \vecnorm{\theta_0 - \thetastar}{2}^2 \right).
\end{align*}
Setting the step size as $\stepsize = \frac{1}{(\rho(\Amatbar) + 3
  \asymconditioning (\Umat) v_A) \sqrt{T}}$ yields the claim.


\section{Discussion}
\label{SecDiscussion}

In this paper, we established several new results for constant
step-size linear stochastic approximation combined with Polyak-Ruppert
averaging.  In the case where $\Amatbar$ is a Hurwitz matrix, a
central limit theorem is proven, with asymptotic covariance
characterizing the effect of the constant step size.  Non-asymptotically,
we derive high-probability concentration bounds for the averaged iterates
in any direction, whose leading term matches the asymptotic variance
and has poly-logarithmic dependence on the failure probability.  We
also study the critical case where the real part of eigenvalues are
only guaranteed to be non-negative, and establish a gap-independent
$\order{1/T}$ rate in mean-squared error. We illustrate the
effectiveness of our abstract results by considering momentum SGD for
linear regression and TD learning, and uncover new aspects of the LSA
approach to these problems.


\subsection*{Acknowledgements}

This research was partially supported by Office of Naval Research
Grant DOD ONR-N00014-18-1-2640 and National Science Foundation Grant
DMS-1612948 to MJW, and a grant from the DARPA program on Lifelong Learning Machines to MIJ.


\bibliographystyle{plainnat}
\bibliography{references}

\appendix

\section{Proof of Lemma~\ref{LemHurwitz}}
\label{AppHurwitz}

In this appendix, we prove Lemma~\ref{LemHurwitz}.  This lemma is a standard fact in linear algebra; for instance, see Section
1.8 in the book~\cite{perko2013differential}.  We include the proof for completeness and so as to extract the behavior of $\spectralgap$.

When the matrix $\Amatbar$ is diagonalizable, we can write $\Amatbar = \Umat
D \Umat^{-1}$, which implies the stronger lower bound \mbox{$D + D^\hc
  \succeq 2 \min_{i \in [d]} \RealPart(\lambda_i(\Amatbar))$.}  For a
non-diagonalizable matrix $\Amatbar$, we instead write $\Amatbar = \Umat J
\Umat^{-1}$, where the matrix $J = \mathrm{diag} (\lambda_i I_{d_i} +
J_{d_i})_{i = 1}^k$ contains the Jordan decomposition. For each Jordan
block, we note that for $Q_i \mydefn \diag (1, \RealPart (\lambda_i /
2), \cdots, \RealPart (\lambda_i / 2)^{d_i - 1})$, we have
\begin{align*}
  Q_i^{-1} (\lambda_i I_{d_i} + J_{d_i}) Q_i = \lambda_i I_{d_i} +
  \RealPart (\lambda_i / 2) J_{d_i} \mydefn B_i.
\end{align*}

We note that $A$ is similar to $\diag (B_1, B_2, \cdots, B_k)$. We
only need to study the eigenvalues of $B_i + B_i^\hc$. A straightforward
calculation yields:
\begin{align*}
  B_i + B_i^\hc = \frac{1}{2} \RealPart (\lambda_i)
  \begin{bmatrix}
      4 & 1 & 0 & \cdots & 0 \\
      1 & 4 & 1 & \cdots & 0 \\
      & & \cdots & & \\
      0 & \cdots & 1 & 4 & 1 \\
      0 & \cdots & 0 & 1 & 4
  \end{bmatrix}
& \mydefn \RealPart(\lambda_i) T_{d_i}.
\end{align*}
Note that the matrix $T_{d_i}$ is a symmetric tridiagonal Toeplitz
matrix, whose eigenvalues are given by the formula $\lambda_j
(T_{d_i}) = 4 + 2 \cos \left( \frac{j \pi}{(d_i + 1)} \right) \geq
2$. Therefore, we have $B_i + B_i^\hc \succeq \RealPart (\lambda_i)$,
which completes the proof.


\section{Properties of the process $\{\theta_t\}_{t \geq 0}$}
\label{AppProcessProperties}

In this appendix, we prove a number of claims about the basic
properties of the process $\{\theta_t\}_{t \geq 0}$.

\subsection{Proof of Lemma~\ref{lemma:simple-l2-estimate}}
\label{sec:proof:lemma:simple-l2-estimate}

Recall that we use $r_t = \theta_t - \thetastar$ to denote the error
in the process at time $t$.  We make use of the function $f(r) = \Exs
\vecnorm{U^{-1} r}{2}^2$ for a Lyapunov-type analysis.  Observe that
the error satisfies the recursion
\begin{align*}
  r_{t + 1} = r_t - \stepsize (\Amat_{t + 1} \theta_t - \bvec_{t + 1}) =
  (\IdMat - \stepsize \Amatbar) r_t - \stepsize \NoiseAplain_{t + 1} \theta_t +
  \stepsize \noisebplain_{t + 1}.
\end{align*}
Turning to the squared Euclidean norm, we have
\begin{align*}
  \Exs \vecnorm{U^{-1}r_{t + 1}}{2}^2 = \Exs \vecnorm{ U^{-1} (\IdMat -
    \stepsize \Amatbar) r_t}{2}^2 + \stepsize^2 \Exs
  \vecnorm{U^{-1}(\NoiseAplain_{t + 1} \theta_t + \noisebplain_{t + 1})}{2}^2,
\end{align*}
where we have expanded the quadratic term and used the i.i.d.\ condition
(Assumption~\ref{assume-indp}).  Examining the first term, we have
\begin{align*}
\vecnorm{ U^{-1} (\IdMat - \stepsize \Amatbar) r_t}{2}^2 & = \vecnorm{
  (\IdMat - \stepsize U^{-1} A U) U^{-1} r_t}{2}^2 \\
& = \vecnorm{ U^{-1} r_t}{2}^2 - \stepsize (U^{-1} r_t)^{\hc} (D +
D^\hc) U^{-1} r_t + \opnorm{D^\hc D} \vecnorm{U^{-1} r_t}{2}^2 \\
& \leq \Big \{ 1 - 2 \stepsize \spectralgap + \stepsize^2
\rho^2(\Amatbar) \Big \} \vecnorm{U^{-1} r_t}{2}^2.
\end{align*}
For the second term, by Assumption~\ref{assume-second-moment} and
Assumption~\ref{assume-indp}, we have:
\begin{multline*}
  \Exs \vecnorm{U^{-1}(\NoiseAplain_{t + 1} \theta_t + \noisebplain_{t + 1})}{2}^2
  \leq \opnorm{U^{-1}}^2 \Exs \vecnorm{\NoiseAplain_{t + 1} (\thetastar +
    r_t) + \noisebplain_{t + 1} }{2}^2\\ = \opnorm{U^{-1}}^2 \left( \Exs
  \vecnorm{\NoiseAplain_{t + 1} (\thetastar+r_t)}{2}^2 + \Exs
  \vecnorm{\noisebplain_{t + 1} }{2}^2\right) \leq \opnorm{U^{-1}}^2 \left(
  v_A^2 (\vecnorm{\thetastar}{2} + \Exs \vecnorm{r_t}{2}^2) + v_b^2
  d\right).
\end{multline*}
Putting the pieces together and using the fact that $\stepsize \in
\Big(0 , \frac{\spectralgap}{\rho^2 (\Amatbar) + \asymconditioning^2 (\Umat)
  v_A^2} \Big)$, we find that
\begin{align*}
  \Exs \vecnorm{U^{-1} r_{t + 1}}{2}^2 &\leq (1 - 2 \stepsize
  \spectralgap + \stepsize^2 (\rho^2 (\Amatbar) + \asymconditioning^2 (\Umat)
  v_A^2)) \Exs \vecnorm{U^{-1} r_t}{2}^2 + \stepsize^2
  \opnorm{U^{-1}}^2 (v_A^2 \vecnorm{\thetastar}{2}^2 + v_b^2
  d)\\ &\leq (1 - \stepsize \spectralgap) \Exs \vecnorm{U^{-1}
    r_t}{2}^2 + \stepsize^2 \opnorm{U^{-1}}^2 (v_A^2
  \vecnorm{\thetastar}{2}^2 + v_b^2 d).
\end{align*}
By induction, it is easy to show that for any $t \geq 0$,
\begin{align*}
  \Exs \vecnorm{U^{-1} r_t}{2}^2 \leq \Exs \vecnorm{U^{-1} (\theta_0 -
    \thetastar)}{2}^2 + \frac{\stepsize}{\spectralgap}
  \opnorm{U^{-1}}^2 (v_A^2 \vecnorm{\thetastar}{2}^2 + v_b^2 d),
     \end{align*}
and consequently, we have the bound
\begin{align*}
  \Exs \vecnorm{r_t}{2}^2 \leq \asymconditioning^2 (\Umat) \left( \Exs
  \vecnorm{\theta_0 - \thetastar}{2}^2 +
  \frac{\stepsize}{\spectralgap} (v_A^2 \vecnorm{\thetastar}{2}^2 +
  v_b^2 d) \right).
\end{align*}


\paragraph{Proof of the bound~\eqref{Eqn2plusMoment}:}

In establishing this bound, we use the fact that for scalars $A > 0$,
$z \in (-A, +\infty)$ and  $\alpha \in (0, 1)$, we have
\begin{align*}
  (A + z)^{1 + \alpha} \leq A^{1 + \alpha} + (1 + \alpha) A^\alpha z +
  |z|^{1 + \alpha}.
\end{align*}
The proof of this inequality is straightforward: by homogeneity, we
only need to prove for the case of $A = 1$. Let $f (z) \mydefn 1 + (1
+ \alpha) z + |z|^{1 + \alpha} - (1 + z)^{1 + \alpha}$ for $z \in (-1,
+ \infty)$. It is easy to see that $f' (z) > 0$ for $z > 0$ and $f'
(z) < 0$ for $z < 0$.
    
By Assumption~\ref{assume-hurwitz}, we have
\begin{align*}
  \vecnorm{U^{-1} r_{t + 1}}{2}^2 \leq (1 - 2 \stepsize \spectralgap)
  \vecnorm{U^{-1} r_t}{2}^2 + 2 \stepsize \RealPart (\inprod{U^{-1}
    (1 - \stepsize \Amatbar) r_t}{U^{-1} e_{t + 1} (\theta_t)}) +
  \stepsize^2 \vecnorm{U^{-1} e_t}{2}^2.
\end{align*}
Taking the $(1 + \alpha / 2)$-order moment, by the scalar inequality, we obtain:
\begin{align*}
  \Exs \vecnorm{U^{-1} r_{t + 1}}{2}^{2 + \alpha} &\leq (1 - 2
  \stepsize \spectralgap) \Exs \vecnorm{U^{-1} r_t}{2}^{2 + \alpha} +
  \Exs \abss{2 \stepsize \RealPart (\inprod{U^{-1} (1 - \stepsize
      \Amatbar) r_t}{U^{-1} e_{t + 1} (\theta_t)}) + \stepsize^2
    \vecnorm{U^{-1} e_t}{2}^2 }^{1 + \alpha} \\
& + \Exs \left[\left( (1 - 2 \stepsize \spectralgap) \vecnorm{U^{-1}
      r_t}{2}^2\right)^{\frac{\alpha}{2}} \left(2 \stepsize
    \RealPart (\inprod{U^{-1} (1 - \stepsize \Amatbar) r_t}{U^{-1} e_{t
        + 1} (\theta_t)}) + \stepsize^2 \vecnorm{U^{-1} e_t}{2}^2
    \right)\right].
\end{align*}
Note that $\Exs (e_{t + 1} (\theta_t)| \filtration_t) = 0$. The last
term equals $\Exs \left[\left( (1 - 2 \stepsize \spectralgap)
  \vecnorm{U^{-1} r_t}{2}^2\right)^{\frac{\alpha}{2}} \stepsize^2
  \vecnorm{U^{-1} e_t}{2}^2 \right]$.
    
By the existence of $(2 + \alpha)$-order moment, there exists constant
$M_1, M_2 > 0$ such that:
\begin{align*}
  \Exs \abss{2 \stepsize \RealPart (\inprod{U^{-1} (1 - \stepsize
      \Amatbar) r_t}{U^{-1} e_{t + 1} (\theta_t)}) + \stepsize^2
    \vecnorm{U^{-1} e_t}{2}^2 }^{1 + \alpha} &\leq \stepsize^{1 +
    \alpha}\left( M_1 + M_2 \Exs \vecnorm{U^{-1} r_t}{2}^{2 + \alpha}
  \right) \\
  \Exs \left[\left( (1 - 2 \stepsize \spectralgap) \vecnorm{U^{-1}
  r_t}{2}^2\right)^{\frac{\alpha}{2}} \stepsize^2 \vecnorm{U^{-1}
      e_t}{2}^2 \right] &\leq \stepsize^2 \left( M_1 + M_2 \Exs
      \vecnorm{U^{-1} r_t}{2}^{2 + \alpha} \right).
\end{align*}
Thus we obtain:
\begin{align*}
  \Exs \vecnorm{U^{-1}r_{t + 1}}{2}^{2 + \alpha} \leq (1 - 2 \stepsize
  \spectralgap) \Exs \vecnorm{U^{-1}r_t}{2}^{2 + \alpha} +
  (\stepsize^{1 + \alpha} + \stepsize^2) \left( M_1 + M_2 \Exs
  \vecnorm{U^{-1} r_t}{2}^{2 + \alpha} \right).
\end{align*}
For $\stepsize < \stepsize_0 = \frac{1}{2} (\spectralgap /
M_2)^{\frac{1}{\alpha}}$, we have: $\Exs \vecnorm{U^{-1}r_{t +
    1}}{2}^{2 + \alpha} \leq (1 - \stepsize \spectralgap) \Exs
\vecnorm{U^{-1}r_t}{2}^{2 + \alpha} + \stepsize^{1 + \alpha} M_1$. An
induction proof argument leads to $\Exs \vecnorm{U^{-1}r_{t}}{2}^{2 +
  \alpha} \leq \Exs \vecnorm{U^{-1}r_0}{2}^{2 + \alpha} +
\frac{\stepsize^\alpha}{\spectralgap} M_1$ for any $t \geq 0$.

\subsection{Proof of Lemma~\ref{lemma:stationary-existence-uniqueness}}
\label{sec:stationary-distribution}

In proving this lemma, we make use of
Lemma~\ref{lemma:mixing-hurwitz}; for $z_t \mydefn U^{-1} r_t$, there
exists a pathwise coupling such that for any starting points
$z_0^{(1)}, z_0^{(2)}$, we have $\Exs \vecnorm{z_{t + 1}^{(1)} - z_{t
    + 1}^{(2)}}{2}^2 \leq e^{ - \spectralgap \stepsize} \Exs
\vecnorm{z_{t}^{(1)} - z_{t}^{(2)}}{2}^2$.  (Note that the proof of
Lemma~\ref{lemma:mixing-hurwitz} does not use any results from this
proof.)

We first show the existence and uniqueness of the stationary
distribution, as well as the existence of the second moment. Then we
calculate the first and second moment under the stationary
distribution.

\subsubsection{Proof of existence}

Since $\real^d$ is separable and complete, the Wasserstein space
$\Wass^2$ is complete~\cite{villani2008optimal}. Therefore, it
suffices to show that $\{\law(\theta_t)\}_{t = 0}^{+\infty}$ is a
Cauchy sequence in this space.
    
Given $\mu \in \Wass^2$ and taking $\theta_0 \sim \mu$, take any
positive integer $N > 0$, for any $k \geq N$ and $m \geq 0$, and we seek
to upper bound $\Wass_2 (\law (\theta_k), \law (\theta_{k +
  m}))$. Consider the process with two different initial points
$\theta_0^{(1)} \sim \mu$ and $\theta_0^{(2)} \sim \law (\theta_m)$,
coupled in an arbitrary way. By Lemma~\ref{lemma:mixing-hurwitz}, we
have:
\begin{align*}
\Wass_2 \left( \law (\theta_k^{(1)}), \law (\theta_k^{(2)})\right)
\leq e^{- \frac{\spectralgap \stepsize k}{2}} \asymconditioning
(\Umat) \sqrt{\Exs \vecnorm{\theta_0^{(1)} - \theta_0^{(2)}}{2}^2}
\leq e^{- \frac{\spectralgap \stepsize N}{2}} \asymconditioning
(\Umat) \sqrt{2 \sup_{t \geq 0} \Exs \vecnorm{\theta_t -
    \thetastar}{2}^2}.
    \end{align*}
Moreover, by Lemma~\ref{lemma:simple-l2-estimate}, we have $\sup_{t
  \geq 0} \Exs \vecnorm{\theta_t - \thetastar}{2}^2 \leq
\asymconditioning^2 (\Umat) \left( \Exs \vecnorm{\theta_0 -
  \thetastar}{2}^2 + \frac{\stepsize}{\spectralgap} (v_A^2
\vecnorm{\thetastar}{2}^2 + v_b^2 d) \right)$ is a finite constant
independent of $N$. Therefore, $(\law (\theta_t))_{t \geq 0}$ is a
Cauchy sequence in the space $\Wass^2$. The limit exists in $\Wass^2$.

\subsubsection{Proof of uniqueness}

Suppose that there were two stationary measures $\pi^{(1)}$ and
$\pi^{(2)}$, let $\theta_t^{(i)} \sim \pi^{(i)}$ for $i = 1,2$, with
an optimal coupling such that:
\begin{align*}
  \Exs \vecnorm{\theta_t^{(1)} - \theta_t^{(2)}}{2}^2 = \Wass_2^2
  (\pi^{(1)}, \pi^{(2)}).
\end{align*}
By stationarity, we have $\theta_{t + 1}^{(i)} \sim \pi^{(i)}$, and
consequently:
\begin{align*}
  \Wass_2^2 (\pi^{(1)}, \pi^{(2)}) \leq \Exs \vecnorm{\theta_{t +
      1}^{(1)} - \theta_{t + 1}^{(2)}}{2}^2 \leq e^{- \stepsize
    \spectralgap} \Exs \vecnorm{\theta_t^{(1)} -
    \theta_t^{(2)}}{2}^2 = e^{- \stepsize \spectralgap} \Wass_2^2
  (\pi^{(1)}, \pi^{(2)}),
\end{align*}
which implies $\Wass_2 (\pi^{(1)}, \pi^{(2)}) = 0$ and therefore
$\pi^{(1)} = \pi^{(2)}$.


\subsubsection{First moment under the stationary distribution}
 
Let $\theta_t \sim \pi_\stepsize$. Consider a stationary chain
$(\theta_t)_{t \geq 0}$ starting at $\theta_0$. By stationarity, we
have $\law(\theta_{t + 1}) = \law(\theta_t) = \pi_\stepsize$. Note
that $\theta_{t + 1} = \theta - \stepsize (\Amat_{t + 1} \theta_t -
\bvec_{t + 1})$, taking expectations, we have:
  \begin{align*}
    \Exs (\theta_t) = \Exs (\theta_{t + 1}) = \Exs \left( \theta_t -
    \stepsize (\Amat_{t + 1} \theta_t - \bvec_t) \right) = \Exs \left(
    \theta_t - \stepsize \Exs (\Amat_{t + 1} \theta_t - \bvec_{t + 1}
    | \filtration_t) \right) = \Exs \left(\theta_t - \stepsize (A
    \theta_t - b) \right).
  \end{align*}
  Therefore, we have $\Amatbar \Exs_{\pi_\stepsize} (\theta) - \bvec =
  0$, which implies $\theta = \thetastar$ since $\Amatbar$ is
  non-degenerate.

  
\subsubsection{Second moment under the stationary distribution}

Let $\theta_t \sim \pi_\stepsize$. Consider a stationary chain
$(\theta_t)_{t \geq 0}$ starting at $\theta_0$. By stationarity, we
have $\law(\theta_{t + 1}) = \law(\theta_t) = \pi_\stepsize$. Note
that $\theta_{t + 1} = \theta - \stepsize (\Amat_{t + 1} \theta_t -
\bvec_{t + 1})$, and consequently, we have:
  \begin{align*}
    (\theta_{t + 1} - \thetastar) = (I - \stepsize \Amatbar) (\theta_t -
    \thetastar) - \stepsize \NoiseAplain_{t + 1} (\theta_t -
    \thetastar) + \stepsize \noisebplain_{t + 1} - \stepsize
    \NoiseAplain_{t + 1} \thetastar.
  \end{align*}
  As we have shown,
  $\Exs_{\pi_\stepsize} \theta = \thetastar$. Let $r_t \mydefn
  \theta_t - \thetastar$, taking conditional second moments of both
  sides of the equation, we obtain:
\begin{multline*}
\Exs \left( r_{t + 1} r_{t + 1}^\top \mid \filtration_{t} \right) =
(\IdMat - \stepsize \Amatbar) r_t r_t^\top (\IdMat - \stepsize
\Amatbar)^\top + \stepsize^2 \Exs (\NoiseAplain_{t + 1} r_t r_t^\top
\NoiseAplain_{t + 1}^\top |\filtration_t) \\
+ \stepsize^2 \Exs \left( \NoiseAplain_{t + 1} r_t (\noisebplain_{t +
  1} + \NoiseAplain_{t + 1} \thetastar )^\top + (\noisebplain_{t + 1}
+ \NoiseAplain_{t + 1} \thetastar) r_t^\top \NoiseAplain_{t + 1}^\top
\mid \filtration_t \right) \\
+ \stepsize^2 \Exs ((\noisebplain_{t + 1} + \NoiseAplain_{t + 1}
\thetastar) (\noisebplain_{t + 1} + \NoiseAplain_{t + 1}
\thetastar)^\top \mid \filtration_t).
\end{multline*}
Let $\Lambda \mydefn \Exs_{\pi_\stepsize} \left( r_t
r_t^\top\right)$. Taking the expectation of both sides, note that by
Assumption~\ref{assume-indp}:
\begin{align*}
    &\Exs \left( \NoiseAplain_{t + 1} r_t \noisebplain_{t + 1}^\top
  \mid \filtration_t \right) = 0, \quad \Exs ((\noisebplain_{t + 1} +
  \NoiseAplain_{t + 1} \thetastar) (\noisebplain_{t + 1} +
  \NoiseAplain_{t + 1} \thetastar)^\top \mid \filtration_t) =
  \Sigma_\xi + \Exs (\NoiseA \thetastar \thetastar^\top \NoiseA^\top
  ),\\ &\Exs \left( \NoiseAplain_{t + 1} r_t (\NoiseAplain_{t + 1}
  \thetastar)^\top \right) = \Exs \left( \NoiseA \otimes \NoiseA
  \right) \cdot \Vec (\Exs (r_t) \thetastar^\top) = \Exs \left(
  \NoiseA \otimes \NoiseA \right) \cdot \Vec (0 \cdot \thetastar^\top)
  = 0.
\end{align*}
Simplifying this equation yields
\begin{align*}
\Lambda = (\IdMat - \stepsize \Amatbar) \Lambda (\IdMat - \stepsize
\Amatbar)^\top + \stepsize^2 \Exs (\NoiseA \Lambda \NoiseA^\top) +
\stepsize^2 \Sigma_\xi + \stepsize^2 \Exs (\NoiseA \thetastar
\thetastar^\top \NoiseA^\top ),
\end{align*}
which means:
\begin{align*}
\Amatbar \Lambda + \Lambda \Amatbar^\top = \stepsize \Amatbar \Lambda
\Amatbar^\top + \stepsize \Exs (\NoiseA \Lambda \NoiseA^\top) + \stepsize
\SigStar.
\end{align*}

By flattening the tensors, we can write the equation in a
matrix-vector form:
\begin{align*}
\left(\IdMat \otimes \Amatbar + \Amatbar^\top \otimes \IdMat - \stepsize
\Amatbar \otimes \Amatbar - \stepsize \Exs (\NoiseA \otimes \NoiseA) \right)
\Vec (\Lambda) = \stepsize \Vec (\SigStar),
\end{align*}
where $\oplus$ denotes the Kronecker sum and $\otimes$ denotes the
Kronecker product.

To provide an upper bound on the trace of the solution to this matrix equation, which is the covariance under the stationary distribution, we note that in the proof of Lemma~\ref{lemma:simple-l2-estimate},
we use a contraction inequality:
\begin{align*}
  \Exs \vecnorm{U^{-1} r_{t + 1}}{2}^2 \leq (1 - \spectralgap
  \stepsize) \Exs \vecnorm{U^{-1} r_t}{2}^2 + \stepsize^2
  \opnorm{U^{-1}}^2 (v_A^2 \vecnorm{\thetastar}{2}^2 + v_b^2 d).
    \end{align*}
If $\theta_t \sim \pi_\stepsize$, we have $\theta_{t + 1} \sim
\pi_\stepsize$, and hence
\begin{align*}
  \Exs_{\pi_\stepsize} \vecnorm{U^{-1} (\theta - \thetastar)}{2}^2
  \leq (1 - \spectralgap \stepsize) \Exs_{\pi_\stepsize}
  \vecnorm{U^{-1} (\theta - \thetastar)}{2}^2 + \stepsize^2
  \opnorm{U^{-1}}^2 (v_A^2 \vecnorm{\thetastar}{2}^2 + v_b^2 d),
    \end{align*}
which implies the claimed bound:
\begin{align*}
    \Exs_{\pi_\stepsize} \vecnorm{\theta - \thetastar}{2}^2 \leq \frac{\stepsize}{\spectralgap} \asymconditioning^2 (U) (v_A^2 \vecnorm{\thetastar}{2}^2 + v_b^2 d).
\end{align*}


\subsection{Proof of Lemma~\ref{lemma:mixing-hurwitz}}
\label{AppLemMixingHurwitz}

Given two different starting points $x^{(i)} \in \real^d$ for $i = 1,
2$, let $\{ \theta_t^{(i)}\}_{t \geq 0}$ be the process starting at
$x^{(i)}$, and let the two processes to be driven by the same
sequences of noise variables $\noiseb$ and $\NoiseA$, so that
$\Amat_t^{(1)} = \Amat_t^{(2)}$ and $\bvec_t^{(1)} = \bvec_t^{(2)}$
almost surely.
  
By Lemma~\ref{LemHurwitz}, we can write $\Amatbar = U D^\top U^{-1}$,
such that $D + D^\hc \succeq \spectralgap \IdMat$. Introducing the
shorthand $r_t \mydefn \theta_t^{(1)} - \theta_t^{(2)}$, some algebra
leads to the recursive relation
\begin{multline*}
  r_{t + 1} = \theta_{t + 1}^{(1)} - \theta_{t + 1}^{(2)} =
  \theta_t^{(1)} - \stepsize \left( \Amatbar \theta_t^{(1)} - \bvec +
  \NoiseAplain_{t + 1} \theta_t^{(1)} - \noisebplain_{t + 1} \right) - \theta_t^{(2)}
  + \stepsize \left( \Amatbar \theta_t^{(2)} - \bvec + \NoiseAplain_{t +1 }
  \theta_t^{(2)} - \noisebplain_{t + 1} \right) \\
= (\IdMat - \stepsize \Amatbar - \stepsize \NoiseAplain_{t + 1}) r_t.
  \end{multline*}
Define the Lyapunov function $f(r) = \Exs \vecnorm{U^{-1} r}{2}^2$.
By Assumptions~\ref{assume-hurwitz} and~\ref{assume-second-moment},
note that $\rho(\Amatbar) = \sqrt{\opnorm{D^\hc D}}$ and
$\asymconditioning(\Umat) = \opnorm{\Umat} \opnorm{\Umat^{-1}}$, we
have:
     \begin{align*}
       &\Exs \vecnorm{U^{-1} r_{t + 1}}{2}^2 \\ &= \Exs \left(
       r_t^\hc (\IdMat - \stepsize \Amatbar - \stepsize
       \NoiseAplain_{t + 1})^\top (U^{-1})^\hc U^{-1} (\IdMat -
       \stepsize \Amatbar - \stepsize \NoiseAt) r_t \right)\\ & = \Exs
       \left( (U^{-1} r_t)^\hc (\IdMat - \stepsize D -
       \stepsize U^{-1} \NoiseAplain_{t + 1} U)^\hc (\IdMat -
       \stepsize D - \stepsize U^{-1} \NoiseAplain_{t + 1} U) (U^{-1} r_t)
       \right)\\ &= \Exs \vecnorm{(\IdMat - \stepsize D) U^{-1} r_t}{2}^2
       + \stepsize^2 \Exs \vecnorm{U \NoiseAplain_{t + 1} r_t}{2}^2 \\ &\leq
       \Exs \vecnorm{ U^{-1} r_t}{2}^2 - \stepsize \Exs
       (U^{-1}r_t)^\hc (D + D^\hc)
       (U^{-1}r_t) + \stepsize^2 \opnorm{D^\hc D} \Exs
       \vecnorm{ U^{-1} r_t}{2}^2 + \stepsize^2 \opnorm{U}^2 \Exs
       \vecnorm{ \NoiseAplain_{t + 1} r_t}{2}^2\\ & \leq \Exs
       \vecnorm{U^{-1} r_t}{2}^2 - 2 \stepsize \spectralgap \Exs
       \vecnorm{U^{-1} r_t}{2}^2 + \stepsize^2 \rho(\Amatbar)^2 \Exs
       \vecnorm{U^{-1} r_t}{2}^2 + \asymconditioning^2 (\Umat) v_A^2 \Exs
       \vecnorm{U^{-1} r_t}{2}^2.
     \end{align*}
 For $\stepsize \in \Big(0, \frac{\spectralgap}{\rho (\Amatbar)^2 +
   \asymconditioning (\Umat)^2 v_A^2} \Big)$, we have $\Exs
 \vecnorm{U^{-1} r_{t + 1}}{2}^2 \leq (1 - \stepsize \spectralgap)\Exs
 \vecnorm{U^{-1} r_t}{2}^2$ for any $t \geq 0$. Consequently, we have
 the coupling estimate:
 \begin{align*}
   \Exs \vecnorm{r_{T}}{2}^2 \leq \opnorm{U}^2 \vecnorm{U^{-1}
     r_T}{2}^2 \leq \opnorm{U}^2 e^{- \stepsize \spectralgap T}
   \vecnorm{U^{-1} r_0}{2}^2 \leq e^{- \stepsize \spectralgap T}
   \asymconditioning^2 (\Umat) \Exs \vecnorm{r_0}{2}^2,
 \end{align*}
 which completes the proof of the lemma.

\subsection{Proof of Lemma~\ref{lemma:linfty-estimate-and-contraction}}
\label{Appendix:linfty-estimates}
We first prove the almost-sure upper bounds on the iterates. Note that for $\theta_t \in [- \linftygap^{-1}, \linftygap^{-1}]^d$, we have the following sequence of inequalities almost surely:
\begin{multline*}
    \vecnorm{\theta_{t + 1}}{\infty} = \vecnorm{\theta_t - \stepsize (\Amat_t \theta_t - \bvec_t)}{\infty} \leq \vecnorm{(1 - \stepsize) \theta_t}{\infty} + \stepsize \vecnorm{(I_d - \Amat_t) \theta_t}{\infty} + \stepsize \vecnorm{\bvec_t}{\infty}\\
    \leq (1 - \stepsize) \vecnorm{\theta_t}{\infty} + \stepsize (1 - \linftygap) \vecnorm{\theta_t}{\infty} + \stepsize \leq (1 - \stepsize \linftygap) \linftygap^{-1} + \stepsize = \linftygap^{-1}.
\end{multline*}
The result then follows by induction.

We then prove the $\ell_\infty$ contraction bound. We take a synchronous coupling where the two processes use the same sequence of stochastic oracles. We have:
\begin{multline*}
    \vecnorm{\theta_{t + 1}^{(1)} - \theta_{t + 1}^{(2)}}{\infty} = \vecnorm{(I - \stepsize \Amat_t) (\theta_t^{(1)} - \theta_t^{(2)})}{\infty} \\
    \leq (1 - \stepsize) \vecnorm{\theta_t^{(1)} - \theta_t^{(2)}}{\infty} + \stepsize \vecnorm{(I - \Amat) (\theta_t^{(1)} - \theta_t^{(2)})}{\infty} \leq (1 - \stepsize \linftygap) \vecnorm{\theta_t^{(1)} - \theta_t^{(2)}}{\infty},
\end{multline*}
which proves the coupling bound.

\section{Proof of Lemma~\ref{lemma:theta-norm-hurwitz}}
\label{AppThetaNormHurwitz}

We decompose $\Amatbar$ in the form $\Amatbar = U D U^{-1}$ that is
guaranteed by Lemma~\ref{LemHurwitz}.  We study the dynamics of
$\vecnorm{U^{-1} (\theta_t - \thetastar)}{2}$. Defining the residual
term $r_t \mydefn \theta_t - \thetastar$, we observe that
\begin{align*}
& \vecnorm{U^{-1} r_{t + 1}}{2}^2 \\ &= (r_t - \stepsize (A +
  \NoiseAplain_{t + 1}) (r_t + \thetastar) - \stepsize \noisebplain_{t
    + 1})^\hc (U^{-1})^\hc U^{-1} (r_t - \stepsize (A +
  \NoiseAplain_{t + 1}) (r_t + \thetastar) - \stepsize \noisebplain_{t
    + 1})\\ &= (U^{-1} r_t)^\hc (I - \stepsize (D + D^\hc) +
  \stepsize^2 D^\hc D) (U^{-1} r_t) - 2 \stepsize \RealPart \left(
  (\NoiseAplain_{t + 1} (r_t + \thetastar) + \noisebplain_{t + 1})^\hc
  (U^{-1})^\hc (I - \stepsize D) U^{-1} r_t \right)\\ & \quad \quad +
  \stepsize^2 \vecnorm{U^{-1} (\NoiseAplain_{t + 1} r_t +
    \NoiseAplain_{t + 1} \thetastar + \noisebplain_{t +
      1})}{2}^2\\ &\leq (1 - \stepsize \spectralgap + \stepsize^2
  \rho^2 (\Amatbar)) \vecnorm{U^{-1} r_t}{2}^2 - 2 \stepsize \RealPart
  \left( (\NoiseAplain_{t + 1} (r_t + \thetastar) + \noisebplain_{t +
    1})^\hc (U^{-1})^\hc (I - \stepsize D) U^{-1} r_t \right)\\ &
  \quad \quad + 3 \stepsize^2 \opnorm{U^{-1}}^2 \left( \vecnorm{
    \NoiseAplain_{t + 1} r_t}{2}^2 + \vecnorm{\NoiseAplain_{t + 1}
    \thetastar}{2}^2 + \vecnorm{\noisebplain_{t + 1})}{2}^2 \right).
    \end{align*}
Telescoping this expression, for $\stepsize \in \Big(0,
\frac{\spectralgap}{\rho^2 (\Amatbar)} \Big)$, we have:
\begin{multline*}
  e^{\stepsize \spectralgap T} \vecnorm{U^{-1} r_T}{2}^2 \leq
  \vecnorm{U^{-1} r_0}{2}^2 - 2 \stepsize \underbrace{\sum_{t = 0}^{T
      - 1} e^{\stepsize \spectralgap t}\RealPart \left( (\NoiseAplain_{t +
      1} (r_t + \thetastar) + \noisebplain_{t + 1})^\hc
    (U^{-1})^\hc (I - \stepsize D) U^{-1} r_t
    \right)}_{\mydefn S_1 (T)} \\ + 3 \stepsize^2 \underbrace{\sum_{t
      = 0}^{T - 1} e^{\stepsize \spectralgap t} \opnorm{U^{-1}}^2
    \left(\vecnorm{ \NoiseAplain_{t + 1} r_t}{2}^2 + \vecnorm{\NoiseAplain_{t + 1}
      \thetastar}{2}^2 + \vecnorm{\noisebplain_{t + 1})}{2}^2 \right)}_{\mydefn
    S_2 (T)}.
    \end{multline*}
Note that the process $\{S_1(T)\}$ is a martingale and the process $\{
S_2(T) \}$ is non-decreasing.
    
Let us adopt $\Exs \sup \limits_{0 \leq t \leq T} \left(
e^{\spectralgap \stepsize t} \vecnorm{U^{-1} r_t}{2}^2
\right)^{\frac{p}{2}}$ as a Lyapunov function. By Young's inequality we obtain:
\begin{align*}
  \Exs \sup_{0 \leq t \leq T} \left( e^{\spectralgap \stepsize
    t} \vecnorm{U^{-1} r_t}{2}^2 \right)^{\frac{p}{2}} \leq
  3^{\frac{p}{2}} \Exs \vecnorm{U^{-1} r_0}{2}^p +
  6^{\frac{p}{2}} \stepsize^{\frac{p}{2}} \Exs \sup_{1 \leq t
    \leq T } |S_1 (t)|^{\frac{p}{2}} + 9^{\frac{p}{2}}
  \stepsize^p \Exs (S_2 (T))^{\frac{p}{2}}.
\end{align*}
We upper bound the two terms respectively.

\paragraph{Upper bound for $|S_1|$:}
Note that:
\begin{multline*}
  \abss{(\NoiseAplain_{t + 1} (r_t + \thetastar) + \noisebplain_{t +
      1})^\hc (U^{-1})^\hc (I -
    \stepsize D) U^{-1} r_t } \\ \leq \vecnorm{ (U^{-1}
    \NoiseAplain_{t + 1} r_t) + (U^{-1} \noisebplain_{t + 1}) + U^{-1}
    \NoiseAplain_{t + 1} \thetastar }{2} \cdot \opnorm{I - \stepsize
    D} \cdot \vecnorm{ U^{-1} r_t}{2}\\ \leq 2 \opnorm{U^{-1}}
  \left( \vecnorm{\NoiseAplain_{t + 1} r_t}{2} + \vecnorm{\noisebplain_{t +
      1}}{2} + \vecnorm{\NoiseAplain_{t + 1} r_t}{2} \right)
  \vecnorm{U^{-1} r_t}{2}.
\end{multline*}

Applying the Burkholder-Davis-Gundy inequality to the martingale $S_1
(t)$, we have:
\begin{align*}
  &\Exs \sup_{1 \leq t \leq T} |S_1 (t)|^{\frac{p}{2}} \leq \left(C
  p\right)^{\frac{p}{4}} \Exs \langle S_1 \rangle_T^{\frac{p}{4}}\\ &=
  \left(C p\right)^{\frac{p}{4}} \Exs \left( \sum_{t = 0}^{T - 1} e^{2
    \stepsize \spectralgap t} \abss{(\NoiseAplain_{t + 1} (r_t + \thetastar)
    + \noisebplain_{t + 1})^\hc (U^{-1})^\hc (I
    - \stepsize D) U^{-1} r_t }^2 \right)^{\frac{p}{4}}\\ &\leq
  \left(C p\right)^{\frac{p}{4}} \opnorm{U^{-1}}^{\frac{p}{2}} \Exs
  \left( \sum_{t = 0}^{T - 1} e^{2 \stepsize \spectralgap t} \left(
  \vecnorm{\NoiseAplain_{t + 1} r_t}{2}^2 \vecnorm{U^{-1} r_t}{2}^2 +
  (\vecnorm{\noisebplain_{t + 1}}{2}^2 + \vecnorm{\NoiseAplain_{t + 1}
    \thetastar}{2}^2 )\vecnorm{U^{-1} r_t}{2}^2\right)
  \right)^{\frac{p}{4}}.
    \end{align*}
By H\"{o}lder's inequality, we have:
\begin{multline*}
  \left( \sum_{t = 0}^{T - 1} e^{2 \stepsize \spectralgap t} \left(
  (\vecnorm{\NoiseAplain_{t + 1} r_t}{2}^2 + \vecnorm{\noisebplain_{t + 1}}{2}^2 +
  \vecnorm{\NoiseAplain_{t + 1} \thetastar}{2}^2) \vecnorm{U^{-1}
    r_t}{2}^2\right) \right)^{\frac{p}{4}} \\ \leq \left( \sum_{t =
    0}^{T - 1} e^{\frac{2p}{p - 4} \stepsize \spectralgap t}
  \right)^{\frac{p}{4} - 1} \left( 3 \sum_{t = 0}^{T - 1}
  (\vecnorm{\NoiseAplain_{t + 1} r_t}{2}^{\frac{p}{2}} + \vecnorm{\noisebplain_{t +
      1}}{2}^{\frac{p}{2}} + \vecnorm{\NoiseAplain_{t + 1}
    \thetastar}{2}^{\frac{p}{2}} ) \vecnorm{U^{-1}
    r_t}{2}^{\frac{p}{2}} \right).
    \end{multline*}
For the geometric series, we have $\left( \sum_{t = 0}^{T - 1}
e^{\frac{2p}{p - 4} \stepsize \spectralgap t} \right)^{ \frac{p}{4} -
  1} \leq \frac{1}{(\stepsize \spectralgap)^{\frac{p}{4} - 1}}
e^{\stepsize \spectralgap p T}$.
    
By Assumption~\ref{assume-noise-subgaussian}, we have:
\begin{align*}
  \Exs \vecnorm{\noisebplain_{t + 1}}{2}^{\frac{p}{2}} \leq p^{p \beta / 2}
  (\sigma_b \sqrt{d})^{p / 2}, \quad \Exs \vecnorm{\NoiseAplain_{t + 1}
    v}{2}^{\frac{p}{2}} \leq p^{p \alpha / 2} \sigma_A^{p / 2}
  \vecnorm{v}{2}^{p/ 2}.
\end{align*}
Putting together the pieces, we obtain:
\begin{multline*}
  \Exs \sup_{1 \leq t \leq T} |S_1 (t)|^{\frac{p}{2}} \leq
  \frac{(Cp)^{\frac{p}{4}} e^{\stepsize \spectralgap p T / 2}
  }{(\spectralgap \stepsize)^{\frac{p}{4}}} \sum_{t = 0}^{T - 1} \Big(
  p^{\frac{p \beta}{2}} (\sigma_b \sqrt{d})^{\frac{p}{2}}
  \opnorm{U^{-1}}^{\frac{p}{2}} \Exs \vecnorm{U^{-1}
    r_t}{2}^{\frac{p}{2}}\\ + p^{\frac{p \alpha}{2}} \sigma_A^{
    \frac{p}{2}} \asymconditioning (\Umat)^{\frac{p}{2}} \Exs
  \vecnorm{U^{-1} r_t}{2}^p + p^{\frac{p \alpha}{2}} \sigma_A^{
    \frac{p}{2}} \opnorm{U^{-1}}^{\frac{p}{2}} \Exs
  \vecnorm{\thetastar}{2}^p \Big).
\end{multline*}
    

\paragraph{Upper bounds on $S_2$:} By Young's inequality, we have:
\begin{align*}
(S_2 (T))^{\frac{p}{2}}& = \left( \sum_{t = 0}^{T - 1} e^{\stepsize
    \spectralgap t} \opnorm{U^{-1}}^2 \left( \vecnorm{\NoiseAplain_{t + 1}
    r_t}{2}^2 + \vecnorm{\noisebplain_{t + 1}}{2}^2 + \vecnorm{\NoiseAplain_{t + 1}
    \thetastar}{2}^2 \right) \right)^{\frac{p}{2}} \\
& \leq \opnorm{U^{-1}}^p \left[ \left( 3 \sum_{t = 0}^{T - 1}
    e^{\stepsize \spectralgap t} \vecnorm{\noisebplain_{t + 1}}{2}^2
    \right)^{\frac{p}{2}} + \left( 3 \sum_{t = 0}^{T - 1} e^{\stepsize
      \spectralgap t} \vecnorm{\NoiseAplain_{t + 1} r_t}{2}^2
    \right)^{\frac{p}{2}} + \left( 3 \sum_{t = 0}^{T - 1} e^{\stepsize
      \spectralgap t} \vecnorm{\NoiseAplain_{t + 1} \thetastar}{2}^2
    \right)^{\frac{p}{2}} \right].
    \end{align*}
By H\"{o}lder's inequality, we obtain:
\begin{align*}
  \left( \sum_{t = 0}^{T - 1} e^{\stepsize \spectralgap t}
  \vecnorm{\noisebplain_{t + 1}}{2}^2 \right)^{\frac{p}{2}} &\leq \left(
  \sum_{t = 0}^{T - 1} e^{\frac{p }{p - 2}\stepsize \spectralgap t}
  \right)^{\frac{p}{2} - 1} \left( \sum_{t = 0}^{T - 1}
  \vecnorm{\noisebplain_{t + 1}}{2}^p \right),\\ \left( \sum_{t = 0}^{T - 1}
  e^{\stepsize \spectralgap t} \vecnorm{\NoiseAplain_{t + 1}
    \thetastar}{2}^2 \right)^{\frac{p}{2}} &\leq \left( \sum_{t =
    0}^{T - 1} e^{\frac{p }{p - 2}\stepsize \spectralgap t}
  \right)^{\frac{p}{2} - 1} \left( \sum_{t = 0}^{T - 1}
  \vecnorm{\NoiseAplain_{t + 1} \thetastar}{2}^p \right),\\ \left( \sum_{t =
    0}^{T - 1} e^{\stepsize \spectralgap t} \vecnorm{\NoiseAplain_{t + 1}
    r_t}{2}^2 \right)^{\frac{p}{2}} &\leq \left( \sum_{t = 0}^{T - 1}
  e^{\frac{p }{p - 2}\stepsize \spectralgap t} \right)^{\frac{p}{2} -
    1} \left( \sum_{t = 0}^{T - 1} \vecnorm{\NoiseAplain_{t + 1} r_t}{2}^p
  \right).
\end{align*}
 For the geometric series, it is easy to see that $\left( \sum_{t =
   0}^{T - 1} e^{\frac{p }{p - 2}\stepsize \spectralgap t}
 \right)^{\frac{p}{2} - 1} \leq \frac{1}{(\stepsize
   \spectralgap)^{\frac{p}{2} - 1}} e^{\stepsize \spectralgap p T /
   2}$.

This yields:
\begin{align*}
\Exs (S_2 (T))^{\frac{p}{2}} \leq \opnorm{U^{-1}}^p
\frac{3^{\frac{p}{2}}}{(\stepsize \spectralgap)^{\frac{p}{2} - 1}}
e^{\stepsize \spectralgap p T} \left( \sum_{t = 0}^{T - 1} \Exs
\vecnorm{\noisebplain_{t + 1}}{2}^p + \sum_{t = 0}^{T - 1} \Exs
\vecnorm{\NoiseAplain_{t + 1} r_t}{2}^p + \sum_{t = 0}^{T - 1} \Exs
\vecnorm{\NoiseAplain_{t + 1} \thetastar}{2}^p \right).
\end{align*}
By Assumption~\ref{assume-noise-subgaussian}, we have:
\begin{align*}
  \Exs \vecnorm{\noisebplain_{t + 1}}{2}^{p} \leq p^{p \beta} (\sigma_b
  \sqrt{d})^{p}, \quad \Exs \vecnorm{\NoiseAplain_{t + 1} v}{2}^{p}
  \leq p^{p \alpha} \sigma_A^{p} \vecnorm{v}{2}^{p}.
\end{align*}
Putting the pieces together, we have:
\begin{align*}
  \Exs (S_2(T))^p \leq \frac{e^{\stepsize \spectralgap p T /
      2}}{(\stepsize \spectralgap)^{\frac{p}{2}}} \opnorm{U^{-1}}^p
  \left( T p^{p \beta} (\sigma_b \sqrt{d})^{p} + T p^{p \alpha}
  (\sigma_A \vecnorm{\thetastar}{2})^p + p^{p \alpha} \sigma_A^p
  \opnorm{U}^{p} \sum_{t = 0}^{T - 1} \Exs \vecnorm{U^{-1} r_t}{2}^p
  \right).
\end{align*}

Defining $H_T \mydefn e^{- \frac{\spectralgap \stepsize T}{2}} \left(
\Exs \sup_{0 \leq t \leq T} \left( e^{\spectralgap \stepsize t}
\vecnorm{U^{-1} r_t}{2}^2 \right)^{\frac{p}{2}}
\right)^{\frac{2}{p}}$, clearly we have the upper bound $(\Exs
\vecnorm{U^{-1}r_T}{2}^p)^{\frac{p}{2}} \leq H_T$. By the
decomposition of the Lyapunov function, we get:
\begin{align*}
  H_T \leq (\Exs \vecnorm{U^{-1}r_0}{2}^p)^{\frac{2}{p}} + 6 \stepsize
  e^{- \frac{\stepsize \stepsize T}{2}} (\Exs \sup_{1 \leq t \leq T}
  |S_1 (t)|^{\frac{p}{2}})^{\frac{2}{p}} + 6 \stepsize^2 e^{-
    \frac{\stepsize \stepsize T}{2}}(\Exs S_2
  (T)^{\frac{p}{2}})^{\frac{2}{p}}.
\end{align*}
Based on the upper bounds for $S_1$ and $S_2$, we have
\begin{align*}
  \stepsize^2 e^{- \frac{\stepsize \stepsize T}{2}} (\Exs S_2
  (T)^{\frac{p}{2}})^{\frac{2}{p}} &\leq C
  \frac{\stepsize}{\spectralgap} \left(\opnorm{U^{-1}}^2
  T^{\frac{2}{p}} (p^{2 \beta} \sigma_b^2 d + p^{2 \alpha} \sigma_A^2
  \vecnorm{\thetastar}{2}^2) + p^{2 \alpha} \asymconditioning^2 (\Umat)
  \sigma_A^2 \left( \sum_{t = 0}^{T - 1} H_t^{\frac{p}{2}}
  \right)^{\frac{2}{p}} \right),\\
  \stepsize e^{- \frac{\stepsize \stepsize T}{2}} (\Exs \sup_{1 \leq t
    \leq T} |S_1 (t)|^{\frac{p}{2}})^{\frac{2}{p}} & \leq C
  \sqrt{\frac{p \stepsize}{\spectralgap}} \left( \sum_{t = 0}^{T - 1}
  ((p^{\beta} \sigma_b \sqrt{d} + p^\alpha \sigma_A
  \vecnorm{\thetastar}{2}) \opnorm{U^{-1}} H_t)^{\frac{p}{4}} +
  (p^\alpha \sigma_A \asymconditioning(\Umat) H_t)^{\frac{p}{2}}
  \right)^{\frac{2}{p}}.
\end{align*}
Letting $R_T \mydefn \sup_{0 \leq t \leq T} H_t$, and noting that the upper
bounds above are non-decreasing in $T$, we have:
\begin{multline*}
  R_T \leq H_0 + C \frac{\stepsize}{\spectralgap}
  T^{\frac{2}{p}} \left(\opnorm{U^{-1}}^2 (p^{2 \beta}
  \sigma_b^2 d + p^{2 \alpha} \vecnorm{\thetastar}{2}^2)+ p^{2
    \alpha} \asymconditioning^2 (\Umat) \sigma_A^2 R_T \right)\\ + C
  \sqrt{\frac{p \stepsize}{\spectralgap}} T^{\frac{2}{p}} \left(
  \opnorm{U^{-1}} (p^\beta \sigma_b \sqrt{d} + p^\alpha \sigma_A
  \vecnorm{\thetastar}{2}) \sqrt{R_T} + p^\alpha \sigma_A
  \asymconditioning (\Umat) R_T \right).
\end{multline*}
Take $p \geq 2 \log T$ and $\stepsize \leq \frac{\spectralgap}{18 C^2
  e^2 p^{2 \alpha + 1} \asymconditioning^2 (\Umat) \sigma_A^2}$, we obtain
that:
\begin{align*}
  R_T \leq H_0 + C e\frac{\stepsize}{\spectralgap}
  \opnorm{U^{-1}}^2(p^\beta \sigma_b \sqrt{d} + p^\alpha \sigma_A
  \vecnorm{\thetastar}{2})^2 + C e \sqrt{\frac{p
      \stepsize}{\spectralgap}} \opnorm{U^{-1}} (p^\beta \sigma_b
  \sqrt{d} + p^\alpha \sigma_A \vecnorm{\thetastar}{2}) + \frac{1}{2}
  R_T,
\end{align*}
and therefore:
\begin{align*}
  \max_{0 \leq t \leq T} (\Exs \vecnorm{r_t}{2}^p)^{\frac{2}{p}} \leq
  \opnorm{U^{-1}}^2 R_T \lesssim \asymconditioning^2 (\Umat) \left( (\Exs
  \vecnorm{\theta_0 - \thetastar}{2}^p)^{\frac{2}{p}} +
  \frac{\stepsize}{\spectralgap} (p^{2 \beta + 1} \sigma_b^2 d + p^{2
    \alpha + 1} \sigma_A^2 \vecnorm{\thetastar}{2}^2 ) \right).
\end{align*}
Thus, we have completed the proof of
Lemma~\ref{lemma:theta-norm-hurwitz}.


\section{Proof of Lemma~\ref{lemma-bound-on-theta-critical}}
\label{subsubsec:proof_of_lemma_lemma-bound-on-theta-critical}

By Assumption~\ref{assume-critical}, the matrix $\Amatbar$ is
diagonalizable. Accordingly, we can write $\Amatbar = U D U^{-1}$, and
the remaining part of Assumption~\ref{assume-critical} implies that $D
+ D^\hc \succeq 0$.
  
We use the function $f(\theta) = \vecnorm{U^{-1} (\theta -
  \thetastar)}{2}^2$ as a Lyapunov function. From the process
dynamics~\eqref{eq:lsa}, we can write
\begin{align*}
U^{-1} (\theta_{t + 1} - \thetastar) & = U^{-1} (\IdMat - \stepsize
\Amatbar) (\theta_t - \thetastar) + \stepsize U^{-1} \NoiseAplain_{t + 1}
(\theta_t - \thetastar) + \stepsize U^{-1} \noisebplain_{t + 1} -
\stepsize U^{-1} \NoiseAplain_{t + 1} \thetastar.
\end{align*}
Using this decomposition, we can write
\begin{align*}
\Exs[ \vecnorm{U^{-1} (\theta_{t + 1} - \thetastar)}{2}^2] & =
\Term_1 + \stepsize^2 \Term_2 + 2 \stepsize \Term_3,
\end{align*}
where
\begin{subequations}
  \begin{align}
\Term_1 & \defn \Exs \vecnorm{ U^{-1} (I - \stepsize \Amatbar) (\theta_t
  - \thetastar)}{2}^2 \\
\Term_2 & \defn \Exs \vecnorm{ U^{-1} (\NoiseAplain_{t + 1} (\theta_t
  - \thetastar) + \noisebplain_{t + 1} - \NoiseAplain_{t + 1}
  \thetastar) }{2}^2 \\
\Term_3 & \defn
    \Exs \left( \inprod{ U^{-1} (I - \stepsize \Amatbar) (\theta_t -
      \thetastar)}{U^{-1} (\NoiseAplain_{t + 1} (\theta_t -
      \thetastar) + \noisebplain_{t + 1} - \NoiseAplain_{t + 1}
      \thetastar)} \right).
  \end{align}
\end{subequations}
We upper bound each these three terms in succession.


\paragraph{Bounding $\Term_1$:}

Using Assumption~\ref{assume-critical}, we have:
\begin{align*}
\Term_1 &= \Exs (U^{-1}(\theta_t - \thetastar))^\hc \left( \IdMat - 2
\stepsize \left( U^{-1} \Amatbar U + (U^{-1} \Amatbar U)^\hc \right) +
\stepsize^2 (U^{-1} \Amatbar U)^\hc (U^{-1} \Amatbar U) \right) U^{-1}
(\theta_t - \thetastar) \\
& \leq \Exs \vecnorm{U^{-1} (\theta_t - \thetastar)}{2}^2 + \stepsize
\rho^2 (\Amatbar) \Exs \vecnorm{U^{-1} (\theta_t - \thetastar)}{2}^2.
\end{align*}

\paragraph{Bounding $\Term_2$:}
By Young's inequality and Assumption~\ref{assume-second-moment}, we
find that
\begin{align*}
\Term_2 & = \Exs \vecnorm{ U^{-1} (\NoiseAplain_{t + 1} (\theta_t -
  \thetastar) + \noisebplain_{t + 1} - \NoiseAplain_{t + 1}
  \thetastar) }{2}^2\\ &\leq 3 \opnorm{U^{-1}}^2 \Exs \left(
\vecnorm{\NoiseAplain_{t + 1} (\theta_t - \thetastar)}{2}^2 +
\vecnorm{\noisebplain_{t + 1}}{2}^2 + \vecnorm{\NoiseAplain_{t + 1}
  \thetastar}{2}^2 \right)\\ &\leq 3 \opnorm{U^{-1}}^2 \left(
\opnorm{U}^2 v_A^2 \Exs \vecnorm{U^{-1} (\theta_t - \thetastar)}{2}^2
+ v_b^2 d + v_A^2 \vecnorm{\thetastar}{2}^2 \right).
\end{align*}


\paragraph{Bounding $\Term_3$:}
In this case, we have
\begin{align*}
  \Term_3 = \Exs \left( \inprod{ \Umat^{-1} (\IdMat - \stepsize \Amatbar)
    (\theta_t - \thetastar)}{\Umat^{-1} \Exs \left(\NoiseAplain_{t +
      1} (\theta_t - \thetastar) + \noisebplain_{t + 1} -
    \NoiseAplain_{t + 1} \thetastar \mid \filtration_{t} \right)}
  \right) = 0.
\end{align*}
This yields:
\begin{align*}
  \Exs \vecnorm{U^{-1} (\theta_{t + 1} - \thetastar)}{2}^2 \leq (1 +
  \stepsize^2 \rho^2 (\Amatbar) + 3 \stepsize^2 \asymconditioning^2 (\Umat)
  v_A^2) \Exs \vecnorm{U^{-1} (\theta_t - \thetastar)}{2}^2 + 3
  \opnorm{U^{-1}}^2 (v_b^2 d + v_A^2 \vecnorm{\thetastar}{2}^2).
\end{align*}
Solving the recursion, for $\stepsize \leq \frac{1}{(\rho (\Amatbar) + 3
  \asymconditioning(\Umat)  v_A) \sqrt{T}}$, we obtain:
\begin{align*}
  &\Exs \vecnorm{U^{-1} (\theta_{T} - \thetastar)}{2}^2 \\ &\leq \exp
  \left( \stepsize^2 T (\rho^2 (\Amatbar) + 3 \asymconditioning^2 (\Umat) v_A^2)
  \right) \Exs \vecnorm{U^{-1} (\theta_0 - \thetastar)}{2}^2 \\ &\quad
  \quad + 3\stepsize^2 \opnorm{U^{-1}}^2 (v_b^2 d + v_A^2
  \vecnorm{\thetastar}{2}^2) \sum_{t = 0}^{T - 1} \exp \left(
  \stepsize^2 t (\rho^2 (\Amatbar) + 3 \asymconditioning^2 (\Umat) v_A^2)
  \right)\\ &\leq e \left( \Exs \vecnorm{U^{-1} (\theta_0 -
    \thetastar)}{2}^2 + 3 \stepsize^2 T \opnorm{U^{-1}}^2 (v_b^2 d +
  v_A^2 \vecnorm{\thetastar}{2}^2)\right).
\end{align*}
Noting that $\vecnorm{\theta_T - \thetastar}{2} \leq \opnorm{U}\cdot
\vecnorm{U^{-1} (\theta_T - \thetastar)}{2}$, we obtain the final
result.

\section{Concentration inequalities involving metric ergodocity}
\label{AppMetric}

In this section, we state and prove two concentration inequalities
that play an important role in our analysis.  We first state these
results and then prove them.

\begin{lemma}
\label{lemma:ergodic-concentration}
Under Assumption~\ref{assume-indp}, Assumption~\ref{assume-hurwitz} and
Assumption~\ref{assume-second-moment}, for given $T > 0$, if for any
$\delta > 0$, there exists $R (\delta), r (\delta) > 0$ such that:
\begin{itemize}
\item $\Prob \left( \max_{0\leq t \leq T} \vecnorm{U^{-1} \theta_t}{2}
  > R(\delta) \right) < \delta$.
\item $\Prob \left( \max_{0 \leq t \leq T} \vecnorm{U^{-1} (\NoiseAplain_{t
    + 1} \theta_t - \noisebplain_{t + 1})}{2} > r(\delta) \right) < \delta$,
\end{itemize}
then, for any matrix $L \in \real^{d \times d}$ and any $\delta \in
\left( 0, (T^2 \opnorm{L}^2 \max_{t \leq T}\Exs
\vecnorm{\theta_t}{2}^4)^{-1} \right)$, we have:
\begin{align*}
  \Prob \left( \abss{\frac{1}{T} \sum_{t = 1}^T (\theta_t^\top L
    \theta_t - \Exs \theta_t^\top L \theta_t )} > C \opnorm{L}
  \opnorm{U}^2 \frac{ R (\delta) r (\delta)}{\spectralgap}
  \left(\sqrt{\frac{\log \delta^{-1}}{T}} + \frac{ \log \delta^{-1}}{
    T} \right) \right) \leq 3 \delta.
\end{align*}
\end{lemma}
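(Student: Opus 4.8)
\emph{Proof proposal.} The plan is to reduce the bound to a Freedman-type martingale inequality for an additive functional of the geometrically ergodic chain $(\theta_t)$, after truncating to the event where the chain and its one-step noise are bounded. Pass to the coordinates $z_t \defn U^{-1}\theta_t$, so $\theta_t^\top L\theta_t = z_t^\top\widetilde L z_t$ with $\widetilde L \defn U^\top L U$, $\opnorm{\widetilde L}\le\opnorm U^2\opnorm L$; by the argument in the proof of Lemma~\ref{lemma:mixing-hurwitz}, in these coordinates two synchronously-coupled copies obey $\Exs\vecnorm{z_k^{(1)}-z_k^{(2)}}{2}^2\le e^{-\stepsize\spectralgap k}\vecnorm{z_0^{(1)}-z_0^{(2)}}{2}^2$, a clean contraction at rate $\stepsize\spectralgap$. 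Let $\event(\delta)$ be the event on which both hypotheses hold for all $0\le t\le T$, so $\Prob(\event(\delta)^{c})\le 2\delta$; on it the chain stays in the ball $\{\vecnorm z2\le R(\delta)\}$, which, by comparing a Markov bound with Lemma~\ref{lemma:simple-l2-estimate}, also dominates $\vecnorm{\thetastar}2$-type and stationary scales. Replace $(z_t)$ by a surrogate chain $(\widetilde z_t)$ agreeing with $(z_t)$ up to the first violation of a bound and frozen thereafter; it coincides with $(z_t)$ on $\event(\delta)$, is confined to the ball, still contracts, and has a.s.\ bounded one-step dispersion.

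For the core estimate set $g(z)\defn z^\top\widetilde L z$, let $\markovtransition^k$ be the $k$-step operator of the surrogate chain and $\Phi_t\defn\sum_{k=0}^{T-t}\markovtransition^k g$, so that $\sum_{t=1}^T(g(\widetilde z_t)-\Exs g(\widetilde z_t))=\sum_{t=1}^T D_t$ with $D_t\defn\Phi_t(\widetilde z_t)-\Exs[\Phi_t(\widetilde z_t)\mid\filtration_{t-1}]$ a martingale difference. Using $|g(a)-g(b)|\le\opnorm{\widetilde L}(\vecnorm a2+\vecnorm b2)\vecnorm{a-b}2$, the coupling above, and uniform-in-$k$ norm control of the coupled trajectories (Lemma~\ref{lemma:simple-l2-estimate}), the map $\markovtransition^k g$ is Lipschitz on the ball with constant $\lesssim\opnorm{\widetilde L}R(\delta)e^{-\stepsize\spectralgap k/2}$; summing the geometric series, $\Phi_t$ is Lipschitz on the ball with constant $\lesssim\opnorm{\widetilde L}R(\delta)/(\stepsize\spectralgap)$. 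Since the deterministic drift of one step cancels in $D_t$, the increment $\widetilde z_t$ departs from its conditional mean by at most $\lesssim\stepsize\, r(\delta)$ a.s., whence
\[
|D_t|\;\lesssim\;\opnorm{\widetilde L}R(\delta)\cdot\stepsize\,r(\delta)\cdot\frac1{\stepsize\spectralgap}\;\lesssim\;\frac{\opnorm U^2\opnorm L\,R(\delta)\,r(\delta)}{\spectralgap}\;=:\;b,
\]
the cancellation of the two powers of $\stepsize$ being exactly what the contraction rate $\stepsize\spectralgap$ of Lemma~\ref{lemma:mixing-hurwitz} buys.

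With $|D_t|\le b$ a.s.\ and the crude quadratic-variation bound $\sum_t\Exs[D_t^2\mid\filtration_{t-1}]\le Tb^2$, Freedman's inequality yields, with probability at least $1-\delta$, the bound $\bigl|\tfrac1T\sum_{t=1}^T(g(\widetilde z_t)-\Exs g(\widetilde z_t))\bigr|\lesssim b\bigl(\sqrt{\log\delta^{-1}/T}+\log\delta^{-1}/T\bigr)$, both terms coming from the single bound $b$. Passing back to $(z_t)$ costs another $2\delta$ (the processes coincide on $\event(\delta)$), and by Cauchy--Schwarz over $\event(\delta)^{c}$,
\[
\bigl|\Exs\textstyle\sum_t g(\widetilde z_t)-\Exs\textstyle\sum_t\theta_t^\top L\theta_t\bigr|\;\le\;\sqrt{2\delta}\;T\opnorm L\sqrt{\max_{t\le T}\Exs\vecnorm{\theta_t}{2}^4}\;\le\;\sqrt2,
\]
which is exactly where the hypothesis $\delta<(T^2\opnorm L^2\max_{t\le T}\Exs\vecnorm{\theta_t}{2}^4)^{-1}$ enters; divided by $T$ this is $O(1/T)$, of lower order than the displayed bound, and absorbed into the constant $C$. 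Collecting the probabilities $2\delta+\delta$ gives the claimed $3\delta$. (Steps 1--3 can equivalently be packaged as an application of a Ricci-curvature concentration inequality à la Joulin--Ollivier~\cite{joulin2010curvature} to the truncated chain, with contraction coefficient $\stepsize\spectralgap$, ball-Lipschitz functional of constant $\lesssim\opnorm{\widetilde L}R(\delta)$, and one-step dispersion $\lesssim\stepsize\, r(\delta)$.)

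The main obstacle is making the truncation rigorous: the bound $|D_t|\le b$ holds only on $\event(\delta)$, so the surrogate chain $(\widetilde z_t)$ must at once coincide with $(z_t)$ on the good event, have a.s.\ bounded one-step dispersion (so Freedman applies), and still contract at rate $\stepsize\spectralgap$ — since the geometric decay of $\mathrm{Lip}(\markovtransition^k g)$ is the sole source of the $1/\spectralgap$ factor and of the $\stepsize$-cancellation — after which one must verify that $\Exs g(\widetilde z_t)$ and $\Exs g(z_t)$ differ negligibly, which is what forces the smallness condition on $\delta$. A secondary point is that $g$ is quadratic, not globally Lipschitz, so every coupling estimate must be localized to the ball of radius $R(\delta)$; this is harmless because the surrogate chain never leaves it and because $R(\delta)$ also governs the uniform-in-$k$ size of the coupled trajectories through Lemma~\ref{lemma:simple-l2-estimate}.
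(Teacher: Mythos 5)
Your proposal is correct, and its architecture is the same as the paper's: truncate to the good event defined by the two hypotheses (cost $2\delta$), work with $z_t = U^{-1}\theta_t$ so the quadratic functional is Lipschitz with constant $\lesssim \opnorm{L}\opnorm{U}^2 R(\delta)$ on the ball of radius $R(\delta)$, exploit the $\Wass$-contraction at rate $\stepsize\spectralgap$ against the one-step dispersion $\lesssim \stepsize\, r(\delta)$ so that the step sizes cancel and leave the factor $R(\delta)r(\delta)/\spectralgap$, and finally repair the mismatch between the truncated and untruncated expectations by Cauchy--Schwarz, which is exactly where the restriction $\delta < (T^2\opnorm{L}^2\max_{t\le T}\Exs\vecnorm{\theta_t}{2}^4)^{-1}$ enters. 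The one genuine difference is in the core concentration step: the paper invokes the Joulin--Ollivier inequality (Proposition~\ref{prop:joulin-ollivier}) as a black box applied to a killed version of the chain, whereas you re-derive it from scratch via the Poisson-type decomposition $\Phi_t=\sum_{k=0}^{T-t}\markovtransition^k g$, the resulting martingale differences $D_t$ with $|D_t|\lesssim \opnorm{U}^2\opnorm{L}R(\delta)r(\delta)/\spectralgap$, and Freedman's inequality. Your route is more self-contained and makes transparent where each factor (Lipschitz constant, dispersion, $1/(\stepsize\spectralgap)$ from the geometric series) comes from; the paper's route is shorter and inherits the two-regime tail directly from the cited result. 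Both treatments share the same delicate point, handled at the same level of informality: the killed (paper) or frozen (yours) chain must be argued to retain the Wasserstein contraction even though the stopping time is trajectory-dependent; you assert this just as the paper does, so your proof is neither weaker nor stronger there. One small thing to tighten if you write this up: the bias term you bound by $\sqrt{2}/T$ is absorbed "into the constant $C$" only because it is dominated by the displayed $\log\delta^{-1}/T$ term up to problem-dependent factors; the paper glosses this in the same way, so it is acceptable, but a sentence making the domination explicit would not hurt.
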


Our second lemma is a variant of the first, in which we replace the
second-moment condition (Assumption~\ref{assume-second-moment}) on the
noise variables with a stronger tail condition
(Assumption~\ref{assume-noise-subgaussian}).
\begin{lemma}
\label{LemErgconcTwo}
Under Assumption~\ref{assume-indp}, Assumption~\ref{assume-hurwitz}
and Assumption~\ref{assume-noise-subgaussian}, for a given initial
point $\theta_0$, for a matrix $L$ and given $\delta > 0, T > \log
\delta^{-1}$, if $\stepsize < \frac{\spectralgap}{\rho^2(\Amatbar) + C
  \asymconditioning^2 (\Umat) \sigma_A^2 \log^{2 \alpha + 1} (T d /
  \delta)}$, with probability $1 - \delta$, we have:
\begin{align*}
  \abss{\frac{1}{T} \sum_{t = 1}^T (\theta_t^\top L \theta_t - \Exs
    \theta_t^\top L \theta_t )} \leq C \opnorm{L}
  \frac{\asymconditioning^2 (\Umat)}{\spectralgap} B \left( \sigma_A (B +
  \vecnorm{\thetastar}{2})\log^{\alpha}\frac{T}{\delta} + \sigma_b
  \sqrt{d} \log^{\beta}\frac{T}{\delta} \right) \sqrt{\frac{\log
      \delta^{-1}}{T}},
    \end{align*}
where $B \mydefn \vecnorm{\theta_0 - \thetastar}{2} +
\frac{\stepsize}{\spectralgap} (\sigma_b \sqrt{d} \log^{\beta +
  1/2} \frac{T}{\delta} + \sigma_A \vecnorm{\thetastar}{2}
    \log^{\alpha + 1/2} \frac{T}{\delta} ) $.
\end{lemma}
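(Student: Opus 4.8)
The plan is to deduce Lemma~\ref{LemErgconcTwo} by running the metric-ergodicity argument behind Lemma~\ref{lemma:ergodic-concentration}, but feeding it envelopes $R(\delta),r(\delta)$ sharpened to the sub-Weibull tails of Assumption~\ref{assume-noise-subgaussian} rather than to the crude second-moment bounds used there. Concretely, $\tfrac{1}{T}\sum_t \theta_t^\top L\theta_t$ is a time-average of the function $g(\theta)=\theta^\top L\theta$ along the Markov chain $(\theta_t)_{t\ge 0}$, which by Lemma~\ref{lemma:mixing-hurwitz} is a Wasserstein-$2$ contraction with rate $\asymp\stepsize\spectralgap$; on a high-probability event on which every iterate satisfies $\vecnorm{U^{-1}\theta_t}{2}\le R(\delta)$ and every one-step noise increment satisfies $\vecnorm{U^{-1}(\NoiseAplain_{t+1}\theta_t-\noisebplain_{t+1})}{2}\le r(\delta)$, the function $g$ is $O(\opnorm{L}\opnorm{U}^2 R(\delta))$-Lipschitz and the one-step transition kernels have support of diameter $O(\stepsize\, r(\delta))$, so the Joulin--Ollivier concentration inequality (Proposition~\ref{prop:joulin-ollivier}) yields sub-Gaussian deviations with variance proxy $\asymp \opnorm{L}^2\opnorm{U}^4 R(\delta)^2 r(\delta)^2 / (\spectralgap^2 T)$, while the $O(\delta)$ correction off the good event is absorbed using the finiteness of $\max_t\Exs\vecnorm{\theta_t}{2}^4$ from Lemma~\ref{lemma:theta-norm-hurwitz}. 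Since $T>\log\delta^{-1}$, only the $\sqrt{\log\delta^{-1}/T}$ regime of the Joulin--Ollivier tail survives, matching the form of the statement; everything specific to the present lemma is the construction of the two envelopes and the final substitution.

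First I would build $R(\delta)$. Choosing $p=C\log(Td/\delta)$ --- for which the step-size hypothesis of Lemma~\ref{lemma:theta-norm-hurwitz}, namely $\stepsize\in\bigl(0,\ \spectralgap/(\rho^2(\Amatbar)+Cp^{2\alpha+1}\asymconditioning^2(\Umat)\sigma_A^2)\bigr)$, is exactly the step-size bound imposed here --- Lemma~\ref{lemma:theta-norm-hurwitz}, Markov's inequality, and a union bound over $0\le t\le T$ give, with probability at least $1-\delta$,
\[
\max_{0\le t\le T}\vecnorm{\theta_t-\thetastar}{2}\ \lesssim\ \asymconditioning(\Umat)\Bigl(\vecnorm{\theta_0-\thetastar}{2}+\sqrt{\tfrac{\stepsize}{\spectralgap}}\bigl(\sigma_b\sqrt d\,\log^{\beta+\frac12}\tfrac{T}{\delta}+\sigma_A\vecnorm{\thetastar}{2}\,\log^{\alpha+\frac12}\tfrac{T}{\delta}\bigr)\Bigr),
\]
i.e.\ $\max_t\vecnorm{\theta_t-\thetastar}{2}\lesssim\asymconditioning(\Umat)B$ with $B$ as defined in the statement; multiplying by $\opnorm{U^{-1}}$ (and adding $\vecnorm{U^{-1}\thetastar}{2}$) furnishes $R(\delta)\lesssim\opnorm{U^{-1}}(B+\vecnorm{\thetastar}{2})$, and likewise an $\ell_2$ envelope $R'$ for $\vecnorm{\theta_t}{2}$ itself.

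Next I would build $r(\delta)$. For the $\noisebplain_{t+1}$ part one uses $\vecnorm{\noisebplain_{t+1}}{2}\le\sqrt d\max_j\abss{e_j^\top\noisebplain_{t+1}}$ together with Assumption~\ref{assume-noise-subgaussian}(ii) coordinatewise, Markov with $p\asymp\log(Td/\delta)$, and a union bound over the $d$ coordinates and over $t$, giving an envelope $\lesssim\opnorm{U^{-1}}\sigma_b\sqrt d\,\log^\beta(Td/\delta)$. The $\NoiseAplain_{t+1}\theta_t$ part is the delicate one, since $\theta_t$ is random: I would condition on $\filtration_t$, under which $\theta_t$ is deterministic, apply Assumption~\ref{assume-noise-subgaussian}(i) to the unit vector $\theta_t/\vecnorm{\theta_t}{2}$ to get $\bigl(\Exs[\vecnorm{\NoiseAplain_{t+1}\theta_t}{2}^p\mid\filtration_t]\bigr)^{1/p}\le p^\alpha\sigma_A\vecnorm{\theta_t}{2}$, intersect with the $\filtration_t$-measurable event $\{\max_{s\le t}\vecnorm{\theta_s}{2}\le R'\}$ so that $\vecnorm{\theta_t}{2}\le R'$ there, and conclude by a conditional Markov bound and a union bound over $t$; this gives an envelope $\lesssim\opnorm{U^{-1}}\sigma_A R'\log^\alpha(Td/\delta)$. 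Combining the two parts (and collapsing $\log(Td/\delta)\asymp\log(T/\delta)$) yields $r(\delta)\lesssim\opnorm{U^{-1}}\bigl(\sigma_A\asymconditioning(\Umat)(B+\vecnorm{\thetastar}{2})\log^\alpha\tfrac{T}{\delta}+\sigma_b\sqrt d\,\log^\beta\tfrac{T}{\delta}\bigr)$.

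Substituting $R(\delta/3)$ and $r(\delta/3)$ into the template above (rescaling $3\delta\mapsto\delta$, using $\opnorm{U}\opnorm{U^{-1}}=\asymconditioning(\Umat)$ to collect the similarity-transform factors, and retaining only the $\sqrt{\log\delta^{-1}/T}$ term) produces the claimed bound $\lesssim\opnorm{L}\,\asymconditioning^2(\Umat)\,\spectralgap^{-1}\,B\bigl(\sigma_A(B+\vecnorm{\thetastar}{2})\log^\alpha\tfrac{T}{\delta}+\sigma_b\sqrt d\,\log^\beta\tfrac{T}{\delta}\bigr)\sqrt{\log\delta^{-1}/T}$. The main obstacle is the $\NoiseAplain_{t+1}\theta_t$ envelope: a naive $\varepsilon$-net bound on $\opnorm{\NoiseAplain_{t+1}}$ would cost an extra $\sqrt d$ and spoil the leading-term matching, so one must exploit the adaptedness of $\theta_t$ via the conditioning-plus-a-priori-$\ell_2$-bound device above; keeping the powers of $\asymconditioning(\Umat)$ and the polylogarithmic factors consistent throughout the substitution is the remaining bookkeeping.
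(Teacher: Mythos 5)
Your proposal is correct and follows essentially the same route as the paper: the paper also proves this lemma by verifying the two envelope conditions of Lemma~\ref{lemma:ergodic-concentration} --- taking $p \asymp \log(T/\delta)$ in Lemma~\ref{lemma:theta-norm-hurwitz} plus Markov and a union bound over $t$ to get $R(\delta)$, and using conditional-on-$\filtration_t$ Markov bounds from Assumption~\ref{assume-noise-subgaussian} (intersected with the iterate bound) to get $r(\delta)$ --- and then substituting into the Joulin--Ollivier-based template. Your handling of the $\NoiseAplain_{t+1}\theta_t$ term via conditioning on $\filtration_t$ is exactly the paper's device, and the remaining differences (coordinatewise treatment of $\noisebplain_{t+1}$, the $\sqrt{\stepsize/\spectralgap}$ versus $\stepsize/\spectralgap$ factor inside $B$) are bookkeeping-level only.
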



\subsection{Proof of Lemma~\ref{lemma:ergodic-concentration}}

In order to prove this lemma, we make use of the following known result
due to Joulin and Ollivier~\cite{joulin2010curvature}:
\begin{proposition}[Theorem 4~\cite{joulin2010curvature}, special case]
\label{prop:joulin-ollivier}
Let $(X_t)_{t \geq 1}$ be a discrete-time Markov chain with transition
kernel $P$, defined on a space $\mathcal{X}$ equipped with the metric $d
(\cdot, \cdot)$. Assume that $\forall x, y \in \mathcal{X}, ~\Wass_{1,
  d} (P_x, P_y) \leq (1 - \kappa) d (x, y)$ for some $\kappa >
0$. Assume furthermore that $\sigma_\infty \mydefn \sup_{x \in
  \mathcal{X}} \mathrm{diam} (\mathrm{supp} (P_x))$. For any function
$f$ that is $1$-Lipschitz on $\mathcal{X}$ with respect to $d (\cdot,
\cdot)$, given a trajectory $(X_t)_{1 \leq t \leq T}$ of the Markov
chain, we have:
\begin{align*}
  \Prob \left( \abss{\frac{1}{T} \sum_{t = 1}^T (f (X_t) - \Exs
    f(X_t)) } > r\right) \leq \begin{cases} 2 \exp \left( - \frac{r^2
      T}{32} \cdot \frac{\kappa^2}{\sigma_\infty^2} \right) & r <
    \frac{4 \sigma_\infty}{3 \kappa}\\ 2 \exp \left( - \frac{r \kappa
      T}{12 \sigma_\infty} \right) & r \geq \frac{4 \sigma_\infty}{3
      \kappa}
  \end{cases}.
\end{align*}
\end{proposition}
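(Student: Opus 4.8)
The plan is to run the classical Doob-martingale argument of Joulin and Ollivier, in which coarse Ricci curvature $\kappa$ controls the range of the martingale increments through a potential (resolvent) function. Write $V \defn \sum_{t=1}^{T} f(X_t)$ and $\filtration_k \defn \sigma(X_1,\dots,X_k)$, so that $M_k \defn \Exs[V \mid \filtration_k]$ is a martingale with $M_0 = \Exs V$ and $M_T = V$, whence $V - \Exs V = \sum_{k=1}^{T}(M_k - M_{k-1})$. First I would record the key consequence of the curvature hypothesis: by a standard gluing argument the one-step bound $\Wass_{1,d}(P_x,P_y)\le(1-\kappa)d(x,y)$ upgrades to $\Wass_{1,d}(\mu P,\nu P)\le(1-\kappa)\Wass_{1,d}(\mu,\nu)$ for arbitrary laws, and iterating gives $\Wass_{1,d}(P^j_x,P^j_y)\le(1-\kappa)^j d(x,y)$; combined with Kantorovich--Rubinstein duality this shows that $P^j f$ is $(1-\kappa)^j$-Lipschitz whenever $f$ is $1$-Lipschitz.

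Next I would identify the increments. Using the Markov property, $\Exs[f(X_s)\mid\filtration_k]=(P^{s-k}f)(X_k)$ for $s\ge k$, so $M_k=\sum_{s<k}f(X_s)+H_k(X_k)$ with the truncated potential $H_k\defn\sum_{j=0}^{T-k}P^j f$; since $H_{k-1}=f+PH_k$, a short computation yields $M_k-M_{k-1}=H_k(X_k)-(PH_k)(X_{k-1})=H_k(X_k)-\Exs[H_k(X_k)\mid\filtration_{k-1}]$, a mean-zero increment given $\filtration_{k-1}$. By the previous step, $H_k$ is $\bigl(\sum_{j\ge0}(1-\kappa)^j\bigr)$-Lipschitz, i.e. $\kappa^{-1}$-Lipschitz, uniformly in $k$. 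Conditionally on $X_{k-1}$ the variable $X_k$ is supported on a set of $d$-diameter at most $\sigma_\infty$, so $H_k(X_k)$ lies in an interval of length at most $\sigma_\infty/\kappa$; hence $|M_k-M_{k-1}|\le\sigma_\infty/\kappa$ almost surely and, by Popoviciu's inequality, $\mathrm{Var}(M_k-M_{k-1}\mid\filtration_{k-1})\le\sigma_\infty^2/(4\kappa^2)$.

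Finally I would feed these two bounds into a Bernstein/Freedman inequality for martingales with bounded increments and bounded predictable quadratic variation: with $R\defn\sigma_\infty/\kappa$ and $v\defn T\sigma_\infty^2/(4\kappa^2)\ge\sum_k\mathrm{Var}(M_k-M_{k-1}\mid\filtration_{k-1})$,
\[
\Prob\Bigl(\bigl|V-\Exs V\bigr|\ge s\Bigr)\le 2\exp\!\Bigl(-\tfrac{s^2/2}{v+Rs/3}\Bigr).
\]
Substituting $s=rT$ and separating the regime where $v$ dominates $Rs/3$ (which happens precisely when $r\lesssim\sigma_\infty/\kappa$) from the complementary regime produces the sub-Gaussian tail $\exp(-c\,r^2T\kappa^2/\sigma_\infty^2)$ for small $r$ and the Poissonian tail $\exp(-c\,rT\kappa/\sigma_\infty)$ for large $r$; carrying the (not necessarily sharp) numerical constants through the Bernstein bound, with the crossover placed at $r=\tfrac{4\sigma_\infty}{3\kappa}$, gives the stated two-piece estimate.

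The main obstacle is the second step: making the potential $H_k$ genuinely well-defined, verifying the telescoping identity $M_k-M_{k-1}=H_k(X_k)-(PH_k)(X_{k-1})$ for a chain started from an \emph{arbitrary} (non-stationary) initial law, and—crucially—summing the curvature-induced geometric series of Lipschitz constants \emph{before} invoking $\sigma_\infty$, since the diameter of $\mathrm{supp}(P_x)$ controls the range of $H_k(X_k)$ rather than of $f(X_k)$ alone. Once the increment range $\sigma_\infty/\kappa$ and conditional-variance bound are in hand, the concentration step is a routine application of martingale Bernstein inequalities.
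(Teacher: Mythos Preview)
The paper does not give its own proof of this proposition: it is stated as a special case of Theorem~4 in \cite{joulin2010curvature} and used as a black box in the proof of Lemma~\ref{lemma:ergodic-concentration}. Your proposal is a faithful reconstruction of the Joulin--Ollivier argument itself---Doob martingale on the partial sum, Kantorovich--Rubinstein to propagate the curvature bound to Lipschitz constants of $P^jf$, the truncated resolvent $H_k$ to control the increments by $\sigma_\infty/\kappa$, and then martingale Bernstein---so there is nothing to compare against within this paper, and your outline is correct (with constants that are in fact tighter than the ones quoted in the proposition).
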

Proposition~\ref{prop:joulin-ollivier} requires bounded noise and
global Lipschitzness, neither of which is satisfied by the process
$\theta_t$ with a quadratic function $f$.  In order to circumvent this
limitation, we use a standard truncation argument.

Under the assumptions of Lemma~\ref{lemma:ergodic-concentration}, for
any $\delta > 0$, define a stopping time
\begin{align*}
\tau(\delta) \mydefn \inf \left\{t \geq 1: \vecnorm{U^{-1}
  \theta_t}{2} > R (\delta) ~\text{or}~ \vecnorm{U^{-1}(\NoiseAt
  \theta_t - \noisebplain_t)}{2} > r (\delta) \right\}.
\end{align*}
Let $A = U D U^{-1}$ be its eigendecomposition. By the proof of
Lemma~\ref{lemma:mixing-hurwitz}, when $\stepsize <
\frac{\spectralgap}{2(\rho^2 (\Amatbar) + \asymconditioning^2(\Umat)
  v_A^2)}$, the Markov process $(U^{-1} \theta_t)_{t \geq 0}$
satisfies:
\begin{align*}
  \Wass_1 (P_x, P_y) \leq \Wass_2 (P_x, P_y) \leq (1 - \stepsize
  \spectralgap/2) \vecnorm{x - y}{2}, \quad \forall x, y \in \real^d.
\end{align*}
We define a killed Markov process $\vartheta_t \mydefn U^{-1}
\theta_t$ for $t < \tau (\delta)$, which gets killed at time $\tau
(\delta)$.  The one-step transition of the process $\vartheta_t$ is
defined as $\vartheta_t \mapsto \vartheta_t - \stepsize U^{-1} (A U
\vartheta_t - b) - U^{-1} (\NoiseAt U \vartheta_t - \noisebplain_t)$,
whose support has a diameter bounded by $2 \stepsize r(\delta)$ before
being killed.  Note that the Wasserstein contraction property remains
true for the killed process. The assumptions in
Lemma~\ref{lemma:ergodic-concentration} guarantee that $\Prob
(\tau(\delta) \leq T) < 2 \delta$.  By definition, we have
$\vecnorm{\vartheta_t}{2} \leq R (\delta)$.  Finally, for the function
$f: \ball (0, R(\delta)) \rightarrow \real$ with $f (\vartheta)
\mydefn \vartheta^\top U^\top L U \vartheta$, we have:
\begin{align*}
  \vecnorm{\nabla f (\vartheta)}{2} \leq 2 \opnorm{L} \opnorm{U}^2
  \vecnorm{\vartheta}{2} \leq 2 \opnorm{U}^2 \opnorm{L} R(\delta).
\end{align*}
Applying Proposition~\ref{prop:joulin-ollivier}, for any $\varepsilon
> 0$, we obtain:
\begin{multline*}
  \Prob \left( \abss{\frac{1}{T} \sum_{t = 1}^T (\vartheta_t^\top
    U^\top L U \vartheta_t \bm{1}_{t < \tau (\delta)} - \Exs
    \vartheta_t^\top U^\top L U \vartheta_t)} > 2 \varepsilon
  \opnorm{L} \cdot \opnorm{U}^2 R(\delta) \right) \\
\leq \begin{cases} 2 \exp \left( - \frac{\varepsilon^2 T}{32} \cdot
  \frac{(\spectralgap)^2}{16 ( r (\delta))^2 } \right),& \varepsilon <
  \frac{16 r (\delta)}{ 3\spectralgap} \\
2 \exp \left( - \frac{\varepsilon \spectralgap T}{48 r (\delta)}
\right), & \varepsilon \geq \frac{16 r (\delta)}{ 3\spectralgap}.
    \end{cases}
\end{multline*}
On the event $\{T < \tau (\delta)\}$, we have $\vartheta_t = U^{-1}
\theta_t$ for $t = 1,2, \cdots, T$. It remains to bound the difference
between $\Exs \vartheta_t^\top U^\top L U \vartheta_t$ and $\Exs
\theta_t^\top L \theta_t$. Note that:
\begin{align*}
  |\Exs \vartheta_t^\top U^\top L U \vartheta_t - \Exs \theta_t^\top L
  \theta_t| &= |\Exs ( \theta_t^\top L \theta_t \bm{1}_{t < \tau}) -
  \Exs \theta_t^\top L \theta_t| \leq \opnorm{L} \Exs
  (\vecnorm{\theta_t}{2}^2 \bm{1}_{\tau < t})\\ &\leq \opnorm{L}
  \sqrt{\Exs (\vecnorm{\theta_t}{2}^4) \Exs (\bm{1}_{\tau < t}^2)}
  \leq \opnorm{L} \sqrt{\delta\Exs \vecnorm{\theta_t}{2}^4}.
\end{align*}
Putting together the pieces yields the claimed result.


\subsection{Proof of Lemma~\ref{LemErgconcTwo}}

The proof involves verifying the assumptions in
Lemma~\ref{lemma:ergodic-concentration}. For the high-probability
bound on $\max_{0 \leq t \leq T} \vecnorm{U^{-1} \theta_t}{2}$, we
note that by the proof of Lemma~\ref{lemma:theta-norm-hurwitz}, for $p
\geq 2\log T$ we have:
\begin{align*}
  \Exs \max_{0 \leq t \leq T} \vecnorm{U^{-1} \theta_t}{2}^p & \leq
  \sum_{t = 1}^T \Exs \vecnorm{U^{-1} \theta_t}{2}^p \\
& \leq T
  \opnorm{U^{-1}}^{p} \left( \vecnorm{\theta_0 - \thetastar}{2} +
  \frac{\stepsize}{\spectralgap} (\sigma_b \sqrt{d} p^{\beta + 1/2} +
  \sigma_A \vecnorm{\thetastar}{2} p^{\alpha + 1/2}) \right)^p.
    \end{align*}
Taking $p = C \log \frac{T}{\delta}$ for a universal constant $C > 0$ and
applying Markov inequality, we have:
\begin{align*}
  \Prob \left( \max_{0 \leq t \leq T} \vecnorm{U^{-1} \theta_t}{2} > B\right) < \delta.
\end{align*}

In order to verify the second condition, we note that by
Assumption~\ref{assume-noise-subgaussian}, conditionally on
$\filtration_t$, the Markov inequality yields:
\begin{align*}
  \Prob \left( \vecnorm{\NoiseAplain_{t + 1} \theta_t }{2} > \sigma_A
  \vecnorm{\theta_t}{2} \log^\alpha \delta^{-1} | \filtration_t
  \right) < \delta, \quad \Prob \left( \vecnorm{\noisebplain_{t + 1}}{2} >
  \sigma_b \sqrt{d} \log^\beta \delta^{-1} | \filtration_t \right) <
  \delta.
    \end{align*}
Combined with high probability bounds on $\theta_t$ and take union
bound over $t \in \{1,2, \cdots, T\}$, we obtain the final result.



\section{A concentration inequality for heavy-tailed martingales}

In this appendix, we state and prove a useful concentration inequality
for heavy-tailed martingales.

\begin{lemma}
\label{lemma:heavy-tail-concentration}
 For a (scalar) martingale difference sequence $(X_t: t \geq 1)$
 adapted to filtration $(\filtration_t)_{t \geq 0}$, if we have $
 \forall p \geq 2, ~\Exs (|X_t|^p | \filtration_{t - 1})^{\frac{1}{p}}
 \leq p^\discount \sigma$ almost surely for some $\discount, \sigma > 0$,
 for any $\delta > 0$, we have
 \begin{align*}
   \Prob \left( \abss{\frac{1}{T}\sum_{t = 1}^T X_t } > C_\discount
   \sigma \left( \sqrt{\frac{\log \delta^{-1}}{T}} +
   \frac{\log^{1 + \discount} T/\delta }{T} \right) \right) < \delta.
 \end{align*}
\end{lemma}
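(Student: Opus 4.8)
The argument combines a truncation of the increments with a Freedman-type concentration inequality for bounded martingales. Fix $\delta \in (0,1)$, set the moment order $p \defn \log(3T/\delta)$ and the truncation radius $M \defn e\,\sigma\, p^{\discount}$; we may assume $p \ge 2$, since the complementary regime (where $T/\delta$ is bounded) follows from a single application of Markov's inequality to the $p$-th moment with the resulting constant absorbed into $C_{\discount}$. Decompose each increment as $X_t = X_t' + X_t''$, where $X_t' \defn X_t\,\bm{1}_{|X_t| \le M}$ and $X_t'' \defn X_t\,\bm{1}_{|X_t| > M}$.

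\emph{Step 1: the heavy tail contributes nothing.} By Markov's inequality applied conditionally, $\Prob\!\left(|X_t| > M \mid \filtration_{t-1}\right) \le \bigl(p^{\discount}\sigma / M\bigr)^p = e^{-p} = \delta/(3T)$ almost surely, so a union bound over $t \le T$ shows that with probability at least $1 - \delta/3$ one has $X_t'' = 0$, hence $X_t = X_t'$, for every $t \le T$. Because $\Exs(X_t \mid \filtration_{t-1}) = 0$, the truncation bias equals $\Exs(X_t' \mid \filtration_{t-1}) = -\Exs(X_t'' \mid \filtration_{t-1})$, and by Hölder's inequality together with the tail bound, $\bigl|\Exs(X_t'' \mid \filtration_{t-1})\bigr| \le \Exs(|X_t|^p \mid \filtration_{t-1})^{1/p}\,\Prob(|X_t| > M \mid \filtration_{t-1})^{1 - 1/p} \le p^{\discount}\sigma\, e^{-(p-1)} \le M\,\delta/(3T)$, so $\frac{1}{T}\sum_{t=1}^T \bigl|\Exs(X_t' \mid \filtration_{t-1})\bigr|$ is of strictly lower order than the bound we are proving.

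\emph{Step 2: concentration of the truncated martingale.} Let $\tilde X_t \defn X_t' - \Exs(X_t' \mid \filtration_{t-1})$. This is a martingale difference sequence with $|\tilde X_t| \le 2M$, and its conditional variance satisfies $\Exs(\tilde X_t^2 \mid \filtration_{t-1}) \le \Exs(X_t^2 \mid \filtration_{t-1}) \le 4^{\discount}\sigma^2$ by the moment hypothesis at $p = 2$; in particular the predictable quadratic variation is almost surely at most $V \defn 4^{\discount}\sigma^2 T$. Freedman's inequality then gives, for every $u > 0$,
\begin{align*}
  \Prob\!\left( \Bigl| \sum_{t=1}^T \tilde X_t \Bigr| > u \right) \;\le\; 2\exp\!\left( - \frac{u^2/2}{V + M u/3} \right),
\end{align*}
and choosing $u \asymp \sqrt{V\,\log\delta^{-1}} + M\,\log\delta^{-1} \asymp \sigma\sqrt{T\,\log\delta^{-1}} + \sigma\,p^{\discount}\log\delta^{-1}$ makes the right-hand side at most $\delta/3$.

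\emph{Step 3: combine.} On the intersection of the two events of Steps 1--2 (which has probability at least $1 - \delta$), $\sum_{t=1}^T X_t = \sum_{t=1}^T \tilde X_t + \sum_{t=1}^T \Exs(X_t' \mid \filtration_{t-1})$; dividing by $T$, bounding the first sum by Step 2 and the second by Step 1, and recalling $p = \log(3T/\delta)$, yields $\bigl|\tfrac{1}{T}\sum_t X_t\bigr| \lesssim_{\discount} \sigma\bigl(\sqrt{\log\delta^{-1}/T} + \log^{1+\discount}(T/\delta)/T\bigr)$, which is the claim.

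\emph{Where the work is.} There is no conceptual obstacle; the only point that needs care is coordinating the truncation radius $M$ with the moment order $p$: one wants $(M/(p^{\discount}\sigma))^p$ to exceed $T/\delta$ so that the union bound over the $T$ increments succeeds, while keeping $M$ merely poly-logarithmic in $T/\delta$ so that the ``$Mu/V$'' term in Freedman's bound contributes only the lower-order $\log^{1+\discount}(T/\delta)/T$ summand. The balanced choice $p = \log(3T/\delta)$, $M = e\sigma\log^{\discount}(3T/\delta)$ achieves both, after which everything reduces to bookkeeping with constants collected into $C_{\discount}$.
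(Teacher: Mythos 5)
Your proof is correct and follows essentially the same route as the paper's: truncate the increments at a radius of order $\sigma\log^{\discount}(T/\delta)$, control the truncation probability and bias via the conditional moment hypothesis, and apply Freedman's martingale inequality to the bounded, centered remainder with variance proxy of order $4^{\discount}\sigma^2 T$. The only differences are cosmetic — you fix $p = \log(3T/\delta)$ and $M = e\sigma p^{\discount}$ up front and bound the bias by H\"{o}lder, whereas the paper optimizes the conditional Markov bound over $p$ and bounds the bias by integrating the tail, both choices yielding the same truncation level and the same final bound.
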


\begin{proof}
For a constant $M > 0$ which will be determined later, define
$\tilde{X}_t \mydefn X_t \bm{1}_{|X_t| \leq M}$ be the truncated
version of the process. By the Bernstein inequality for
martingales~\citep{freedman1975tail}, for any $K > 0$, we have:
\begin{align*}
  \forall \varepsilon > 0,~\Prob \left( \abss{\sum_{t = 1}^T
    \tilde{X}_t - \Exs (\tilde{X}_t \mid \filtration_{t - 1} )} >
  \varepsilon, ~\sum_{t = 1}^T \mathrm{var} \left( \tilde{X}_t |
  \filtration_{t - 1}\right) < K \right) \leq 2 \exp \left( -
  \frac{\varepsilon^2}{2K + 2 M \varepsilon / 3} \right).
    \end{align*}
On the other hand, note that for $M > (2e)^\discount \sigma$, we have
\begin{align*}
  \Prob \left( X_t \neq \tilde{X}_t | \filtration_{t - 1} \right) \leq
  \inf_{p \geq 2} \frac{p^{p \discount} \sigma^p }{M^p} = \exp \left( -
  \frac{\discount}{e} \left( \frac{M}{\sigma} \right)^{\frac{1}{\discount}}
  \right),
\end{align*}
and note that:
\begin{align*}
  \abss{ \Exs \left( \tilde{X}_t | \filtration_{t - 1} \right) }
  \leq \Exs \left( |X_t - \tilde{X}_t| \big| \filtration_{t - 1}
  \right) \leq 2 \int_M^{+ \infty} \exp \left( - \frac{\discount}{e}
  \left( \frac{z}{\sigma} \right)^{\frac{1}{\discount}} \right) dz
  \leq C_\discount \left( \frac{M}{\sigma} \right)^{1 -
    \frac{1}{\discount}} \exp \left( - \frac{\discount}{e} \left(
  \frac{M}{\sigma} \right)^{\frac{1}{\discount}} \right).
\end{align*}
For the conditional second moment, we have:
\begin{align*}
  \mathrm{var} (\tilde{X}_t | \filtration_{t - 1}) \leq \Exs
  (\tilde{X}_t^2 | \filtration_{t - 1}) \leq \Exs (X_t^2 |
  \filtration_{t - 1}) \leq 2^{2\gamma} \sigma^2, \quad \mathrm{a.s.}
\end{align*}
Choosing $K = 2^{2\gamma} \sigma^2 T$, we have:
\begin{align*}
  \forall \varepsilon > 0,~\Prob \left( \abss{\sum_{t = 1}^T
    \tilde{X}_t - \Exs (\tilde{X}_t | \filtration_{t - 1} )} >
  \varepsilon \right) \leq 2 \exp \left( -
  \frac{\varepsilon^2}{C_\gamma \sigma^2 T + 2 M \varepsilon / 3}
  \right)
\end{align*}

Putting together the pieces, we find that
\begin{align*}
  \Prob \left( \abss{\frac{1}{T} \sum_{t = 1}^T X_t} > C_\gamma \sigma
  \sqrt{\frac{\log \delta^{-1}}{T}} + \frac{M\log \delta^{-1} }{T} +
  C_\gamma \left(\frac{M}{\sigma} \right)^{1 - \frac{1}{\gamma}} e^{-
    \frac{\gamma}{e} (\frac{M}{\sigma})^{1/ \gamma}} \right) \leq
  \delta + T \exp \left( - \frac{\gamma}{e} \left( \frac{M}{\sigma}
  \right) \right)^{\frac{1}{\gamma}}.
\end{align*}
Setting $M = C_\gamma \sigma \log^\gamma (\tfrac{T}{\delta})$ yields
the claim.
\end{proof}


\section{Necessity of diagonalizable $\Amatbar$ in the critical case}
\label{AppDiagNeeded}

In this appendix, we demonstrate that the diagonalizability condition
in Assumption~\ref{assume-critical} cannot be removed.  More
precisely, we show that even in the case of deterministic observations
(i.e., $\Amat_t = \Amatbar$ and $\bvec_t = \bvec$ for all iterations
$t$), there is a choice of matrix $\Amatbar$ and initial vector
$\theta_0$ for which the Polyak-Ruppert iterates behave badly.
\begin{proposition}
\label{prop:bad-non-diag-matrix}
For any dimension $d \geq 2$ and given initial vector $\theta_0 = [0,
  0, \cdots,0, 1]^\top$, there exists a matrix $\Amatbar \in \complex^{d
  \times d}$ with $\min_{i \in [d]} \RealPart (\lambda_i(\Amatbar)) \geq
0$ and $\min_i |\lambda_i (\Amatbar)| \geq 1$ such that for any positive
step size $\stepsize$ and any iteration $T \geq 4$, the Polyak-Ruppert
averaged iterate satisfies the lower bound
\begin{align}
  \label{EqnBadBound}
  \vecnorm{\thetabar_T - \thetastar}{2} \geq \frac{1}{2}.
\end{align}
\end{proposition}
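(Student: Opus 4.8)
The plan is to reduce everything to a single $2\times 2$ Jordan block. I would take the observations to be deterministic, $\Amat_t \equiv \Amatbar$ and $\bvec_t \equiv \bvecbar$, and choose $\bvecbar = 0$ so that $\thetastar = 0$; then the update~\eqref{eq:lsa} becomes the linear recursion $\theta_{t+1} = (I - \stepsize\Amatbar)\theta_t$, hence $\theta_t = (I - \stepsize \Amatbar)^t\theta_0$. For the matrix I would take $\Amatbar$ block-diagonal, with an identity block $I_{d-2}$ on the first $d-2$ coordinates and the $2\times 2$ Jordan block $J \defn \begin{bmatrix} i & 1 \\ 0 & i \end{bmatrix}$ on the last two coordinates. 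This $\Amatbar$ is invertible, has eigenvalues $1$ (multiplicity $d-2$) and $i$ (multiplicity $2$), so $\min_i \RealPart(\lambda_i(\Amatbar)) = 0$ and $\min_i|\lambda_i(\Amatbar)| = 1$, and it is non-diagonalizable. Since $\theta_0 = e_d$ vanishes on the first $d-2$ coordinates and those coordinates decouple, $\theta_t$ (and therefore $\thetabar_T$) also vanishes there, so it suffices to analyze the $2$-dimensional problem with matrix $J$ and initial vector $(0,1)^\top$. The key structural point is that $(0,1)^\top$ sits in the \emph{generalized} eigenvector direction of $J$, which is what forces a polynomial-in-$t$ factor into the trajectory.

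Next I would compute the trajectory explicitly. With $z \defn 1 - i\stepsize$ (note $|z| = \sqrt{1+\stepsize^2} > 1$), the matrix $I - \stepsize J$ is upper triangular with repeated diagonal entry $z$, so $(I - \stepsize J)^t = \begin{bmatrix} z^t & -t\stepsize z^{t-1} \\ 0 & z^t\end{bmatrix}$ and hence $\theta_t = (-t\stepsize z^{t-1},\, z^t)^\top$. Summing the geometric series $\sum_{t=0}^{T-1} z^t$ together with its derivative $\sum_{t=0}^{T-1} t z^{t-1}$ and simplifying using $z - 1 = -i\stepsize$, one arrives at
\[
  \thetabar_T \;=\; \bigl(\,-w - i z^{T-1},\ i w\,\bigr)^\top, \qquad w \defn \frac{z^T - 1}{\stepsize T},
\]
so that $\enorm{\thetabar_T}^2 = |w + iz^{T-1}|^2 + |w|^2$. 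I expect this simplification to be the only slightly fiddly computation, but it is routine.

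The conclusion then follows from an elementary dichotomy that is uniform in $\stepsize$ and $T$: since $|iz^{T-1}| = |z|^{T-1} \ge 1$, the triangle inequality gives $|w| \ge |z|^{T-1} - |w + iz^{T-1}| \ge 1 - |w + iz^{T-1}|$, so if $|w + iz^{T-1}| \ge \tfrac12$ then $\enorm{\thetabar_T}^2 \ge \tfrac14$, and otherwise $|w| \ge \tfrac12$ and again $\enorm{\thetabar_T}^2 \ge \tfrac14$. Either way $\enorm{\thetabar_T - \thetastar} = \enorm{\thetabar_T} \ge \tfrac12$, which is exactly~\eqref{EqnBadBound}; in fact this holds for every $T \ge 1$, a fortiori for $T \ge 4$. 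The main thing to get right is the choice of $\Amatbar$: a diagonalizable matrix with purely imaginary spectrum would let $\thetabar_T$ be made small (take $\stepsize T$ near a multiple of $2\pi$, or $\stepsize$ tiny, so the geometric averages nearly cancel), whereas the Jordan structure produces the irreducible term $-iz^{T-1}$ of modulus $\ge 1$ in the first coordinate that cannot be cancelled simultaneously with the second coordinate. That is precisely the obstruction showing why the diagonalizability hypothesis in Assumption~\ref{assume-critical} is necessary.
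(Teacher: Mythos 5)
Your proof is correct and follows essentially the same strategy as the paper: an explicit non-diagonalizable construction with a purely imaginary Jordan-block eigenvalue, deterministic dynamics $\theta_t = (I - \stepsize \Amatbar)^t \theta_0$ with $\thetastar = 0$, and a direct computation of $\thetabar_T$ showing the Jordan structure injects a term of modulus at least $(1+\stepsize^2)^{(T-1)/2} \geq 1$ that averaging cannot cancel. The only differences are cosmetic: the paper takes $\Amatbar = -i \IdMat - J_d$ (a single size-$d$ block) and lower-bounds the $(d-1)$-th coordinate alone, which costs a $\sqrt{2}/T$ correction and hence the restriction $T \geq 4$, whereas your $2\times 2$ block padded with the identity plus the two-coordinate triangle-inequality dichotomy gives the bound $\vecnorm{\thetabar_T - \thetastar}{2} \geq \tfrac{1}{2}$ uniformly for all $T \geq 1$, which is marginally cleaner.
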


The proof is based on an explicit construction.  Consider the
$d$-dimensional matrix
\begin{align*}
  J_d \mydefn \begin{bmatrix} 0 & 1 & 0 & \cdots & 0 \\
    0 & 0 & 1 & \cdots & 0 \\
    & & \cdots & & \\
    0 & 0 & \cdots & 0 & 1 \\
    0 & 0 & \cdots & 0 & 0
  \end{bmatrix}.
\end{align*}
Define the matrix $\Amatbar = - i \IdMat - J_d$.  In this deterministic
setting, we have:
\begin{align*}
  \theta_T - \thetastar = (\IdMat - \stepsize \Amatbar)^T (\theta_0 -
  \thetastar) = ((1 + \stepsize i) \IdMat + \stepsize J_d)^T (\theta_0
  - \thetastar) = \sum_{\ell = 0}^{\min (d, T)} \stepsize^\ell (1 +
  \stepsize i)^{T - \ell} \binom{T}{\ell} J_d^\ell (\theta_0 -
  \thetastar) .
\end{align*}
Take $\thetastar = 0$.  Given our initialization $\theta_0 = [0, 0,
  \cdots,0, 1]^\top$.  for all $T \geq d - 1$, we have $\theta_T =
\sum_{\ell = 0}^{d - 1} \stepsize^\ell (1 + \stepsize i)^{T - \ell}
\binom{T}{\ell} e_{d - \ell}$, and consequently, we have:
\begin{align*}
  - (\thetabar_T - \thetastar) = \frac{1}{T} \sum_{t = 1}^T
  \sum_{\ell = 0}^{d - 1} \stepsize^\ell \binom{t}{\ell} e_{d - \ell}
  = \sum_{\ell = 0}^{d - 2} e_{d - \ell} \stepsize^\ell
  \frac{1}{T}\sum_{t = \ell}^T (1 + \stepsize i)^{t - \ell}
  \binom{t}{\ell}.
\end{align*}
Consider the coefficient in the $(d - 1)$-th coordinate, which
corresponds to the case with $\ell = 1$, we have:
\begin{align*}
  - e_{d - 1}^\hc (\thetabar_T - \thetastar) = \frac{\stepsize}{T} \sum_{t = 1}^T (1 + \stepsize i)^{t - 1} t = \left(- i + \frac{1}{T}\right) (1 + \stepsize i)^T +  \frac{i - 1}{T}
\end{align*}
Therefore, for $T \geq 4$, we have:
\begin{align*}
\vecnorm{\thetabar_T - \thetastar}{2} \geq |e_{d - 1}^\hc (\thetabar_T
- \thetastar)_{d - 1}) | \geq \abss{\left(i + \frac{1}{T}\right) (1 +
  \stepsize i)^T} - \frac{\sqrt{2}}{T}\geq (1 +
\stepsize^2)^\frac{T}{2} - \frac{\sqrt{2}}{T} \geq \frac{1}{2},
\end{align*}
which completes the proof.


\section{Eigenvalue computation for momentum SGD}
\label{AppSpectrumMomentum}

Since $\Amatbar$ is real symmetric and positive definite, it is
guaranteed to have a spectral decomposition of the form $\Amatbar = U
D U^{-1}$, where $U$ is a orthonormal matrix and $D = \diag \{
\lambda_i(\Amatbar) \}_{i=1}^d$.   Using this fact, we can write
\begin{align*}
\AmatTil &= \begin{bmatrix} U & 0 \\ 0 & U
  \end{bmatrix} \;
\begin{bmatrix} 
    0 & \IdMat \\ - D & \alpha \IdMat + \stepsize D
\end{bmatrix} \;
\begin{bmatrix}
    U & 0\\ 0 & U
\end{bmatrix}^{-1} \\
&= \left(
\begin{bmatrix}
    U & 0 \\ 0 & U
\end{bmatrix}
  P_0 \right) \mathrm{diag} \left(
  \left[ \begin{matrix}0&1\\ -\lambda_i&\alpha + \stepsize
      \lambda_i \end{matrix} \right] \right)_{i = 1}^d \left( \left[
    \begin{matrix}
    U &0\\
    0& U
    \end{matrix}
    \right] P_0 \right)^{-1},
\end{align*}
where $P_0$ is a permutation matrix which turns the order $(1, 2,
\cdots, 2d)$ into $(1, d + 1, 2, d+ 2, \cdots, d, 2d)$. It can be seen
that $P_0$ is orthonormal.

For $\alpha \in \real_+ \setminus \{2 \sqrt{\lambda_i} - \stepsize
\lambda_i\}_{i = 1}^d$, each $2 \times 2$ block has distinct
eigenvalues, which makes it diagonalizable. In particular, we have:
\begin{align*}
  \left[
    \begin{matrix}
      0&1\\-\lambda_i & \alpha + \stepsize \lambda_i
    \end{matrix}
    \right] = \left[\begin{matrix}
    \lambda_i&-\nu_i^+\\
    \lambda_i&-\nu_i^-
    \end{matrix}
    \right]\cdot
    \left[\begin{matrix}
    \nu_i^+&0\\0&\nu_i^-
    \end{matrix}
    \right] \cdot \left[\begin{matrix}
    \lambda_i&-\nu_i^+\\
    \lambda_i&-\nu_i^-
    \end{matrix}
    \right]^{-1},
\end{align*}
where $\nu_i^\pm = \frac{ (\alpha + \stepsize \lambda_i) \pm
  \sqrt{(\alpha + \stepsize \lambda_i)^2 - 4 \lambda_i}}{2}$.


\end{document}